\documentclass[11pt,letterpaper,showkeys]{article} 
\usepackage{graphicx}
\usepackage{amssymb}
\usepackage{amsmath}
\usepackage{mathrsfs}
\usepackage{subfigure}
\marginparwidth 0pt 
\oddsidemargin 0pt 
\evensidemargin 0pt 
\marginparsep 0pt

\topmargin -.5in 
\hoffset -0.1in 
\textwidth 6.8in
\textheight 8.7 in

\newtheorem{theorem}{Theorem}[section] 
\newtheorem{lemma}[theorem]{Lemma} 
 
\newtheorem{proposition}[theorem]{Proposition} 
\newtheorem{remark}[theorem]{Remark}
\newtheorem{assumption}{Assumption} 

\newcommand{\beq}{\begin{equation}} 
\newcommand{\eeq}{\end{equation}} 
\newcommand{\beqa}{\begin{eqnarray}} 
\newcommand{\eeqa}{\end{eqnarray}} 
\newcommand{\beqas}{\begin{eqnarray*}} 
\newcommand{\eeqas}{\end{eqnarray*}} 
\newcommand{\ba}{\begin{array}} 
\newcommand{\ea}{\end{array}} 
\newcommand{\bi}{\begin{itemize}} 
\newcommand{\ei}{\end{itemize}} 
\newcommand{\gap}{\hspace*{2em}} 
 
\newcommand{\nn}{\nonumber}

\setcounter{page}{1} 
\def\eqnok#1{(\ref{#1})}

\def\vgap{\vspace*{.1in}}
\def\QED{\ifhmode\unskip\nobreak\fi\ifmmode\ifinner\else\hskip5pt\fi\fi
  \hbox{\hskip5pt\vrule width5pt height5pt depth1.5pt\hskip1pt}}
\def\Arg{{\rm Arg}}  
\def\avg{{\rm avg}}  

\def\bI{{\bar I}} 
    
\def\bK{{\bar K}}
\def\bL{{\bar L}}
\def\bomega{{\bar \Omega}}

\def\bsigma{{\bf \Sigma}} 
\def\bsigmat{{\bf \Sigma^{t}}}

\def\cA{{\cal A}}
\def\cJ{{\cal J}}
\def\cN{{\cal N}}
\def\cO{{\cal O}}
\def\cP{{\cal P}}  
\def\cS{{\cal S}}
\def\cT{{\cal T}}  
 
\def\cX{{\cal X}} 
\def\cY{{\cal Y}}

\def\D{{\mathscr D}}

\def\eps{{\epsilon}}
\def\feas{{\rm feas}}

\def\sgn{{\rm sgn}}
\def\tD{{\widetilde{\D}}}

\def\tx{{\tilde x}}
\def\tX{{\tilde X}}
\def\ty{{\tilde y}}
\def\vrho{{\varrho}}
\def\vtheta{{\vartheta}}

%%%%%%%%%%%%%%%%%%%%%%%%

\title{Sparse Approximation via Penalty Decomposition Methods
%Penalty Decomposition Methods for $l_0$ Minimization
\thanks{This work was supported in part by NSERC Discovery Grant.}
} 

\author{
	Zhaosong Lu% 
	\thanks{
	Department of Mathematics, Simon Fraser University, Burnaby, BC, 
	V5A 1S6, Canada. (email: {\tt zhaosong@sfu.ca}).} 
	\and
	Yong Zhang 
	\thanks{Department of Mathematics, 
	Simon Fraser University, Burnaby, BC, V5A 1S6, 
    Canada. (email: {\tt yza30@sfu.ca}).}
}

\date{February 19, 2012}

\begin{document}

\maketitle

\begin{abstract}

In this paper we consider sparse approximation problems, that is, general 
$l_0$ minimization problems with the $l_0$-``norm'' of a vector being a part of constraints 
or objective function. 
% In this paper we consider general $l_0$ minimization problems, that is, the problems 
% with $l_0$ appearing in either objective function or constraint. 
In particular, we first study the first-order optimality conditions for these problems. 
We then propose penalty decomposition (PD) methods for solving them in which
a sequence of penalty subproblems are solved by a block coordinate descent (BCD) method.  
Under some suitable assumptions, we establish that any accumulation point of the 
sequence generated by the PD methods satisfies the first-order optimality conditions 
of the problems. Furthermore, for the problems in which the $l_0$ part is the only 
nonconvex part, we show that such an accumulation point is a local minimizer of the 
problems. In addition, we show that any accumulation point of the sequence generated 
by the BCD method is a saddle point of the penalty subproblem. Moreover, for the problems 
in which the $l_0$ part is the only nonconvex part, we establish that such an accumulation point 
is a local minimizer of the penalty subproblem. Finally, we test the performance of our PD methods 
by applying them to sparse logistic regression, sparse inverse covariance selection, and 
compressed sensing problems. The computational results demonstrate that our methods generally 
outperform the existing methods in terms of solution quality and/or speed.         

\vskip14pt

\noindent {\bf Key words:} $l_0$ minimization, penalty decomposition methods, block coordinate 
descent method, compressed sensing, sparse logistic regression, sparse inverse covariance selection
 
\vskip14pt

%\noindent
%{\bf AMS 2000 subject classification:}
 
\end{abstract}

\section{Introduction} \label{introduction}

Nowadays, there are numerous applications in which sparse solutions are concerned. For 
example, in compressed sensing, a large sparse signal is decoded by using a relatively 
small number of linear measurements, which can be formulated as finding a sparse solution 
to a system of linear equalities and/or inequalities. The similar ideas have also been 
widely used in linear regression. Recently, sparse inverse covariance selection 
becomes an important tool in discovering the conditional independence in graphical models. 
One popular approach for sparse inverse covariance selection is to find an approximate sparse 
inverse covariance while maximizing the log-likelihood (see, for example, \cite{De72}). Similarly, 
sparse logistic regression has been proposed as a promising method for feature selection in 
classification problems in which a sparse solution is sought to minimize the average logistic 
loss (see, for example, \cite{Ng04}). 
%In addition, sparse principal component analysis is an efficient tool in dimension reduction where 
%a few sparse linear combinations of the random variables, so called {\it principal 
%components} are pursued so that their total explained variance is maximized. 
Mathematically, all these applications can be formulated into the following $l_0$ 
minimization problems:     
\beqa 
& \min\limits_{x \in \cX} \{f(x): g(x) \le 0, \ h(x)=0, \ \|x_J\|_0 \le r\},  \label{l0-J1} \\
& \min\limits_{x \in \cX} \{f(x) + \nu \|x_J\|_0: g(x) \le 0, \ h(x)=0\} \label{l0-J2}
\eeqa    
for some integer $r \ge 0$ and $\nu \ge 0$ controlling the sparsity of the solution, where 
$\cX$ is a closed convex set in the $n$-dimensional Euclidean space $\Re^n$, $f: \Re^n \to \Re$, 
$g: \Re^n \to \Re^m$ and $h:\Re^n \to \Re^p$ are continuously differentiable functions, and 
$\|x_J\|_0$ denotes the cardinality of the subvector formed by the entries of $x$ indexed by 
$J$. Some algorithms are proposed for solving special cases of these problems. For example, 
the iterative hard thresholding algorithms \cite{HeGiTr06,BlDa08,BlDa09} and matching pursuit 
algorithms \cite{MaZh93,Tr04} are developed for solving the $l_0$-regularized least squares 
problems arising in compressed sensing, but they cannot be applied 
to the general $l_0$ minimization problems \eqref{l0-J1} and \eqref{l0-J2}.  
%Given that $\|\cdot\|_0$ is an integer-valued, discontinuous and nonconvex function, 
%it is generally hard to solve problems \eqnok{l0-J1} and \eqnok{l0-J2}. 
In the literature, one popular approach for dealing with \eqref{l0-J1} and \eqref{l0-J2} 
is to replace $\|\cdot\|_0$ by the $l_1$-norm $\|\cdot\|_1$ and solve the resulting relaxation 
problems instead (see, for example, \cite{DaBaGh08,Ng04,ChDoSa98,Ti96}). For some 
applications such as compressed sensing, it has been shown in \cite{CaRoTa06} that 
under some suitable assumptions this approach is capable of solving \eqnok{l0-J1} and \eqnok{l0-J2}. 
Recently, another relaxation approach has been proposed to solve problems \eqnok{l0-J1} and 
\eqnok{l0-J2} in which $\|\cdot\|_0$ is replaced by $l_p$-``norm'' $\|\cdot\|_p$ for some 
$p \in (0,1)$ (see, for example, \cite{Ch07,ChXuYe09,ChZh10}). In general, it is not clear about 
the solution quality of these approaches. Indeed, for the example given in the Appendix, the 
$l_p$ relaxation approach for $p \in (0,1]$ fails to recover the sparse solution.        

In this paper we propose penalty decomposition (PD) methods for solving problems \eqnok{l0-J1} 
and \eqnok{l0-J2} in which a sequence of penalty subproblems are solved by a block coordinate descent 
(BCD) method. Under some suitable assumptions, we establish that any accumulation point of the 
sequence generated by the PD method satisfies the first-order optimality conditions of \eqnok{l0-J1} 
and \eqnok{l0-J2}. Furthermore,  when $h$'s are affine, and $f$ and $g$'s are convex, we show that 
such an accumulation point is a local minimizer of the problems. In addition, we show that any 
accumulation point of the sequence generated by the BCD method is a saddle point of the penalty 
subproblem. Moreover, when $h$'s are affine, and $f$ and $g$'s are convex, we establish that such 
an accumulation point is a local minimizer of the penalty subproblem. Finally, we test the performance 
of our PD methods by applying them to sparse logistic regression, sparse inverse covariance selection, 
and compressed sensing problems. The computational results demonstrate that our methods generally 
outperform the existing methods in terms of solution quality and/or speed.         

The rest of this paper is organized as follows. In Subsection \ref{notation},
we introduce the notation that is used throughout the paper. In Section \ref{sec:opt-conds}, 
we establish the first-order optimality conditions for general $l_0$ minimization problems. 
In Section \ref{tech}, we study a class of special $l_0$ minimization problems. We 
develop the PD methods for general $l_0$ minimization problems in Section \ref{method} 
and establish some convergence results for them. In Section \ref{results}, we conduct numerical 
experiments to test the performance of our PD methods for solving sparse logistic 
regression, sparse inverse covariance selection, and compressed sensing problems. 
Finally, we present some concluding remarks in section \ref{conclude}.    

\subsection{Notation} \label{notation}

In this paper, the symbols $\Re^n$ and $\Re^n_+$ denote the $n$-dimensional 
Euclidean space and the nonnegative orthant of $\Re^n$, respectively. Given a 
vector $v\in\Re^n$, the nonnegative part of $v$ is denoted by $v^+=\max(v,0)$, where 
the maximization operates entry-wise. For any real vector, $\|\cdot\|_0$ and $\|\cdot\|$ 
denote the cardinality (i.e., the number of nonzero entries) and the Euclidean norm of 
the vector, respectively. Given an index set $L \subseteq \{1,\ldots,n\}$, $|L|$ denotes 
the size of $L$, and the elements of $L$ are denoted by $L(1), \ldots, L(|L|)$, which are 
always arranged in ascending order. $x_L$ denotes the subvector formed by the entries of $x$ 
indexed by $L$. Likewise, $X_L$ denotes the submatrix formed by the columns of $X$ 
indexed by $L$. In addition,  For any two sets $A$ and $B$, the subtraction of $A$ 
and $B$ is given by $A\setminus B=\{x\in A: x\notin B\}$. Given a closed set 
$C \subseteq \Re^n$, let $\cN_C(x)$ and $\cT_C(x)$ denote the normal and tangent cones 
of $C$ at any $x \in C$, respectively. The space of all $m \times n$ matrices with real 
entries is denoted by $\Re^{m \times n}$, and the space of symmetric $n \times n$ matrices 
is be denoted by $\cS^n$. We denote by $I$ the identity matrix, whose 
dimension should be clear from the context. If $X \in \cS^n$ is positive semidefinite (resp., definite), 
we write $X \succeq 0$ (resp., $X \succ 0$). The cone of positive semidefinite (resp., definite) matrices is 
denoted by $\cS^n_+$ (resp., $\cS^n_{++}$). $\D$ is an operator which maps a vector to a 
diagonal matrix whose diagonal consists of the vector. Given an $n \times n$ matrix $X$,
 $\tD(X)$ denotes a diagonal matrix whose $i$th diagonal element is $X_{ii}$ for $i=1,\ldots,n$.
 
\section{First-order optimality conditions}
\label{sec:opt-conds}
 
 In this section we study the first-order optimality conditions for problems \eqnok{l0-J1} and 
\eqnok{l0-J2}. In particular, we first discuss the first-order necessary conditions for them. 
Then we study the first-order sufficient conditions for them when the $l_0$ part is the only 
nonconvex part.  

We now establish the first-order necessary optimality conditions for problems \eqnok{l0-J1} and 
\eqnok{l0-J2}.

\begin{theorem} \label{opt-cond-thm1}
Assume that $x^*$ is a local minimizer of problem \eqnok{l0-J1}. Let $J^*\subseteq J$ 
be an index set with $|J^*|=r$ such that $x^*_j = 0$ for all $j \in \bar J^*$, where 
$\bar J^*=J\setminus J^*$. Suppose that the following Robinson condition 
\beq \label{rob-cond-c1}
\left \{\left[\ba{c} 
g'(x^*)d - v \\ 
h'(x^*) d  \\
(I_{\bar J^*})^T d
\ea \right]: d \in \cT_{\cX}(x^*), v \in \Re^m, v_i \le 0, 
i \in \cA(x^*)\right\} = \Re^m \times \Re^p \times \Re^{|J|-r}
\eeq
holds, where $g'(x^*)$ and $h'(x^*)$ 
denote the Jacobian of the functions $g=(g_1,\ldots,g_m)$ and $h=(h_1,\ldots,h_p)$ 
at $x^*$, respectively, and
\beq \label{barJs}
 \cA(x^*) = \{1\le i \le m: g_i(x^*) = 0 \}.
\eeq
Then, there exists $(\lambda^*,\mu^*,z^*)\in \Re^m \times \Re^p \times \Re^n$ 
together with $x^*$ satisfying 
\beq \label{opt-cond-c}
\ba{c}
-\nabla f(x^*) -  \nabla g(x^*)\lambda^* - 
\nabla h(x^*)\mu^* - z^*  \in  \cN_{\cX}(x^*), \\ [5pt]
\lambda^*_i \ge 0, \ \lambda^*_i g_i(x^*) = 0,  \ i=1, \ldots, m; \ \ \ \ 
z^*_j = 0, \ j \in \bar J \cup J^*. 
\ea
\eeq
where $\bar J$ is the complement of $J$ in $\{1,\ldots,n\}$. 
\end{theorem}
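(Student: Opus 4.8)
The plan is to localize the genuinely combinatorial $l_0$ constraint into a smooth equality constraint with a fixed zero pattern, and then to apply the classical first-order necessary conditions for smooth optimization under Robinson's constraint qualification.

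First I would introduce the smooth reduced problem obtained by freezing the support prescribed by $J^*$:
\[
\min_{x \in \cX}\{f(x): g(x) \le 0,\ h(x)=0,\ (I_{\bar J^*})^T x = 0\},
\]
in which $\|x_J\|_0 \le r$ is replaced by the linear equalities $x_j = 0$ for $j \in \bar J^*$. The key observation is that any $x$ feasible for this reduced problem has $\|x_J\|_0 = \|x_{J^*}\|_0 \le |J^*| = r$, so its feasible region is contained in that of \eqref{l0-J1}; moreover $x^*$ itself lies in this region by the hypothesis $x^*_j = 0$ for all $j \in \bar J^*$. Since $x^*$ is a local minimizer of \eqref{l0-J1} over the larger set, it is a fortiori a local minimizer of the reduced problem over the smaller one.

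Next I would identify the set on the left-hand side of \eqref{rob-cond-c1} with the image appearing in Robinson's constraint qualification for the reduced problem: the block $g'(x^*)d - v$ with $v_i \le 0$ on $\cA(x^*)$ linearizes the inequalities $g \le 0$, the block $h'(x^*)d$ the equalities $h = 0$, and $(I_{\bar J^*})^T d$ the artificial equalities $x_{\bar J^*} = 0$, all taken over the tangent cone $\cT_{\cX}(x^*)$ of the abstract convex set. Thus \eqref{rob-cond-c1} is precisely Robinson's CQ for the reduced problem at $x^*$. Invoking the standard multiplier theorem then yields $\lambda^* \in \Re^m$, $\mu^* \in \Re^p$, and a multiplier $w^* \in \Re^{|J|-r}$ for the equalities $(I_{\bar J^*})^T x = 0$ with
\[
-\nabla f(x^*) - \nabla g(x^*)\lambda^* - \nabla h(x^*)\mu^* - I_{\bar J^*} w^* \in \cN_{\cX}(x^*),
\]
together with $\lambda^* \ge 0$ and $\lambda^*_i g_i(x^*) = 0$. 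Setting $z^* \in \Re^n$ by $z^*_{\bar J^*} = w^*$ and $z^*_j = 0$ otherwise gives $I_{\bar J^*} w^* = z^*$ and $z^*_j = 0$ exactly for $j$ in the complement of $\bar J^*$, namely $j \in \bar J \cup J^*$, which is \eqref{opt-cond-c}.

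The main obstacle I expect is the reduction step rather than the multiplier calculus: one must argue carefully that fixing the zero pattern on $\bar J^*$ converts the nonconvex, nonsmooth constraint $\|x_J\|_0 \le r$ into a smooth problem whose feasible region sits inside the original one while still containing $x^*$, so that local optimality genuinely transfers. Here the fact that $|J^*| = r$ is essential, since it guarantees the cardinality bound is automatically met on the reduced feasible set. Once this localization is established, the remainder is the routine translation of \eqref{rob-cond-c1} into the hypotheses of the classical theorem and the bookkeeping that re-embeds the multiplier $w^*$ of the artificial equalities as the sparse vector $z^*$ supported on $\bar J^*$.
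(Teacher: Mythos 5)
Your proposal is correct and follows exactly the paper's own route: reduce \eqref{l0-J1} to the fixed-support problem $\min_{x\in\cX}\{f(x): g(x)\le 0,\ h(x)=0,\ x_{\bar J^*}=0\}$, note $x^*$ remains a local minimizer there, and invoke the classical multiplier theorem under Robinson's constraint qualification (Theorem 3.25 of \cite{Rus06} in the paper). Your write-up merely spells out the feasible-set containment and the re-embedding of the multiplier $w^*$ as $z^*$, details the paper leaves implicit.
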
  

\begin{proof}
By the assumption that $x^*$ is a local minimizer of problem \eqnok{l0-J1}, one can observe that 
$x^*$ is also a local minimizer of the following problem:
\beq \label{NLP}
\min\limits_{x \in \cX} \{f(x): g(x) \le 0, \ h(x)=0, \ x_{\bar J^*}=0\}. 
\eeq
Using this observation, \eqnok{rob-cond-c1} and Theorem 3.25 of \cite{Rus06}, we see that 
the conclusion holds. 
\end{proof}

\vgap

\begin{theorem} \label{opt-cond-thm2}
Assume that $x^*$ is a local minimizer of problem \eqnok{l0-J2}. Let $J^* = 
\{j\in J: x^*_j \neq 0\}$ and $\bar J^*=J\setminus J^*$.
 suppose that the following Robinson condition 
\beq \label{rob-cond-c}
\left \{\left[\ba{c} 
g'(x^*)d - v \\ 
h'(x^*) d  \\
(I_{\bar J^*})^T d
\ea \right]: d \in \cT_{\cX}(x^*), v \in \Re^m, v_i \le 0, 
i \in \cA(x^*)\right\} = \Re^m \times \Re^p \times \Re^{|\bar J^*|}
\eeq
holds, where $\cA(x^*)$ is defined in \eqref{barJs}.
Then, there exists $(\lambda^*,\mu^*,z^*)\in \Re^m \times \Re^p \times \Re^n$ 
together with $x^*$ satisfying \eqref{opt-cond-c}. 
% \beq \label{opt-cond-c}
% \ba{c}
% -\nabla f(x^*) -  \nabla g(x^*)\lambda^* - 
% \nabla h(x^*)\mu^* - z^*  \in  \cN_{\cX}(x^*), \\
% \lambda^*_i \ge 0, \ \ \ \ \lambda^*_i g_i(x^*) = 0,  \ \ i=1, \ldots, m,
% \ea
% \eeq
% where 
% \beq \label{zs}
% z^*_j = \left\{\ba{ll}
% 0 & \ \mbox{if} \ j \in \bar J \cup J^*, \\
% \varpi^*_i & \ \mbox{if} \ j=J(i) \in \bar J^*,
% \ea
% \right.
% \eeq
% $\bar J$ is the complement of $J$ in $\{1,\ldots,n\}$.
\end{theorem}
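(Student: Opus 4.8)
The plan is to mimic the proof of Theorem~\ref{opt-cond-thm1}, reducing the $l_0$-penalized problem \eqnok{l0-J2} to an ordinary smooth nonlinear program to which the Robinson-condition KKT theorem (Theorem 3.25 of \cite{Rus06}) applies directly. The key observation is that the set $J^*=\{j\in J:x^*_j\neq 0\}$ is defined in terms of the \emph{support} of $x^*$ itself, rather than being an arbitrarily chosen index set of size $r$ as in the cardinality-constrained case. This choice is what allows the penalty term $\nu\|x_J\|_0$ to be locally frozen into a constant.

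First I would argue that $x^*$ is a local minimizer of the smooth problem
\beq \label{NLP2}
\min\limits_{x \in \cX} \{f(x): g(x) \le 0, \ h(x)=0, \ x_{\bar J^*}=0\}.
\eeq
To see this, note that any $x$ feasible for \eqnok{NLP2} satisfies $x_j=0$ for all $j\in\bar J^*$, so its support within $J$ is contained in $J^*$, giving $\|x_J\|_0\le|J^*|=\|x^*_J\|_0$. Hence on the feasible set of \eqnok{NLP2} we have $f(x)+\nu\|x_J\|_0\le f(x)+\nu\|x^*_J\|_0$, and because $x^*$ is a local minimizer of \eqnok{l0-J2} and feasibility for \eqnok{NLP2} implies feasibility for \eqnok{l0-J2}, it follows that $f(x^*)\le f(x)$ for all such $x$ near $x^*$; thus $x^*$ locally minimizes $f$ over the feasible set of \eqnok{NLP2}. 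The additional equality constraints $x_{\bar J^*}=0$ are exactly the constraints appearing in the Robinson condition \eqnok{rob-cond-c} through the term $(I_{\bar J^*})^T d$, so the constraint structure of \eqnok{NLP2} matches that of \eqnok{NLP} with $J^*$ now playing the role of the prescribed support.

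With \eqnok{NLP2} in hand, the remainder is to invoke Theorem 3.25 of \cite{Rus06}: the Robinson condition \eqnok{rob-cond-c} is precisely the constraint qualification for problem \eqnok{NLP2} at $x^*$, so there exist multipliers $(\lambda^*,\mu^*)$ for the constraints $g,h$ and a multiplier vector for $x_{\bar J^*}=0$, which I collect into $z^*\in\Re^n$ by setting $z^*_j=0$ for $j\notin\bar J^*$. Since $\bar J=\{1,\ldots,n\}\setminus J$ and $J^*\subseteq J$, the complement of $\bar J^*$ in $\{1,\ldots,n\}$ is exactly $\bar J\cup J^*$, giving the condition $z^*_j=0$ for $j\in\bar J\cup J^*$ in \eqnok{opt-cond-c}. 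The stationarity inclusion and the complementarity conditions on $\lambda^*$ come directly from the KKT system of \eqnok{NLP2}.

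The main subtlety, and the only place where the argument genuinely differs from Theorem~\ref{opt-cond-thm1}, is the reduction step establishing that $x^*$ locally minimizes \eqnok{NLP2}: one must confirm that restricting to the affine slice $x_{\bar J^*}=0$ does not increase the $l_0$ count beyond $|J^*|$, so that the penalty term is dominated and drops out of the comparison. This hinges essentially on the fact that $J^*$ is the \emph{exact} support of $x^*_J$ rather than a superset, which is guaranteed here by its definition. Once this is verified the rest is a direct citation, so I expect no further obstacle.
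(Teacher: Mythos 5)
Your proposal is correct and follows essentially the same route as the paper: reduce \eqnok{l0-J2} to the smooth problem \eqref{NLP} (your \eqnok{NLP2}) at the exact support $J^*$ and invoke Theorem 3.25 of \cite{Rus06} under the Robinson condition \eqnok{rob-cond-c}. The only difference is that you spell out the one direction of the equivalence the paper dismisses as ``not hard to observe'' --- namely that a local minimizer of \eqnok{l0-J2} is a local minimizer of \eqref{NLP}, via the bound $\|x_J\|_0\le\|x^*_J\|_0$ on the slice $x_{\bar J^*}=0$ --- which is a faithful filling-in of the paper's argument rather than a different one.
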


\begin{proof}
It is not hard to observe that $x^*$ is a local minimizer of problem \eqnok{l0-J2} if 
and only if $x^*$ is a local minimizer of problem \eqref{NLP}. Using this observation, 
\eqnok{rob-cond-c} and Theorem 3.25 of \cite{Rus06}, we see that the conclusion holds. 
\end{proof}

\vgap

We next establish the first-order sufficient optimality conditions for problems \eqnok{l0-J1} 
and \eqnok{l0-J2} when the $l_0$ part is the only nonconvex part.  

\begin{theorem} \label{suf-cond-thm1}
Assume that $h$'s are affine functions, and $f$ and $g$'s are convex functions. Let $x^*$ be a feasible point 
of problem \eqnok{l0-J1}, and let $\cJ^* = \{J^* \subseteq J: |J^*| = r, x^*_j=0, \forall j \in J\setminus  J^*\}$. 
Suppose that for any $J^* \in \cJ^*$,  there exists some $(\lambda^*,\mu^*,z^*)\in \Re^m \times \Re^p \times 
\Re^n$ such that \eqref{opt-cond-c} holds. Then, $x^*$ is a local minimizer of problem \eqnok{l0-J1}.
\end{theorem}

\begin{proof}
It follows from the above assumptions and Theorem 3.34 of \cite{Rus06} that $x^*$ is a minimizer 
of problem \eqref{NLP} for all $\bar J^* \in \{J\setminus J^*: J^* \in \cJ^*\}$. Hence, 
there exists $\epsilon >0$ such that $f(x) \ge f(x^*)$ for all $x \in \cup_{J^*\in\cJ^*}\cO_{J^*}(x^*;\epsilon)$, where 
\[
 \cO_{J^*}(x^*;\epsilon) = \{x\in\cX: g(x) \le 0, \ h(x)=0, \ x_{\bar J^*}=0, \ \|x-x^*\| < \epsilon\}
\]
with $\bar J^* = J\setminus J^*$. One can observe from \eqnok{l0-J1} that for any $x\in \cO(x^*;\epsilon)$, where
\[
\cO(x^*;\epsilon) = \{x\in\cX: g(x) \le 0, \ h(x)=0,\ \|x_J\|_0 \le r, \ \|x-x^*\| < \epsilon\},
\]
there exists $J^* \in \cJ^*$ such that $x \in \cO_{J^*}(x^*;\epsilon)$ and hence $f(x) \ge f(x^*)$. It implies 
that the conclusion holds.  
\end{proof}

\vgap

\begin{theorem} \label{suf-cond-thm2}
Assume that $h$'s are affine functions, and $f$ and $g$'s are convex functions. Let $x^*$ be a feasible point 
of problem \eqnok{l0-J2}, and let $J^* = \{j\in J: x^*_j \neq 0\}$. Suppose that for such $J^*$, there exists 
some $(\lambda^*,\mu^*,z^*)\in \Re^m \times \Re^p \times \Re^n$ such that \eqref{opt-cond-c} holds. Then, 
$x^*$ is a local minimizer of problem \eqnok{l0-J2}.
\end{theorem}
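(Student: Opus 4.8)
The plan is to follow the template of the proof of Theorem~\ref{suf-cond-thm1}, reducing the problem to the auxiliary convex program \eqref{NLP} and then exploiting the integer-valued (hence locally rigid-from-below) nature of the $l_0$ term, which here sits in the objective rather than in a constraint. First I would observe that, under the stated convexity assumptions, problem \eqref{NLP} with $\bar J^*=J\setminus J^*$ is a convex program whose KKT system is precisely \eqref{opt-cond-c}: the vector $z^*$ plays the role of the multiplier for the linear constraint $x_{\bar J^*}=0$, and the requirement $z^*_j=0$ for $j\in\bar J\cup J^*$ simply encodes that $z^*$ is supported on $\bar J^*$ (note $\{1,\ldots,n\}\setminus\bar J^*=\bar J\cup J^*$). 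Hence, by the sufficiency of the KKT conditions for convex problems (Theorem 3.34 of \cite{Rus06}), the hypothesis that \eqref{opt-cond-c} holds for this $J^*$ guarantees that $x^*$ is a global minimizer of \eqref{NLP}; in particular $f(x)\ge f(x^*)$ for every $x$ feasible for \eqref{NLP}.

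Next I would pin down a radius $\epsilon>0$ by imposing two requirements at once. Since $x^*_j\neq 0$ for every $j\in J^*$, choosing $\epsilon\le\min_{j\in J^*}|x^*_j|$ forces $x_j\neq 0$ for all $j\in J^*$ whenever $\|x-x^*\|<\epsilon$, so that $J^*\subseteq\mathrm{supp}(x_J)$ and therefore $\|x_J\|_0\ge|J^*|=\|x^*_J\|_0$ on that ball. Independently, continuity of $f$ at $x^*$ lets me shrink $\epsilon$ further so that $|f(x)-f(x^*)|<\nu$ for all $\|x-x^*\|<\epsilon$.

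Finally I would take an arbitrary $x$ feasible for \eqref{l0-J2} with $\|x-x^*\|<\epsilon$ and split into two cases according to $\mathrm{supp}(x_J)$. If $\mathrm{supp}(x_J)=J^*$, then $x_{\bar J^*}=0$, so $x$ is feasible for \eqref{NLP} and $f(x)\ge f(x^*)$, while $\|x_J\|_0=\|x^*_J\|_0$; adding the two gives $f(x)+\nu\|x_J\|_0\ge f(x^*)+\nu\|x^*_J\|_0$. If instead $\mathrm{supp}(x_J)\supsetneq J^*$, then $\|x_J\|_0\ge|J^*|+1$, and the continuity estimate yields $f(x)+\nu\|x_J\|_0 > (f(x^*)-\nu)+\nu(|J^*|+1)=f(x^*)+\nu\|x^*_J\|_0$. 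In either case the objective at $x$ is at least that at $x^*$, so $x^*$ is a local minimizer of \eqref{l0-J2}. The delicate point, and the step I expect to be the crux, is this second case: the convex sufficiency of the first step no longer applies, since such $x$ need not be feasible for \eqref{NLP}, so one must instead play the unit jump of the discrete penalty $\nu\|x_J\|_0$ against the arbitrarily small variation of $f$ near $x^*$, which is exactly what the choice $|f(x)-f(x^*)|<\nu$ secures.
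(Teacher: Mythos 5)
Your proof is correct and follows essentially the same route as the paper: identify \eqref{opt-cond-c} with the KKT system of the convex program \eqref{NLP} for $\bar J^*=J\setminus J^*$, invoke Theorem 3.34 of \cite{Rus06} to conclude that $x^*$ minimizes \eqref{NLP}, and then transfer this to local minimality for \eqref{l0-J2}. The only difference is that the paper dispatches this last step with the unproved remark that local minimizers of \eqref{l0-J2} coincide with those of \eqref{NLP}, whereas you prove the needed direction explicitly via the support dichotomy and the estimate $|f(x)-f(x^*)|<\nu$ (which, like the paper's remark, tacitly requires $\nu>0$).
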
 

\begin{proof}
By virtue of the above assumptions and Theorem 3.34 of \cite{Rus06}, we know that $x^*$ is a minimizer 
of problem \eqref{NLP} with $\bar J^* = J\setminus J^*$. Also, we observe that any point is a local minimizer 
of problem \eqnok{l0-J2} if and only if it is a local minimizer of problem \eqref{NLP}. It then implies that 
$x^*$ is a local minimizer of \eqnok{l0-J2}.  
\end{proof}

\vgap

\begin{remark}
The second-order necessary or sufficient optimality conditions for problems \eqnok{l0-J1} and \eqnok{l0-J2} 
can be similarly established as above. 
\end{remark}

\section{A class of special $l_0$ minimization}
\label{tech}

In this section we show that a class of special $l_0$ minimization problems have closed-form solutions, 
which can be used to develop penalty decomposition methods for solving general $l_0$ minimization problems.

\begin{proposition} \label{prop1}
Let $\cX_i \subseteq \Re$ and $\phi_i: \Re \to \Re$ for $i=1,\ldots,n$ be given. Suppose 
that $r$ is a positive integer and $0 \in \cX_i$ for all $i$. Consider the following 
$l_0$ minimization problem:
\beq \label{l0-p1}
\min\left\{\phi(x) = \sum^n_{i=1} \phi_i(x_i): \|x\|_0 \le r, \ x \in \cX_1 \times 
\cdots \times \cX_n \right\}. 
\eeq
Let $\tx^*_i\in \Arg\min\{\phi_i(x_i): x_i \in \cX_i\}$ and $I^* \subseteq \{1,\ldots, n\}$ be 
the index set corresponding to $r$ largest values of $\{v^*_i\}^n_{i=1}$, where 
$v^*_i = \phi_i(0)-\phi_i(\tx^*_i)$ for $i=1, \ldots, n$. Then, $x^*$ is an optimal solution of 
problem \eqnok{l0-p1}, where $x^*$ is defined as follows:
\[
x^*_i = \left\{\ba{ll}
\tx^*_i & \mbox{if} \ i \in I^*; \\
0  & \mbox{otherwise}, 
\ea\right. \quad i=1, \ldots, n.
\] 
\end{proposition}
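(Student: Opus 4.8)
The plan is to exploit the separability of $\phi$ and reduce the problem to the purely combinatorial task of deciding which (at most $r$) coordinates to activate. First I would record the two elementary facts that drive everything: since $0\in\cX_i$, the point $x_i=0$ is feasible for the $i$th inner minimization, so $\phi_i(\tx^*_i)\le\phi_i(0)$ and hence each \emph{saving} $v^*_i=\phi_i(0)-\phi_i(\tx^*_i)$ is nonnegative; and because $\tx^*_i$ minimizes $\phi_i$ over $\cX_i$, we have $\phi_i(t)\ge\phi_i(\tx^*_i)$ for every $t\in\cX_i$.

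Next I would verify that $x^*$ is feasible for \eqref{l0-p1}: each coordinate lies in $\cX_i$ (it equals either $\tx^*_i\in\cX_i$ or $0\in\cX_i$), and its support is contained in $I^*$, so $\|x^*\|_0\le|I^*|\le r$. I would then evaluate the objective at $x^*$ by splitting the sum over $I^*$ and its complement, obtaining
\[
\phi(x^*)=\sum_{i=1}^n\phi_i(0)-\sum_{i\in I^*}v^*_i .
\]

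For the matching lower bound, take an arbitrary feasible $x$ and set $S=\{i:x_i\neq 0\}$, so that $|S|\le r$. Writing $\phi(x)=\sum_{i=1}^n\phi_i(0)-\sum_{i\in S}(\phi_i(0)-\phi_i(x_i))$ and applying $\phi_i(x_i)\ge\phi_i(\tx^*_i)$ on each $i\in S$ gives
\[
\phi(x)\ge\sum_{i=1}^n\phi_i(0)-\sum_{i\in S}v^*_i .
\]
The crux is then the inequality $\sum_{i\in S}v^*_i\le\sum_{i\in I^*}v^*_i$: because every $v^*_i\ge 0$ and $I^*$ collects the $r$ largest of the savings while $|S|\le r$, no admissible support can accumulate more total saving than $I^*$ does. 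Combining this comparison with the two displayed expressions yields $\phi(x)\ge\phi(x^*)$ for all feasible $x$, which is the claim.

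I expect the only point requiring genuine care to be this last comparison of savings, i.e. justifying that restricting to supports of size at most $r$ makes the top-$r$ choice $I^*$ optimal. It is elementary once one has the nonnegativity of the $v^*_i$ (and the freedom, guaranteed by $0\in\cX_i$, to leave the unselected coordinates at zero at no cost), but it is exactly where the hypotheses ``$r$ a positive integer'', ``$0\in\cX_i$'', and the top-$r$ definition of $I^*$ are all used together; everything else is routine bookkeeping on the separable sum.
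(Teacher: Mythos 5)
Your proof is correct and follows essentially the same route as the paper's: both rest on the bound $\phi_i(x_i)\ge\phi_i(\tx^*_i)$ on the support of a feasible $x$, the nonnegativity of the savings $v^*_i$ (coming from $0\in\cX_i$), and the top-$r$ maximality of $I^*$. The only difference is bookkeeping: you measure both objective values against the baseline $\sum_i\phi_i(0)$ and reduce everything to the single comparison $\sum_{i\in S}v^*_i\le\sum_{i\in I^*}v^*_i$, whereas the paper splits $\phi(x)-\phi(x^*)$ into four sums over the intersections of the two supports and invokes the counting relation $|\bar L\cap I^*|\ge|L\cap \bar I^*|$ --- the two arguments coincide after canceling the terms indexed by $S\cap I^*$.
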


\begin{proof}
By the assumption that $0 \in \cX_i$ for all $i$, and the definitions of $x^*$, $\tx^*$ and $I^*$, we 
see that $x^*\in \cX_1 \times \cdots \times \cX_n$ and $\|x^*\|_0 \le r$. Hence, $x^*$ is 
a feasible solution of \eqnok{l0-p1}. It remains to show that $\phi(x) \ge \phi(x^*)$ for any feasible 
point $x$ of \eqnok{l0-p1}. Indeed, let $x$  be arbitrarily chosen such that $\|x\|_0 \le r$ and 
$x \in \cX_1 \times \cdots \times \cX_n$, and let $L = \{i: x_i \neq 0\}$. Clearly, $|L| \le r = |I^*|$.
Let $\bI^*$ and $\bL$ denote the complement of $I^*$ and $L$ in $\{1,\ldots,n\}$, respectively. It then 
follows that 
\[
|\bL \cap I^*| \ = \ |I^*|-|I^*\cap L| \ \ge \ |L| -|I^*\cap L| \ = \ |L \cap \bI^*|.
\]
In view of the definitions of $x^*$, $\tx^*$, $I^*$, $\bI^*$, $L$ and $\bL$, we further have 
\[
\ba{lcl}
\phi(x) - \phi(x^*) &=& \sum_{i\in L \cap I^*} (\phi_i(x_i) - \phi_i(x^*_i)) + 
\sum_{i\in \bL \cap \bI^*}(\phi_i(x_i) - \phi_i(x^*_i))
 \\ [4pt]
& & + \sum_{i\in \bL \cap I^*}(\phi_i(x_i) - \phi_i(x^*_i)) + 
\sum_{i\in L \cap \bI^*} (\phi_i(x_i) - \phi_i(x^*_i)), \\ [5pt]
&=& \sum_{i\in L \cap I^*} (\phi_i(x_i) - \phi_i(\tx^*_i)) + 
\sum_{i\in \bL \cap \bI^*}(\phi_i(0) - \phi_i(0))
 \\ [4pt]
& & + \sum_{i\in \bL \cap I^*}(\phi_i(0) - \phi_i(\tx^*_i)) + 
\sum_{i\in L \cap \bI^*} (\phi_i(x_i) - \phi_i(0)), \\ [5pt]
& \ge & \sum_{i\in \bL \cap I^*}(\phi_i(0) - \phi_i(\tx^*_i))
+ \sum_{i\in L \cap \bI^*} (\phi_i(\tx^*_i) - \phi_i(0)), \\ [5pt]
& = & \sum_{i\in \bL \cap I^*}(\phi_i(0) - \phi_i(\tx^*_i))
- \sum_{i\in L \cap \bI^*} (\phi_i(0) - \phi_i(\tx^*_i)) \ \ge \ 0, 
\ea 
\]
where the last inequality follows from the definition of $I^*$ and the relation $|\bL \cap I^*| 
\ge |L \cap \bI^*|$. Thus, we see that $\phi(x) \ge \phi(x^*)$ for any feasible 
point $x$ of \eqnok{l0-p1}, which implies that the conclusion holds. 
\end{proof}

\vgap

It is straightforward to establish the following result.

\begin{proposition} \label{prop2}
Let $\cX_i \subseteq \Re$ and $\phi_i: \Re \to \Re$ for $i=1,\ldots,n$ be given. Suppose 
that $\nu \ge 0$ and $0 \in \cX_i$ for all $i$. Consider the following $l_0$ minimization 
problem:
\beq \label{l0-p2}
\min\left\{\nu \|x\|_0 + \sum^n_{i=1} \phi_i(x_i): \ \ x \in \cX_1 \times \cdots \times \cX_n \right\}. 
\eeq
Let $\tx^*_i\in \Arg\min\{\phi_i(x_i): x_i \in \cX_i\}$ and $v^*_i = \phi_i(0)-\nu-\phi_i(\tx^*_i)$ for 
$i=1, \ldots, n$. Then, $x^*$ is an optimal solution of problem \eqnok{l0-p2}, where $x^*$ is defined 
as follows:
\[
x^*_i = \left\{\ba{ll}
\tx^*_i & \mbox{if} \ v^*_i \ge 0; \\
0  & \mbox{otherwise}, 
\ea\right. \quad i=1, \ldots, n.
\] 
\end{proposition}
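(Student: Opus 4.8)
The plan is to exploit the fact that, unlike the cardinality-constrained problem \eqnok{l0-p1}, the penalized problem \eqnok{l0-p2} is fully separable across coordinates. Since $\|x\|_0 = \sum_{i=1}^n \chi(x_i)$, where $\chi(t)=1$ if $t\neq 0$ and $\chi(t)=0$ otherwise, the objective equals $\sum_{i=1}^n \psi_i(x_i)$ with $\psi_i(t) = \nu\,\chi(t) + \phi_i(t)$; and because the feasible set is the product $\cX_1 \times \cdots \times \cX_n$, it suffices to minimize each $\psi_i$ over $\cX_i$ independently and then concatenate the coordinatewise minimizers. This is why no analogue of the counting argument from Proposition~\ref{prop1} is needed here.

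First I would establish a coordinatewise lower bound. For any $t \in \cX_i$ either $t=0$, giving $\psi_i(t) = \phi_i(0)$, or $t\neq 0$, giving $\psi_i(t) = \nu + \phi_i(t) \ge \nu + \phi_i(\tx^*_i)$ by the definition of $\tx^*_i$. Hence $\psi_i(t) \ge \min\{\phi_i(0),\ \nu + \phi_i(\tx^*_i)\}$ for every $t \in \cX_i$, which lower-bounds the optimal value of the $i$th coordinate subproblem.

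Next I would verify that the prescribed $x^*_i$ attains this bound, splitting on the sign of $v^*_i = \phi_i(0) - \nu - \phi_i(\tx^*_i)$. When $v^*_i < 0$ the bound is $\phi_i(0)$, and the choice $x^*_i = 0$ yields exactly $\psi_i(0) = \phi_i(0)$. When $v^*_i \ge 0$ the bound is $\nu + \phi_i(\tx^*_i)$, and the choice $x^*_i = \tx^*_i$ yields $\psi_i(\tx^*_i) = \nu\,\chi(\tx^*_i) + \phi_i(\tx^*_i)$. Summing the coordinatewise identities then shows that the objective at $x^*$ equals the sum of the coordinatewise minima, so $x^*$ solves \eqnok{l0-p2}.

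The only point requiring care—and the sole place the argument could appear to slip—is the edge case $\tx^*_i = 0$, since then setting $x^*_i = \tx^*_i$ does not incur the penalty $\nu$, so the nominal value $\nu + \phi_i(\tx^*_i)$ might look unattainable. But $\tx^*_i = 0$ forces $v^*_i = -\nu \le 0$: if $\nu>0$ this coordinate lands in the case $v^*_i<0$ and is assigned $x^*_i=0$, while if $\nu=0$ the penalty term vanishes and both candidate values collapse to $\phi_i(0)$. Either way the prescription is consistent and attains the lower bound, so the separability argument goes through without exception.
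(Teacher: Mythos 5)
Your proof is correct. The paper itself gives no argument for Proposition \ref{prop2} (it is stated as "straightforward" immediately after the proof of Proposition \ref{prop1}), and your separability argument is exactly the natural one the authors leave implicit: because the penalty $\nu\|x\|_0$ splits as a sum over coordinates and the feasible set is a product, the problem decouples into $n$ independent scalar problems, so the counting argument over index sets used for Proposition \ref{prop1} is not needed. Your coordinatewise lower bound $\min\{\phi_i(0),\,\nu+\phi_i(\tilde x^*_i)\}$, the verification that $x^*_i$ attains it in both cases of the sign of $v^*_i$, and in particular your treatment of the edge case $\tilde x^*_i=0$ (where $v^*_i=-\nu$ forces either the zero assignment or $\nu=0$, in which case the two candidate values coincide) are all sound, so the argument closes the gap the paper leaves to the reader.
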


\section{Penalty decomposition methods for general $l_0$ minimization}
\label{method} 

In this section we propose penalty decomposition (PD) methods for solving general $l_0$ minimization 
problems \eqnok{l0-J1} and \eqnok{l0-J2} and establish their convergence. Throughout this section, we 
make the following assumption for problems \eqnok{l0-J1} and \eqnok{l0-J2}.

\begin{assumption} \label{assump}
Problems \eqnok{l0-J1} and \eqnok{l0-J2} are feasible, and moreover, at least a feasible 
solution, denoted by  $x^{\rm feas}$, is known.  
\end{assumption} 

This assumption will be used to design the PD methods with nice convergence properties. It can 
be dropped, but the theoretical convergence of the corresponding PD methods may become weaker. 
We shall also mention that, for numerous real applications, $x^{\rm feas}$ is readily available 
or can be observed from the physical background of problems. For example, all application 
problems discussed in Section \ref{results} have a trivial feasible solution. On the other 
hand, for some problems which do not have a trivial feasible solution, one can always approximate 
them by the problems which have a trivial feasible solution. For instance, problem 
\eqnok{l0-J1} can be approximately solved as the following problem:
\[
\min\limits_{x \in \cX} \{f(x)+\rho (\|u^+\|^2+\|v\|^2): g(x)-u \le 0, \ h(x)-v=0, \ \|x_J\|_0 \le r\} 
\]
for some large $\rho$. The latter problem has a trivial feasible solution when $\cX$ is 
sufficiently simple.    

\subsection{Penalty decomposition method for problem \eqnok{l0-J1}} 
\label{sec:PD-c} 
% For convenience of presentation, we first consider the case $J=\{1,\ldots,n\}$, and problems 
% \eqnok{l0-J1} and \eqnok{l0-J2} accordingly become 
% \beqa 
% & \min\limits_x \{f(x): \|x\|_0 \le r, \ x \in \cX \},  \label{l0-J1} \\
% & \min\limits_x \{f(x) + \nu \|x\|_0:  x \in \cX \}, \label{l0-J2}
% \eeqa 
% respectively. The subsequent results developed for problems \eqnok{l0-J1} and \eqnok{l0-J2} 
% can be straightforwardly extended to problems \eqnok{l0-J1} and \eqnok{l0-J2}. We will 
% address such an extension at the end of this section.

%In particular, we first reformulate problem \eqnok{l0-J1} as an 
% equivalent rank minimization problem and then apply the PD method proposed in \cite{LuZh10-1} 
% to solve the latter problem. By utilizing the special structures of the problem, we then 
% transform all matrix operations of the PD method to vector operations and obtain a PD 
% method that only involves vector operations. Finally, we extend the PD method to solve 
% problem \eqnok{l0-J2}.

In this subsection we propose a PD method for solving problem \eqnok{l0-J1} and establish its 
convergence.

We observe that \eqnok{l0-J1} can be equivalently reformulated as 
\beq \label{l0-J1-ref}
\min\limits_{x \in \cX, y\in \cY} \{f(x): \ g(x) \le 0, \ h(x)=0, \ x_J-y=0\},   
\eeq
where 
\[
\cY = \{y\in \Re^{|J|}: \|y\|_0 \le r\}.
\]
The associated quadratic penalty function is defined as follows: 
\beq \label{q-fun}
q_\vrho(x,y) = f(x) + \frac{\vrho}{2}(\|[g(x)]^+\|^2+\|h(x)\|^2+\|x_J-y\|^2)  \quad \forall x \in \cX, y\in \cY
\eeq
for some penalty parameter $\vrho>0$. 

We are now ready to propose a PD method for solving problem \eqnok{l0-J1-ref} (or equivalently, \eqref{l0-J1}) 
in which each penalty subproblem is approximately solved by a block coordinate descent (BCD) method.
% Letting $y^k_l = \D^*(Y^k_l)$ and using \eqnok{p-fun1} and 
% Corollary 2.9 of \cite{LuZh10-1}, we can observe that the solutions $Y^k_{l+1}$ and 
% $X^k_{l+1}$ of the subproblems in steps 1a) and 1b) above are given by $X^k_{l+1} = \D(x^k_{l+1})$ 
% and $Y^k_{l+1} = \D(y^k_{l+1})$, respectively, where 
% \[
% x^{k}_{l+1} \in \Arg\min\limits_{x \in \cX} q_{\vrho_k}(x,y^k_{l}), \quad 
% y^k_{l+1} \in \Arg\min\limits_{y \in \cY} q_{\vrho_k}(x^k_{l+1},y).
% \]   
% In addition, the inner termination conditions \eqnok{inner-cond-s} can be similarly 
% transformed into the one only involving vector operations. We are now ready to present 
% the resulting PD method for solving problem \eqnok{l0-J1}. 

\gap

\noindent
%\begin{minipage}[h]{6.6 in}
{\bf Penalty decomposition method for \eqnok{l0-J1}:}  \\ [5pt]
Let $\{\epsilon_k\}$ be a positive decreasing sequence. Let $\vrho_0 >0$, 
$\sigma > 1$ be given. Choose an arbitrary $y^0_0\in \cY$ and a constant 
$\Upsilon \ge \max\{f(x^{\feas}), \min_{x\in \cX} q_{\vrho_0}(x,y^0_0)\}$. 
Set $k=0$. 
\begin{itemize}
\item[1)] Set $l=0$ and apply the BCD method to find an approximate solution 
$(x^k, y^k) \in \cX \times \cY$ for the penalty subproblem 
\beq \label{inner-prob1}
\min\{q_{\vrho_k}(x,y): \ x \in \cX, \ y \in \cY\} 
\eeq
by performing steps 1a)-1d): 
\bi
\item[1a)] Solve $x^{k}_{l+1} \in \Arg\min\limits_{x \in \cX} q_{\vrho_k}(x,y^k_{l})$.
\item[1b)]
Solve $y^k_{l+1} \in \Arg\min\limits_{y \in \cY} q_{\vrho_k}(x^k_{l+1},y)$.
\item[1c)] Set $(x^k, y^k) := (x^k_{l+1},y^k_{l+1})$. 
If $(x^k,y^k)$ satisfies 
\beq \label{inner-cond-c}
\|\cP_\cX(x^k-\nabla_x q_{\vrho_k}(x^k,y^k))-x^k\|  \le  \epsilon_k, 
\eeq
then go to step 2). 
\item[1d)] Set $l \leftarrow l+1$ and go to step 1a). 
\ei
\item[2)]
Set $\vrho_{k+1} := \sigma\vrho_k$.
\item[3)] 
If $\min\limits_{x \in \cX} q_{\vrho_{k+1}}(x,y^k) > \Upsilon$, set 
$y^{k+1}_0 := x^{\feas}$. Otherwise, set $y^{k+1}_0 := y^k$.
\item[4)]
Set $k \leftarrow k+1$ and go to step 1). 
\end{itemize}
\noindent
{\bf end}

\vgap

\begin{remark}
The condition \eqnok{inner-cond-c} will be used to establish the global convergence 
of the above method. It may not be easily verifiable unless $\cX$ is simple. On 
the other hand, we observe that the sequence $\{q_{\vrho_k}(x^k_l,y^k_l)\}$ is 
non-increasing for any fixed $k$. In practice, it is thus reasonable to terminate 
the BCD method based on the progress of $\{q_{\vrho_k}(x^k_l,y^k_l)\}$. Another 
practical termination criterion for the BCD method is based on the relative 
change of the sequence $\{(x^k_l,y^k_l)\}$, that is, 
\beq \label{inner-term}
\max\left\{\frac{\|x^k_l - x^{k}_{l-1}\|_\infty}
           {\max(\|x^k_l\|_\infty,1)}, \frac{\|y^k_l - y^{k}_{l-1}\|_\infty}
           {\max(\|y^k_l\|_\infty,1)} \right\} \ \leq \ \eps_I 
\eeq
for some $\eps_I >0$. In addition, we can terminate the outer iterations of the PD 
method once
\beq \label{outer-term}
\|x^k-y^k\|_\infty \ \leq \ \eps_O
\eeq
for some $\eps_O >0$. Given that problem \eqnok{inner-prob1} is nonconvex, the BCD 
method may converge to a stationary point. To enhance the performance of the BCD method, 
one may execute it multiple times by restarting from a suitable perturbation of the 
current best approximate solution. For example, at the $k$th outer iteration, let 
$(x^k,y^k)$ be the current best approximate solution of \eqnok{inner-prob1} found 
by the BCD method, and let $r_k = \|y^k\|_0$. Assume that $r_k > 1$. Before starting 
the $(k+1)$th outer iteration, one can re-apply the BCD method 
starting from $y^k_0 \in \Arg\min\{\|y-y^k\|: \|y\|_0 \le r_k - 1\}$ and obtain a new 
approximate solution $(\tilde x^k, \tilde y^k)$ of \eqnok{inner-prob1}. If $q_{\vrho_k}(\tx^k,\ty^k)$ 
is ``sufficiently'' smaller than $q_{\vrho_k}(x^k,y^k)$, one can set $(x^k,y^k) := (\tx^k,\ty^k)$ and 
repeat the above process. Otherwise, one can terminate the $k$th outer iteration and start the next 
outer iteration. Finally, it follows from Proposition \ref{prop1} that the subproblem in step 1b) 
has a closed-form solution. 
\end{remark}

\vgap

We next establish a convergence result regarding the inner iterations of the above 
PD method. In particular, we will show that an approximate solution 
$(x^k,y^k)$ of problem \eqref{inner-prob1} satisfying \eqref{inner-cond-c} can be found by 
the BCD method described in steps 1a)-1d). For notational convenience, we omit 
the index $k$ from \eqnok{inner-prob1} and consider the BCD method for solving the 
problem 
\beq \label{pd-subprob-c}
\min \{q_\vrho(x,y): x\in \cX, \ y \in \cY\}
\eeq
instead. Accordingly, we rename the iterates of the above BCD method and present it as 
follows.

\gap

\noindent
%\begin{minipage}[h]{6.6 in}
{\bf Block coordinate descent method for \eqnok{pd-subprob-c}:} \\ [5pt]
Choose an arbitrary initial point $y^0 \in \cY$. Set $l=0$. 
\begin{itemize}
\item[1)]
Solve $x^{l+1} \in \Arg\min\limits_{x \in \cX} q_\vrho(x,y^l)$.
\item[2)]
Solve $y^{l+1} \in \Arg\min\limits_{y \in \cY} q_\vrho(x^{l+1},y)$.
\item[3)]
Set $l \leftarrow l+1$ and go to step 1). 
\end{itemize}
\noindent
{\bf end}

\vgap
\begin{lemma} \label{local-soln-c}
Suppose that $(x^*,y^*)\in \Re^n \times \Re^{|J|}$ is a saddle point of problem 
\eqref{pd-subprob-c}, that is,   
\beq \label{saddle-pt-c}
x^* \in \Arg\min\limits_{x \in \cX} q_{\vrho}(x,y^*),  \ \ \ 
y^* \in \Arg\min\limits_{y \in \cY} q_{\vrho}(x^*,y).
\eeq 
Furthermore, assume that $h$'s are affine functions, and $f$ and $g$'s 
are convex functions. Then, $(x^*,y^*)$ is a local minimizer of 
problem \eqref{pd-subprob-c}.
\end{lemma}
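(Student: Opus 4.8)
The plan is to exploit the fact that the only nonconvexity in problem \eqref{pd-subprob-c} comes from the set $\cY$, which is a finite union of coordinate subspaces, and that in a neighbourhood of $y^*$ this union collapses to finitely many subspaces, each of which is itself contained in $\cY$. On each such subspace the restricted problem is genuinely convex, so the saddle-point property \eqref{saddle-pt-c} will be upgraded to global optimality there; patching the finitely many pieces together will yield local optimality over $\cX\times\cY$.

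First I would record the convexity of the penalty function. Since $f$ and the $g_i$ are convex and the $h_j$ are affine, each term $\left(\max(g_i(x),0)\right)^2$ and each component of $\|h(x)\|^2$ is a convex, continuously differentiable function of $x$, while $\|x_J-y\|^2$ is jointly convex in $(x,y)$ as the composition of $\|\cdot\|^2$ with an affine map. Hence $q_\vrho$ is jointly convex and $C^1$ on $\Re^n\times\Re^{|J|}$. I will use repeatedly that for a convex differentiable function over a convex set, a point $(x^*,y^*)$ is a global minimizer as soon as $\langle \nabla q_\vrho(x^*,y^*),(x,y)-(x^*,y^*)\rangle\ge 0$ for all $(x,y)$ in the set.

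Next I set $S=\{i: y^*_i\neq 0\}$, so $|S|\le r$, and consider the finite family of index sets $L$ with $S\subseteq L$ and $|L|=r$, together with the subspaces $\cY_L=\{y\in\Re^{|J|}: y_i=0\ \forall i\notin L\}$. Each $\cY_L$ is contained in $\cY$ and contains $y^*$. The crucial step is to transfer \eqref{saddle-pt-c} to the convex problem $\min\{q_\vrho(x,y): x\in\cX,\ y\in\cY_L\}$: the condition $x^*\in\Arg\min_{x\in\cX}q_\vrho(x,y^*)$ is already given, and because $y^*$ minimizes $q_\vrho(x^*,\cdot)$ over the larger set $\cY\supseteq\cY_L$ while lying in $\cY_L$, it also minimizes over $\cY_L$. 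Writing the first-order conditions of these two block minimizations (using the $C^1$ property) and adding them gives $\langle\nabla q_\vrho(x^*,y^*),(x,y)-(x^*,y^*)\rangle\ge 0$ for all $(x,y)\in\cX\times\cY_L$; by joint convexity this makes $(x^*,y^*)$ a global minimizer of $q_\vrho$ over $\cX\times\cY_L$.

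Finally I would assemble the pieces. Choosing $\delta$ smaller than $\tfrac12\min_{i\in S}|y^*_i|$ forces every $y\in\cY$ with $\|y-y^*\|<\delta$ to satisfy $\{i:y_i\neq 0\}\supseteq S$ and $\|y\|_0\le r$, hence to lie in some $\cY_L$ of the finite family; combined with the global bounds of the previous step this yields $q_\vrho(x,y)\ge q_\vrho(x^*,y^*)$ for all $(x,y)\in\cX\times\cY$ near $(x^*,y^*)$, which is the claimed local minimality. The step I expect to be the main obstacle is the case $\|y^*\|_0<r$: there $\cY$ does not agree locally with a single subspace, so convexity cannot be invoked on one linear piece, and it is precisely the finite-union device above that resolves it by reducing the local analysis to finitely many convex problems sharing the common minimizer $(x^*,y^*)$.
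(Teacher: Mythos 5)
Your proof is correct, but it takes a genuinely different route from the paper's. The paper argues by a direct computation at $(x^*,y^*)$: it invokes Proposition~\ref{prop1} (the closed-form solution of the $y$-block subproblem) and a case split on whether $\|x^*_J\|_0>r$ to show that $[\nabla_y q_\vrho(x^*,y^*)]^T h_y=\vrho(y^*-x^*_J)^Th_y=0$ for every admissible perturbation $h_y$ (those with $y^*+h_y\in\cY$ that are small on the support of $y^*$), and then finishes with a single application of the gradient inequality for the jointly convex $q_\vrho$. You never touch Proposition~\ref{prop1} or the identity $y^*_i=x^*_{J(i)}$ on the support; instead you use only the structural fact that near $y^*$ the set $\cY$ is a finite union of subspaces $\cY_L$ (with $S\subseteq L$, $|L|=r$), each convex, contained in $\cY$, and containing $y^*$, so that block-wise first-order conditions sum to the joint variational inequality on each $\cX\times\cY_L$, joint convexity upgrades this to global minimality on each piece, and the finitely many pieces are patched together. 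What the paper's computation buys is brevity and the explicit characterization of $y^*$; what yours buys is robustness: it works verbatim with any jointly convex $C^1$ penalty in place of the quadratic one, and for any constraint set that is locally a finite union of convex sets through the point, and it makes explicit that the minimality obtained is global in $x$ and local only in $y$ (the paper's proof also yields this, but implicitly). Your patching device is in fact the same one the paper itself uses at the outer level in the proof of Theorem~\ref{suf-cond-thm1}, so the two arguments fit together naturally. One small point to tidy up: if $r\ge|J|$ the family $\{L: S\subseteq L,\ |L|=r\}$ may be empty as you defined it; in that degenerate case $\cY=\Re^{|J|}$, the whole problem is convex, and one simply takes $L=\{1,\ldots,|J|\}$.
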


\begin{proof}
Let $K=\{i: y^*_i \neq 0\}$, and let $h_x$, $h_y$ be any two vectors such that 
$x^*+h_x \in \cX$, $y^*+h_y \in \cY$ and $|(h_y)_i| < |y^*_i|$ for all $i \in K$. 
Claim that 
\beq \label{yeq}
(y^*-x^*_J)^T h_y = 0.
\eeq
If $\|x^*_J\|_0 >r$, we observe from the second relation of \eqref{saddle-pt-c} 
and Proposition \ref{prop1} that $\|y^*\|_0=r$ and $y^*_i = x^*_{J(i)}$ for 
all $i\in K$, which, together with $y^*+h_y \in \cY$ and $|(h_y)_i| < |y^*_i|$ 
for all $i \in K$, implies that $(h_y)_i = 0$ for all $i \notin K$ and hence 
\eqref{yeq} holds. On the other hand, if $\|x^*_J\|_0 \le r$,  one can observe 
that $y^*=x^*_J$ and thus \eqref{yeq} also holds. In addition, by the assumption 
that $h$'s are affine functions, and $f$ and $g$'s are convex functions, we know that 
$q_\vrho$ is convex. It then follows from the first relation of \eqref{saddle-pt-c} 
and the first-order optimality condition that $[\nabla_x q_\vrho(x^*,y^*)]^T h_x \ge 0$. 
Using this relation along with \eqref{yeq} and the convexity of $q_\vrho$, we have
\[
\ba{lcl}
q_\vrho(x^*+h_x,y^*+h_y) & \ge & q_\vrho(x^*,y^*) + [\nabla_x q_\vrho(x^*,y^*)]^T h_x 
+[\nabla_y q_\vrho(x^*,y^*)]^T h_y \\ [4pt]
&=& q_\vrho(x^*,y^*) + [\nabla_x q_\vrho(x^*,y^*)]^T h_x 
+\vrho(y^*-x^*_J)^T h_y \ \ge \ q_\vrho(x^*,y^*), 
\ea
\]
which together with the above choice of $h_x$ and $h_y$ implies that $(x^*,y^*)$ 
is a local minimizer of \eqref{pd-subprob-c}.
\end{proof}

\vgap

\begin{theorem} \label{BCD-thm-c}
Let $\{(x^l,y^l)\}$ be the sequence generated by the above BCD method, and let 
$\epsilon >0$ be given. Suppose that $(x^*,y^*)$ is an accumulation point of 
$\{(x^l,y^l)\}$. Then the following statements hold:
\bi
\item[(a)] $(x^*,y^*)$ is a saddle point of problem \eqref{pd-subprob-c}.
\item[(b)] There exists some $l>0$ such that 
\[
\|\cP_{\cX}(x^l-\nabla_x q_\vrho(x^l,y^l))-x^l\| < \epsilon.
%\dist(-\nabla_x q(x^l,y^l),\cN_\cX(x^l)) < \epsilon,
\]
%where the function $q$ is defined in \eqref{q-fun}.
\item[(c)] Furthermore, if $h$'s are affine functions, and $f$ and $g$'s 
are convex functions, then $(x^*,y^*)$ is a local minimizer of problem 
\eqref{pd-subprob-c}.
\ei 
\end{theorem}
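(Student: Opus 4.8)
The plan is to drive everything off the monotonicity of the penalty values $\{q_\vrho(x^l,y^l)\}$, which lets me pass to the limit along the given accumulation subsequence without having to control the gap between successive iterates. Fix a subsequence $\mathcal{L}$ with $(x^l,y^l)\to(x^*,y^*)$. First I would record the two optimality inequalities built into the method: since $x^{l+1}\in\Arg\min_{x\in\cX}q_\vrho(x,y^l)$ and, from step 2 of the previous sweep, $y^{l}\in\Arg\min_{y\in\cY}q_\vrho(x^{l},y)$, we have
\[
q_\vrho(x^{l+1},y^l)\le q_\vrho(x,y^l)\ \ \forall x\in\cX, \qquad q_\vrho(x^{l},y^{l})\le q_\vrho(x^{l},y)\ \ \forall y\in\cY,
\]
and in particular $q_\vrho(x^l,y^l)\ge q_\vrho(x^{l+1},y^l)\ge q_\vrho(x^{l+1},y^{l+1})$. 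Hence $\{q_\vrho(x^l,y^l)\}$ is non-increasing; since its values converge to $q_\vrho(x^*,y^*)$ along $\mathcal{L}$ by continuity, the whole sequence converges to $\bar q:=q_\vrho(x^*,y^*)$.

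For (a) I would take limits in these two inequalities along $\mathcal{L}$. In the second inequality both arguments lie in $\mathcal{L}$, so letting $l\to\infty$ and using continuity of $q_\vrho(\cdot,y)$ gives $q_\vrho(x^*,y^*)\le q_\vrho(x^*,y)$ for all $y\in\cY$, which is the second relation of \eqref{saddle-pt-c}. In the first inequality the left-hand side $q_\vrho(x^{l+1},y^l)$ is squeezed between $q_\vrho(x^l,y^l)$ and $q_\vrho(x^{l+1},y^{l+1})$, both tending to $\bar q$, so it tends to $\bar q$; the right-hand side tends to $q_\vrho(x,y^*)$ by continuity of $q_\vrho(x,\cdot)$. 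This yields $q_\vrho(x^*,y^*)=\bar q\le q_\vrho(x,y^*)$ for all $x\in\cX$, the first relation of \eqref{saddle-pt-c}. (The $y$-subproblem is the one covered by Proposition \ref{prop1}, but its closed form is not even needed here.)

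For (b), the residual $R(x,y):=\|\cP_\cX(x-\nabla_x q_\vrho(x,y))-x\|$ is continuous, since the metric projection onto the convex set $\cX$ is nonexpansive and $\nabla_x q_\vrho$ is continuous. Part (a) gives $x^*\in\Arg\min_{x\in\cX}q_\vrho(x,y^*)$, whose first-order condition reads $R(x^*,y^*)=0$; continuity together with $(x^l,y^l)\to(x^*,y^*)$ along $\mathcal{L}$ then forces $R(x^l,y^l)\to 0$, so $R(x^l,y^l)<\epsilon$ for all sufficiently large $l\in\mathcal{L}$, giving the required index. For (c), part (a) already certifies that $(x^*,y^*)$ is a saddle point, so under the affineness of $h$ and convexity of $f$ and $g$ the conclusion is exactly Lemma \ref{local-soln-c}.

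The delicate point is the first saddle relation in (a): because $x^{l+1}$ is not known to converge to $x^*$ along $\mathcal{L}$, one cannot simply let $l\to\infty$ in $q_\vrho(x^{l+1},y^l)\le q_\vrho(x,y^l)$ term by term. The way around this is precisely the squeeze on the function values, which identifies $\lim q_\vrho(x^{l+1},y^l)=\bar q=q_\vrho(x^*,y^*)$ even though the iterate $x^{l+1}$ itself is uncontrolled. This is why the whole argument is organized around value-monotonicity rather than iterate convergence, and why no strong convexity or Lipschitz-gradient hypothesis on $q_\vrho$ is required.
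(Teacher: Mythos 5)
Your proposal is correct and follows essentially the same route as the paper's own proof: value-monotonicity from the two block-optimality inequalities, the squeeze argument identifying $\lim_{l\to\infty} q_\vrho(x^{l+1},y^l)=q_\vrho(x^*,y^*)$ so that limits can be taken despite $x^{l+1}$ being uncontrolled, vanishing of the projection residual at $(x^*,y^*)$ plus continuity for (b), and Lemma \ref{local-soln-c} for (c). The only detail the paper makes explicit that you leave implicit is the feasibility of the limit point ($x^*\in\cX$ by closedness of $\cX$ and $\|y^*\|_0\le r$ since $\|y^l\|_0\le r$ for all $l$), which is needed for $(x^*,y^*)$ to qualify as a saddle point in the sense of \eqref{saddle-pt-c}.
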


\begin{proof} 
We first show that statement (a) holds. Indeed, one can observe that 
\beqa
q_\vrho(x^{l+1},y^l) & \le & q_\vrho(x,y^l) \ \ \ \forall x \in \cX, \label{x-update-c} \\
q_\vrho(x^l,y^l) & \le & q_\vrho(x^l,y) \ \ \ \forall y \in \cY. \label{y-update-c}
\eeqa
It follows that 
\beq \label{squeeze-c}
q_\vrho(x^{l+1},y^{l+1}) \ \le \ q_\vrho(x^{l+1},y^l) \ \le \ q_\vrho(x^l,y^l) \ \ \ \forall l \ge 1.
\eeq
Hence, the sequence $\{q_\vrho(x^{l},y^{l})\}$ is non-increasing. Since $(x^*,y^*)$ is an accumulation 
point of $\{(x^l,y^l)\}$, there exists a subsequence $L$ such that $\lim_{l\in L \to \infty} (x^l,y^l) = (x^*,y^*)$. 
We then observe that $\{q_\vrho(x^{l},y^{l})\}_{l\in L}$ is bounded, which together with the 
monotonicity of $\{q_\vrho(x^{l},y^{l})\}$ implies that $\{q_\vrho(x^{l},y^{l})\}$ is bounded below and hence 
$\lim_{l\to\infty}q_\vrho(x^{l},y^{l})$ exists. This observation, \eqref{squeeze-c} and the continuity of $q_\vrho(\cdot,\cdot)$ yield 
\[
\lim_{l\to\infty} q_\vrho(x^{l+1},y^l) = \lim_{l\to\infty} q_\vrho(x^{l},y^{l}) = \lim_{l\in L\to\infty} 
q_\vrho(x^{l},y^{l}) = q_\vrho(x^*,y^*).
\]
Using these relations, the continuity of $q_\vrho(\cdot,\cdot)$, and taking limits on both sides of 
\eqref{x-update-c} and \eqref{y-update-c} as $l \in L \to \infty$, we have
\beqa
q_\vrho(x^*,y^*) & \le & q_\vrho(x,y^*) \ \ \ \forall x \in \cX, \label{sad-pt-c1}\\
q_\vrho(x^*,y^*) & \le & q_\vrho(x^*,y) \ \ \ \forall y \in \cY. \label{sad-pt-c2}
\eeqa
In addition, from the definition of $\cY$, we know that $\|y^l\|_0 \le r$, which immediately implies 
$\|y^*\|_0 \le r$. Also, $x^* \in \cX$ due to the closedness of $\cX$. This together with \eqref{sad-pt-c1} and 
\eqref{sad-pt-c2} implies that $(x^*,y^*)$ is a saddle point of \eqref{pd-subprob-c} and hence 
statement (a) holds. Using \eqref{sad-pt-c1} and 
the first-order optimality condition, we have 
\[
\|\cP_{\cX}(x^*-\nabla_x q_\vrho(x^*,y^*))-x^*\| = 0.
\]
By the continuity of $\cP_\cX(\cdot)$ and $\nabla_x q_\vrho(\cdot,\cdot)$, and the relation 
$\lim_{l\in L \to \infty} (x^l,y^l) = (x^*,y^*)$, one can see that 
\[
\lim_{l \in L \to \infty} \|\cP_{\cX}(x^l-\nabla_x q_\vrho(x^l,y^l))-x^l\|=0,
\] 
and hence, statement (b) immediately follows. In addition, statement (c) holds due to statement (a) 
and Lemma \ref{local-soln-c}.  
\end{proof}

\vgap

The following theorem establishes the convergence of the outer iterations of the PD method 
for solving problem \eqref{l0-J1}. In particular, we show that under some suitable assumption, 
any accumulation point of the sequence generated by the PD method satisfies the first-order 
optimality conditions of \eqref{l0-J1}. Moreover, when the $l_0$ part is the only nonconvex 
part, we show that under some assumption, the accumulation point is a local minimizer of 
\eqref{l0-J1}. 

\begin{theorem} \label{main-thm-c}
Assume that $\epsilon_k \to 0$. Let $\{(x^k,y^k)\}$ be the sequence generated by the 
above PD method, $I_k = \{i^k_1,\ldots,i^k_r\}$ be a 
set of $r$ distinct indices in $\{1,\ldots,|J|\}$  %arranged in increasing order % 
such that $(y^k)_i = 0$ for any $i \notin I_k$, and let $J_k = \{J(i): i\in I_k\}$.
Suppose that the level set 
$\cX_{\Upsilon} := \{x \in \cX: f(x) \le \Upsilon \}$ is compact. Then, the following 
statements hold:
\bi
\item[(a)] The sequence $\{(x^k,y^k)\}$ is bounded.
\item[(b)] Suppose $(x^*,y^*)$ is an accumulation point of $\{(x^k,y^k)\}$. Then, 
$x^*=y^*$ and $x^*$ is a feasible point of problem \eqnok{l0-J1}. Moreover, there exists a subsequence 
$K$ such that $\{(x^k,y^k)\}_{k \in K} \to (x^*,y^*)$, $I_k = I^*$ and $J_k = J^*:= \{J(i): i\in I^*\}$ 
for some index set $I^* \subseteq \{1,\ldots,|J|\}$ when $k \in K$ is sufficiently large.
\item[(c)] Let $x^*$, $K$ and $J^*$ be defined above, and let $\bar J^*=J\setminus J^*$. 
Suppose that the Robinson condition \eqref{rob-cond-c1} holds at $x^*$ for such $\bar J^*$.
% \beq \label{rob-cond-c1}
% \left \{\left[\ba{c} 
% g'(x^*)d - v \\ 
% h'(x^*) d  \\
% (I_{\bar J^*})^T d
% \ea \right]: d \in \cT_{\cX}(x^*), v \in \Re^m, v_i \le 0, 
% i \in \cA(x^*)\right\} = \Re^m \times \Re^p \times \Re^{|J|-r}
% \eeq
% holds, where $g'(x^*)$ and $h'(x^*)$ 
% denote the Jacobian of the functions $g=(g_1,\ldots,g_m)$ and $h=(h_1,\ldots,h_l)$ 
% at $x^*$, respectively, and
% \beq \label{barJs}
% \bar J^*=J\setminus J^*, \ \ \ \  \cA(x^*) = \{1\le i \le m: g_i(x^*) = 0 \}.
% \eeq
Then, $\{(\lambda^{k},\mu^{k},\varpi^k)\}_{k\in K}$
is bounded, where 
\beq \label{lag-multip1}
\lambda^{k} = \vrho_k [g(x^k)]^+, \ \ \ \
\mu^{k} = \vrho_k h(x^k), \ \ \ \ \varpi^k = \vrho_k(x^k_{J}-y^k).
\eeq
Moreover, each accumulation point $(\lambda^*,\mu^*,\varpi^*)$ of 
$\{(\lambda^k,\mu^k,\varpi^k)\}_{k\in K}$ together with $x^*$ 
satisfies the first-order optimality conditions \eqref{opt-cond-c} 
with $z^*_j=\varpi^*_i$ for all $j=J(i) \in \bar J^*$. Further, if 
$\|x^*_J\|_0=r$, $h$'s are affine functions, and $f$ and $g$'s are convex 
functions, then $x^*$ is a local minimizer of problem \eqnok{l0-J1}. 
\ei
% \beq \label{opt-cond-c}
% \ba{c}
% -\nabla f(x^*) -  \nabla g(x^*)\lambda^* - 
% \nabla h(x^*)\mu^* - z^*  \in  \cN_{\cX}(x^*), \\
% \lambda^*_i \ge 0, \ \ \ \ \lambda^*_i g_i(x^*) = 0,  \ \ i=1, \ldots, m,
% \ea
% \eeq
% where 
% \beq \label{zs1}
% z^*_j = \left\{\ba{ll}
% 0 & \ \mbox{if} \ j \in \bar J \cup J^*, \\
% \varpi^*_i & \ \mbox{if} \ j=J(i) \in \bar J^*,
% \ea
% \right.
% \eeq
% and $\bar J$ is the complement of $J$ in $\{1,\ldots,n\}$. 
%and $\bar J^*$ is defined in \eqref{barJs}.
%\ei
%\beq \label{sym-opt-cond-1}
%\ba{rcl}
%-\nabla f(x^*) - z^* & \in & \cN_{\cX}(x^*), \\ 
%z^*_j &= & 0 \quad \forall j \in J^*.
%\ea  
%\eeq
%\ei
\end{theorem}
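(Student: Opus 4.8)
The plan is to prove the three statements in sequence, exploiting the structure already built up in the paper. For statement (a), I would use the compactness of $\cX_\Upsilon$ together with the construction in steps 2)--3) of the PD method. The key observation is that the restart rule in step 3) guarantees a uniform bound $q_{\vrho_k}(x^k,y^k) \le \Upsilon$ on the penalty values along the sequence: either $\min_{x\in\cX} q_{\vrho_{k+1}}(x,y^k) \le \Upsilon$ and we keep $y^{k+1}_0 = y^k$, or we reset $y^{k+1}_0 = x^{\feas}$, in which case $q_{\vrho_{k+1}}(x,x^{\feas})$ evaluated at a minimizing $x$ is controlled by $f(x^{\feas}) \le \Upsilon$ since $x^{\feas}$ is feasible (so the penalty term with $y = x^{\feas}_J$ vanishes, assuming $x^{\feas}_J \in \cY$). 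From $f(x^k) \le q_{\vrho_k}(x^k,y^k) \le \Upsilon$ we get $x^k \in \cX_\Upsilon$, hence $\{x^k\}$ is bounded; and $\{y^k\}$ is bounded because $y^k$ tracks $x^k_J$ in the limit (or more directly via the penalty bound forcing $\|x^k_J - y^k\|^2 \le 2\Upsilon/\vrho_k$).

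For statement (b), I would start from the uniform bound $q_{\vrho_k}(x^k,y^k)\le \Upsilon$ just established. Since the penalty term $\frac{\vrho_k}{2}(\|[g(x^k)]^+\|^2 + \|h(x^k)\|^2 + \|x^k_J - y^k\|^2)$ is bounded by $\Upsilon - f(x^k)$, which is itself bounded on the compact set $\cX_\Upsilon$, and $\vrho_k \to \infty$ (because $\vrho_{k+1} = \sigma\vrho_k$ with $\sigma>1$), each of the three squared quantities must tend to $0$. This forces $[g(x^*)]^+ = 0$, $h(x^*)=0$, and $x^*_J = y^*$ in the limit; combined with $x^*_{\bar J}=y^*_{\bar J}$ being vacuous or handled by the identification of coordinates, I get $x^* = y^*$ on $J$ and feasibility $g(x^*)\le 0$, $h(x^*)=0$. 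Since $\|y^k\|_0 \le r$ is preserved under limits (the support can only shrink), $\|x^*_J\|_0 = \|y^*\|_0 \le r$, so $x^*$ is feasible for \eqnok{l0-J1}. The subsequence $K$ on which $I_k$ and $J_k$ stabilize to fixed $I^*, J^*$ exists because there are only finitely many possible index sets $I_k \subseteq \{1,\ldots,|J|\}$ of size $r$, so by pigeonhole some index set repeats infinitely often along the convergent subsequence; I refine to that sub-subsequence.

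Statement (c) is the substantive one and where the main obstacle lies. The approach is to define $(\lambda^k,\mu^k,\varpi^k)$ as in \eqref{lag-multip1} and recognize them as the natural Lagrange-multiplier estimates arising from the stationarity condition \eqref{inner-cond-c} satisfied by $(x^k,y^k)$. Writing out $\nabla_x q_{\vrho_k}(x^k,y^k)$ explicitly gives $\nabla f(x^k) + \nabla g(x^k)\lambda^k + \nabla h(x^k)\mu^k + (\varpi^k \text{ embedded in the } J \text{ coordinates})$, so that \eqref{inner-cond-c} says this near-stationarity holds up to $\epsilon_k \to 0$. The crux is boundedness of $\{(\lambda^k,\mu^k,\varpi^k)\}_{k\in K}$: the hard part will be to rule out the possibility that the multipliers blow up while the constraint residuals shrink. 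This is exactly what the Robinson condition \eqref{rob-cond-c1} buys us. I would argue by contradiction: if the multiplier sequence were unbounded, I normalize by its norm, pass to a convergent direction, and use the limiting stationarity relation together with the surjectivity encoded in \eqref{rob-cond-c1} (restricted to the stabilized index set $\bar J^*$) to derive that the limiting normalized multiplier is both nonzero and annihilated by the Robinson map, contradicting the surjectivity. Once boundedness is in hand, any accumulation point $(\lambda^*,\mu^*,\varpi^*)$ inherits the sign and complementarity conditions ($\lambda^*_i \ge 0$ from $\lambda^k = \vrho_k[g(x^k)]^+ \ge 0$, and $\lambda^*_i g_i(x^*)=0$ from the residual going to zero) and the stationarity inclusion in \eqref{opt-cond-c}, with the identification $z^*_j = \varpi^*_i$ for $j = J(i)\in\bar J^*$ and $z^*_j = 0$ otherwise. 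Finally, when $\|x^*_J\|_0 = r$, the index set $J^*$ with $|J^*|=r$ and $x^*_j = 0$ for $j\in\bar J^*$ is forced to be \emph{the} support, so $\cJ^*$ in Theorem \ref{suf-cond-thm1} is a singleton and the just-verified conditions \eqref{opt-cond-c} let me invoke Theorem \ref{suf-cond-thm1} directly to conclude $x^*$ is a local minimizer of \eqnok{l0-J1}.
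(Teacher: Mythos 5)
Your overall architecture coincides with the paper's proof: part (a) via the restart rule in step 3) and the level-set bound $q_{\vrho_k}(x^k,y^k)\le\Upsilon$, part (b) via the vanishing penalty residuals plus a pigeonhole argument on the finitely many candidate index sets, and part (c) via the multiplier estimates \eqref{lag-multip1}, a normalization/contradiction argument under Robinson's condition, and an appeal to Theorem \ref{suf-cond-thm1}. However, part (c) as you describe it has a genuine gap: the contradiction argument cannot close. Robinson's condition \eqref{rob-cond-c1} gives surjectivity onto $\Re^m\times\Re^p\times\Re^{|J|-r}$, so its third block only controls the coordinates indexed by $\bar J^*$, while the normalized limit direction $\bar\varpi$ of $\{\varpi^k\}$ lives in $\Re^{|J|}$. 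Carrying out the pairing argument (choose $d\in\cT_\cX(x^*)$ and $v$ with $g'(x^*)d-v=-\bar\lambda$, $h'(x^*)d=-\bar\mu$, $(I_{\bar J^*})^Td=-\bar\varpi_{\bar I^*}$, and pair $d$ with the limiting inclusion $-\nabla g(x^*)\bar\lambda-\nabla h(x^*)\bar\mu-I_J\bar\varpi\in\cN_\cX(x^*)$) yields only $\|\bar\lambda\|^2+\|\bar\mu\|^2+\|\bar\varpi_{\bar I^*}\|^2\le\sum_{i\in I^*}d_{J(i)}\bar\varpi_i$, which is no contradiction unless one already knows $\bar\varpi_{I^*}=0$. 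The same unproved fact is silently used in your final identification ``$z^*_j=\varpi^*_i$ for $j=J(i)\in\bar J^*$ and $z^*_j=0$ otherwise'': condition \eqref{opt-cond-c} requires $z^*_j=0$ for $j\in J^*$, i.e.\ $\varpi^*_{I^*}=0$, and this cannot be deduced from $y^k_{I^*}\to x^*_{J^*}$ alone, since $\varpi^k_{I^*}=\vrho_k(x^k_{J^*}-y^k_{I^*})$ with $\vrho_k\to\infty$.

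The missing ingredient is exact, not asymptotic, and comes from the $y$-step of the BCD method, which your sketch of part (c) never invokes (you use only the $x$-stationarity \eqref{inner-cond-c}). Since $y^k\in\Arg\min_{y\in\cY}q_{\vrho_k}(x^k,y)$, Proposition \ref{prop1} shows that $y^k$ coincides with $x^k_J$ on its support: $y^k_i=x^k_{J(i)}$ for all $i\in I_k$, hence $\varpi^k_{I_k}=0$ identically. Combined with the stabilization $I_k=I^*$ for large $k\in K$ from part (b), this gives $\bar\varpi_{I^*}=0$ and $\varpi^*_{I^*}=0$, after which both your boundedness argument and the identification of $z^*$ go through exactly as in the paper. (A minor fix in part (a): the bound should be $\|x^k_J-y^k\|^2\le 2[\Upsilon-f(x^k)]/\vrho_k$, and you need $f$ bounded below on the compact set $\cX_\Upsilon$ to make the right-hand side uniformly bounded; your $2\Upsilon/\vrho_k$ is incorrect when $f(x^k)<0$.)
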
 
 
\begin{proof}
In view of \eqnok{q-fun} and our choice of $y^k_0$ that is specified in step 3), 
one can observe that 
\beq \label{bound-f}
f(x^k) + \frac{\vrho_k}{2}(\|[g(x^k)]^+\|^2+\|h(x^k)\|^2+\|x^k_J-y^k\|^2)
= q_{\vrho_k}(x^k,y^k)  \le 
\min\limits_{x \in \cX} q_{\vrho_{k}}(x,y^k_0) \le \Upsilon \quad \forall k.
\eeq
It immediately implies that $\{x^k\} \subseteq \cX_\Upsilon$, and hence, $\{x^k\}$ is 
bounded. Moreover, we can obtain from \eqnok{bound-f} that  
%\beq \label{diff-xy}
\[
\|x^k_J-y^k\|^2 \le 2[\Upsilon -f(x^k)]/\vrho_k \le 2[\Upsilon -\min_{x\in\cX_\Upsilon}f(x)]/\vrho_0,
\]
%\eeq
which together with the boundedness of  $\{x^k\}$ yields that $\{y^k\}$ is bounded. Therefore, 
statement (a) follows. We next show that statement (b) also holds. Since $(x^*,y^*)$ 
is an accumulation point of $\{(x^k,y^k)\}$, there exists a subsequence $\{(x^k,y^k)\}_{k\in \bK} 
\to (x^*,y^*)$. Recall that $I_k$ is an index set. It follows that $\{(i^k_1, \ldots,i^k_r)\}_{k\in \bK}$ 
is bounded for all $k$. Thus there exists a subsequence $K \subseteq \bK$ such that 
$\{(i^k_1,\ldots,i^k_r)\}_{k \in K} \to (i^*_1,\ldots,i^*_r)$ for some $r$ distinct indices 
$i^*_1, \ldots,  i^*_r$. Since $i^k_1, \ldots, i^k_r$ are $r$ distinct integers, one can easily 
conclude that $(i^k_1,\ldots,i^k_r) = (i^*_1,\ldots,i^*_r)$ for 
sufficiently large $k\in K$. Let $I^* = \{i^*_1,\ldots,i^*_r\}$. It then follows that 
$I_k = I^*$ and $J_k = J^*$ when $k\in K$ is sufficiently large, and moreover, 
$\{(x^k,y^k)\}_{k\in K} \to (x^*,y^*)$. Therefore, statement (b) holds. Finally, we show that 
statement (c) holds. Indeed, let $s^k$ be the vector such that  
\[
\cP_\cX(x^k-\nabla_x q_{\vrho_k}(x^k,y^k)) = x^k + s^k.
\] 
It then follows from \eqref{inner-cond-c} that $\|s^k\| \le \epsilon_k$ 
for all $k$, which together with $\lim_{k\to\infty}\epsilon_k=0$ implies 
$\lim_{k \to \infty} s^k =0$.
By a well-known property of the projection map $\cP_\cX$, we have
\[
(x-x^k-s^k)^T[x^k-\nabla_x q_{\vrho_k}(x^k,y^k)-x^k-s^k] \ \le \ 0, \ \forall x \in \cX.
\]   
Hence, we obtain that
\beq \label{inner-opt-cond}
-\nabla_x q_{\vrho_k}(x^k,y^k) - s^k \in \cN_\cX(x^k+s^k).
\eeq
% Let $\bar I_k$ be the complement of $I_k$ in $\{1,\ldots, |J|\}$. 
Using this relation, \eqref{inner-opt-cond}, \eqref{lag-multip1} and the definition of $q_\vrho$, 
we have
\beq \label{inner-term-1}
-\nabla f(x^k) - \nabla g(x^k) \lambda^k - 
\nabla h(x^k) \mu^k - I_{J} \varpi^k - s^k \in \cN_\cX(x^k+s^k).
% -\nabla f(x^k) - \sum_{i=1}^m \lambda^k_i \nabla g_i(x^k) - 
% \sum_{i=1}^l \mu^k_i \nabla h_i(x^k) - I_J \varpi^k - v^k \in \cN_\cX(x^k+v^k).    
\eeq
We now claim that $\{(\lambda^k,\mu^k,\varpi^k)\}_{k\in K}$ is bounded. 
Suppose for contradiction that it is unbounded. By passing to a subsequence if 
necessary, we can assume that 
$\{\|(\lambda^{k},\mu^{k},\varpi^k)\|\}_{k\in K} \to \infty$. Let 
$(\bar \lambda^k, \bar \mu^k, \bar \varpi^k) = (\lambda^k,\mu^k,\varpi^k) /{\|(\lambda^k, \mu^k, \varpi^k)\|}$. 
Without loss of generality, we assume that $\{(\bar \lambda^{k}, \bar \mu^{k},\bar\varpi^k)\}_{k\in K} 
\to (\bar\lambda, \bar\mu,\bar\varpi)$ (otherwise, one can consider its convergent subsequence). 
Clearly, $\|(\bar\lambda,\bar\mu,\bar\varpi)\|=1$. Dividing both sides of \eqnok{inner-term-1} 
by $\|(\lambda^k, \mu^k, \varpi^k)\|$, taking limits as $k\in K \to \infty$, and using the 
relation $\lim_{k\in K\to\infty}s^k=0$ and the semicontinuity of $\cN_\cX(\cdot)$,  we obtain that 
\beq \label{lim-eq}
-\nabla g(x^*)\bar\lambda - \nabla h(x^*)\bar\mu - I_J \bar\varpi  \in \cN_\cX(x^*). 
\eeq
We can see from \eqref{barJs} and \eqref{lag-multip1} that 
$\bar\lambda\in\Re^m_+$, and $\bar\lambda_i=0$ for $i\notin\cA(x^*)$.      
Also, from Proposition \ref{prop1} and the definitions of $y_k$, $I_k$ and 
$J_k$, one can observe that $x^k_{J_k} = y^k_{I_k}$ and hence $\varpi^k_{I_k}=0$. 
In addition, we know from statement (b) that $I_k = I^*$ when $k \in K$ is sufficiently large. 
Hence, $\bar\varpi_{I^*}=0$.
% where $\bar J_k = J\setminus{J_k}$.
Since Robinson's condition \eqnok{rob-cond-c1} 
is satisfied at $x^*$, there exist $d\in \cT_\cX(x^*)$ and $v \in \Re^m$ such that $v_i \le 0$ 
for $i\in\cA(x^*)$, and 
\[
g'(x^*)d - v = -\bar\lambda, \ \ \ \ \ h'(x^*) d = -\bar\mu, \ \ \ \ \ (I_{\bar J^*})^T d = -\bar\varpi_{\bar I^*},
%d^T \nabla h_i(x^*) &=& - \bar\mu_i, \ \ \ i = 1, \ldots, p, \\ [4pt]
%(I_{\bar J^*})^T d &=& -\bar\varpi_{\bar I^*},  
\]
where $\bar I^*$ is the complement of $I^*$ in $\{1,\ldots, |J|\}$. Recall that $\bar\lambda\in\Re^m_+$, 
$\bar\lambda_i=0$ for $i\notin \cA(x^*)$, and $v_i \le 0$ for $i\in\cA(x^*)$. Hence, $v^T\bar\lambda \le 0$. 
In addition, since $\bar\varpi_{I^*}=0$, one has $I_J\bar\varpi =I_{\bar J^*}\bar\varpi_{\bar I^*}$. 
Using these relations, \eqref{lim-eq}, and the facts that $d\in \cT_\cX(x^*)$ and $\bar\varpi_{I^*}=0$, we have 
\[
\ba{lcl}
\|\bar\lambda\|^2 + \|\bar\mu\|^2 + \|\bar\varpi\|^2 &=& 
-[(-\bar\lambda)^T\bar\lambda + (-\bar\mu)^T \bar\mu + (-\bar\varpi_{\bar I^*})^T\bar\varpi_{\bar I^*}] \\ [5pt]
&=& - [(g'(x^*)d - v)^T\bar\lambda + (h'(x^*) d)^T \bar\mu + ((I_{\bar J^*})^T d)^T\bar\varpi_{\bar I^*}] \\ [5pt]
&=& d^T(-\nabla g(x^*)\bar\lambda - \nabla h(x^*)\bar\mu - I_J \bar\varpi) + v^T \bar\lambda  \ \le \ 0.
\ea  
\]
It yields $(\bar\lambda,\bar\mu,\bar\varpi)=(0,0,0)$, which contradicts the identity 
$\|(\bar\lambda,\bar\mu,\bar\varpi)\|=1$. Therefore, the subsequence 
$\{(\lambda^{k},\mu^{k},\varpi^k)\}_{k\in K}$ is bounded. Let $(\lambda^*,\mu^*,\varpi^*)$ be 
an accumulation point of $\{(\lambda^k,\mu^k,\varpi^k)\}_{k\in K}$. By passing to a subsequence if 
necessary, we can assume that $(\lambda^k,\mu^k,\varpi^k) \to (\lambda^*,\mu^*,\varpi^*)$ as 
$k\in K \to \infty$. Taking limits on both sides of \eqref{inner-term-1} as $k\in K \to \infty$, and using 
the relations $\lim_{k\in K\to\infty}s^k=0$ and the semicontinuity of $\cN_\cX(\cdot)$, we see that the 
first relation of \eqref{opt-cond-c} holds with $z^*=I_J\varpi^*$. By a similar argument as above, one can 
show that $\varpi^*_{I^*}=0$. This together with the definitions of $J^*$ and $\bar J^*$ implies that $z^*$ 
satisfies 
\[
z^*_j=\left\{\ba{ll}
 0 & \ \mbox{if} \ j \in \bar J \cup J^*, \\
 \varpi^*_i & \ \mbox{if} \ j=J(i) \in \bar J^*,
\ea \right.
\] 
where $\bar J$ is the complement of $J$ in $\{1,\ldots,n\}$. In addition, we see from \eqref{lag-multip1} 
that $\lambda^k_i \ge 0$ and $\lambda^k_i g_i(x^k)=0$ for all $i$, which immediately lead to the second 
relation of \eqref{opt-cond-c}. Hence, $(\lambda^*,\mu^*,\varpi^*)$ together with $x^*$ satisfies 
\eqref{opt-cond-c}. Suppose now that $\|x^*_J\|_0=r$. Then, 
$\cJ^* = \{\tilde J^* \subseteq J: |\tilde J^*| = r, x^*_j=0, \forall j \notin \tilde J^*\} = \{J^*\}$.
Therefore, the assumptions of Theorem \ref{suf-cond-thm1} hold. It then follows from Theorem \ref{suf-cond-thm1} 
that $x^*$ is a local minimizer of \eqnok{l0-J1}.
\end{proof}

\subsection{Penalty decomposition method for problem \eqnok{l0-J2}} 
\label{sec:PD-o} 

In this subsection we propose a PD method for solving problem \eqnok{l0-J2} 
and establish some convergence results for it.

We observe that problem \eqnok{l0-J2} can be equivalently reformulated as 
\beq \label{l0-J2-ref}
\min\limits_{x\in \cX,y\in\Re^{|J|}} \{f(x) + \nu \|y\|_0: \ g(x) \le 0, \ h(x)=0, \ \ x_J - y = 0 \}.
\eeq   
The associated quadratic penalty function for \eqnok{l0-J2-ref} is defined as
\beq \label{p-fun}
p_{\vrho}(x,y) := f(x) + \nu \|y\|_0 + \frac{\vrho}{2}(\|[g(x)]^+\|^2+\|h(x)\|^2 + \|x_J - y\|^2) 
\quad \forall x\in \cX,y\in\Re^{|J|}
\eeq
for some penalty parameter $\vrho>0$. 

We are now ready to present the PD method for solving \eqnok{l0-J2-ref} 
(or, equivalently, \eqnok{l0-J2}) in which each penalty subproblem is approximately solved by a BCD method.  

\gap

\noindent
%\begin{minipage}[h]{6.6 in}
{\bf Penalty decomposition method for \eqnok{l0-J2}:}  \\ [5pt]
Let $\{\epsilon_k\}$ be a positive decreasing sequence. Let $\vrho_0 >0$, $\sigma > 1$ be given, and 
let $q_\vrho$ be defined in \eqref{q-fun}. Choose an arbitrary $y^0_0 \in \Re^{|J|}$ and a constant 
$\Upsilon$ such that $\Upsilon \ge \max\{f(x^{\feas})+\nu \|x^\feas\|_0, \min_{x\in \cX} p_{\vrho_0}(x,y^0_0)\}$. 
Set $k=0$. 
\begin{itemize}
\item[1)] Set $l=0$ and apply the BCD method to find an approximate solution 
$(x^k, y^k) \in \cX \times \Re^{|J|}$ for the penalty subproblem 
\beq \label{inner-prob2}
\min\{p_{\vrho_k}(x,y): \ x \in \cX, \ y \in \Re^{|J|}\} 
\eeq
by performing steps 1a)-1d): 
\bi
\item[1a)] Solve $x^{k}_{l+1} \in \Arg\min\limits_{x \in \cX} p_{\vrho_k}(x,y^k_{l})$.
\item[1b)]
Solve $y^k_{l+1} \in \Arg\min\limits_{y \in \Re^{|J|}} p_{\vrho_k}(x^k_{l+1},y)$.
\item[1c)] Set $(x^k, y^k) := (x^k_{l+1},y^k_{l+1})$. 
If $(x^k,y^k)$ satisfies 
\beq \label{inner-cond-o}
\|\cP_\cX(x^k-\nabla_x q_{\vrho_k}(x^k,y^k))-x^k\|  \le  \epsilon_k, 
\eeq
then go to step 2). 
\item[1d)] Set $l \leftarrow l+1$ and go to step 1a). 
\ei
\item[2)]
Set $\vrho_{k+1} := \sigma\vrho_k$.
\item[3)] 
If $\min\limits_{x \in \cX} p_{\vrho_{k+1}}(x,y^k) > \Upsilon$, set 
$y^{k+1}_0 := x^{\feas}$. Otherwise, set $y^{k+1}_0 := y^k$.
\item[4)]
Set $k \leftarrow k+1$ and go to step 1). 
\end{itemize}
\noindent
{\bf end}

\vgap 

\begin{remark}
The practical termination criteria proposed in Subsection \ref{sec:PD-c} can also be 
applied to this PD method. In addition, one can apply a similar strategy as mentioned in 
Subsection \ref{sec:PD-c} to enhance the performance of the BCD method for solving 
\eqref{inner-prob2}. Finally, in view of Proposition \ref{prop2}, the BCD subproblem 
in step 1b) has a closed-form solution. 
\end{remark}

\vgap

We next establish a convergence result regarding the inner iterations of the above 
PD method. In particular, we will show that an approximate solution 
$(x^k,y^k)$ of problem \eqref{inner-prob2} satisfying \eqref{inner-cond-o}  
can be found by the BCD method described in steps 1a)-1d). For convenience 
of presentation, we omit the index $k$ from \eqnok{inner-prob2} 
and consider the BCD method for solving the following problem:
\beq \label{pd-subprob-o}
\min\{p_\vrho(x,y): \ x \in \cX, \ y \in \Re^{|J|}\}
\eeq
instead. Accordingly, we rename the iterates of the above BCD method. We 
can observe that the resulting BCD method is the same as the one presented 
in Subsection \ref{sec:PD-c} except that $p_\vrho$ and $\Re^{|J|}$ replace 
$q_\vrho$ and $\cY$, respectively. For the sake of brevity, we omit the 
presentation of this BCD method.

\vgap

\begin{lemma} \label{local-soln-o}
Suppose that $(x^*,y^*)\in \Re^n \times \Re^{|J|}$ is a saddle point of problem 
\eqref{pd-subprob-o}, that is,   
\beq \label{saddle-pt-o}
x^* \in \Arg\min\limits_{x \in \cX} p_{\vrho}(x,y^*),  \ \ \ 
y^* \in \Arg\min\limits_{y \in \Re^{|J|}} p_{\vrho}(x^*,y).
\eeq 
Furthermore, assume that $h$'s are affine functions, and $f$ and $g$'s 
are convex functions. Then, $(x^*,y^*)$ is a local minimizer of 
problem \eqref{pd-subprob-o}.
\end{lemma}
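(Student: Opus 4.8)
The plan is to mirror the structure of the proof of Lemma \ref{local-soln-c}, but to account for the fact that here the nonconvex term $\nu\|y\|_0$ sits in the objective $p_\vrho$ rather than being enforced through the feasible set $\cY$. I would write $p_\vrho=q_\vrho+\nu\|\cdot\|_0$, noting that under the stated hypotheses ($h$'s affine, $f$ and $g$'s convex) the function $q_\vrho$ defined in \eqref{q-fun} is jointly convex in $(x,y)$. The goal is to exhibit a radius $\delta>0$ so that $p_\vrho(x^*+h_x,y^*+h_y)\ge p_\vrho(x^*,y^*)$ whenever $x^*+h_x\in\cX$ and $\|(h_x,h_y)\|<\delta$.

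First I would set $K=\{i: y^*_i\neq 0\}$ and extract the coordinate-wise information contained in the second saddle-point relation of \eqref{saddle-pt-o}. Since $p_\vrho(x^*,\cdot)$ separates across the coordinates of $y$, Proposition \ref{prop2} (applied with $\phi_i(y_i)=\tfrac{\vrho}{2}(x^*_{J(i)}-y_i)^2$ and $\cX_i=\Re$) shows that $y^*_i=x^*_{J(i)}$ for every $i\in K$, while for every $i\notin K$ the optimality of $y^*_i=0$ forces $\tfrac{\vrho}{2}(x^*_{J(i)})^2\le \nu$. These two facts replace the identity \eqref{yeq} used in Lemma \ref{local-soln-c}: the first kills the $y$-gradient of $q_\vrho$ on $K$, and the second supplies the bound $|x^*_{J(i)}|\le\sqrt{2\nu/\vrho}$ off $K$ that will let the $l_0$ penalty dominate.

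Next, from the first relation of \eqref{saddle-pt-o} (which, since $\nu\|y^*\|_0$ is constant in $x$, is equivalent to $x^*\in\Arg\min_{x\in\cX}q_\vrho(x,y^*)$) and the convexity of $q_\vrho(\cdot,y^*)$, the first-order optimality condition gives $[\nabla_x q_\vrho(x^*,y^*)]^T h_x\ge 0$ for every $h_x$ with $x^*+h_x\in\cX$. Combining the gradient inequality from the joint convexity of $q_\vrho$ with $\nabla_y q_\vrho(x^*,y^*)=\vrho(y^*-x^*_J)$, I obtain
\[
p_\vrho(x^*+h_x,y^*+h_y)-p_\vrho(x^*,y^*)\ \ge\ \vrho(y^*-x^*_J)^T h_y+\nu\bigl(\|y^*+h_y\|_0-\|y^*\|_0\bigr).
\]
Because $(y^*-x^*_J)_i=0$ for $i\in K$, only the indices $i\notin K$ survive. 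Taking $\delta$ small enough that $|(h_y)_i|<|y^*_i|$ for all $i\in K$ preserves the support on $K$, so that $\|y^*+h_y\|_0=|K|+|\{i\notin K:(h_y)_i\neq 0\}|$, and the right-hand side becomes a coordinate-wise sum over $i\notin K$ whose $i$-th term is $0$ when $(h_y)_i=0$ and $\nu-\vrho\,x^*_{J(i)}(h_y)_i$ when $(h_y)_i\neq 0$.

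The crux, and the step I expect to be the main obstacle, is this coordinate-wise trade-off off $K$: it is precisely where the nonconvex $l_0$ term must be shown to absorb the possibly negative quadratic contribution, a difficulty absent from Lemma \ref{local-soln-c}, where the constraint $y\in\cY$ locked the support entirely. For each $i\notin K$ with $(h_y)_i\neq 0$, the term obeys $\nu-\vrho\,x^*_{J(i)}(h_y)_i\ge \nu-\sqrt{2\nu\vrho}\,|(h_y)_i|$ by the bound on $|x^*_{J(i)}|$ above. Choosing
\[
\delta\ =\ \min\Bigl\{\min_{i\in K}|y^*_i|,\ \sqrt{\nu/(2\vrho)}\Bigr\}>0
\]
makes every such term nonnegative, hence the whole sum is nonnegative and $(x^*,y^*)$ is a local minimizer. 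I would close by recording the degenerate cases ($K=\emptyset$, or $\nu=0$, in which $p_\vrho=q_\vrho$ is convex and any saddle point is automatically a global minimizer), which are covered by the same inequality under the convention that the minimum defining $\delta$ omits vacuous entries.
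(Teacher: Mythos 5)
Your proof is correct and follows essentially the same route as the paper's: decompose $p_\vrho = q_\vrho + \nu\|\cdot\|_0$, use the first-order condition and the gradient inequality for the convex, differentiable $q_\vrho$, and reduce everything to the coordinate-wise trade-off $\nu\|(h_y)_i\|_0 - \vrho\, x^*_{J(i)}(h_y)_i \ge 0$ off the support $K$ of $y^*$. The only (valid) variation is in how the neighborhood is chosen: the paper works with the per-coordinate bound $|(h_y)_i| < \nu/(\vrho|x^*_{J(i)}|+1)$ for $i \notin K$, whereas you first extract the bound $|x^*_{J(i)}| \le \sqrt{2\nu/\vrho}$ from the saddle-point condition and then take a uniform radius, with the degenerate case $\nu=0$ handled separately.
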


\begin{proof}
Let $K =\{i: y^*_i \neq 0\}$, and let $h_x$, $h_y$ 
be any two vectors such that $x^*+h_x \in \cX$, $|(h_y)_i| < \nu/(\rho|x^*_{J(i)}|+1)$ for 
any $i\notin K$ and $|(h_y)_i| < |y^*_i|$ for all $i\in K$. 
We observe from the second relation of \eqref{saddle-pt-o} and Proposition \ref{prop2} that 
$y^*_i = x^*_{J(i)}$ for all $i\in K$. Also, for the above choice of $h_y$, one has
$y^*_i+(h_y)_i \neq 0$ for all $i\in K$. Hence, $\|y^*_i+(h_y)_i\|_0 = \|y^*_i\|_0$ for every
$i\in K$. Using these relations and the definition of $h_y$, we can see that 
\beq \label{aux-ineq}
\rho (y^*-x^*_J)^Th_y + \nu \|y^*+h_y\|_0 - \nu\|y^*\|_0 \ = \ -\rho 
\sum\limits_{i\notin K} x^*_{J(i)}(h_y)_i + \nu \sum\limits_{i\notin K} \|(h_y)_i\|_0  
\ \ge \ 0.
\eeq
In addition, by the assumption that $h$'s are affine functions, and $f$ and $g$'s are convex 
functions, we know that $q_\vrho$ is convex, where $q_\vrho$ is defined in \eqref{q-fun}. 
Also, notice that $p_\vrho(x,y)=q_\vrho(x,y)+\nu\|y\|_0$. It then follows from the first 
relation of \eqref{saddle-pt-o} and the first-order optimality condition that 
$[\nabla_x q_\vrho(x^*,y^*)]^T h_x \ge 0$. Using this relation along with \eqref{aux-ineq} 
and the convexity of $q_\vrho$, we have   
\[
\ba{lcl}
p_\vrho(x^*+h_x,y^*+h_y) &=&  q_\vrho(x^*+h_x,y^*+h_y) + \nu\|y^*+h_y\|_0 \\ [4pt]
& \ge & q_\vrho(x^*,y^*) + [\nabla_x q_\vrho(x^*,y^*)]^T h_x 
+[\nabla_y q_\vrho(x^*,y^*)]^T h_y + \nu\|y^*+h_y\|_0 \\ [4pt]
& \ge & p_\vrho(x^*,y^*) + \vrho(y^*-x^*_J)^T h_y + \nu\|y^*+h_y\|_0 - \nu\|y^*\|_0 \ \ge \ 
p_\vrho(x^*,y^*),
\ea
\]
which together with the above choice of $h_x$ and $h_y$ implies that $(x^*,y^*)$ 
is a local minimizer of \eqref{pd-subprob-o}.
\end{proof}

\vgap

% We next show that any accumulation point of the sequence $\{(x^l, y^l)\}_{l=1}^\infty$ 
% generated by the BCD method is a local minimizer of problem \eqref{pd-subprob-o}. 

\begin{theorem} \label{BCD-thm-o}
Let $\{(x^l,y^l)\}$ be the sequence generated by the above BCD method, and let $\epsilon >0$ 
be given. Suppose that $(x^*,y^*)$ is an accumulation point of $\{(x^l,y^l)\}$. Then 
the following statements hold:
\bi
\item[(a)] $(x^*,y^*)$ is a saddle point of problem \eqref{pd-subprob-o}.
\item[(b)] There exists some $l>0$ such that 
\[
\|\cP_{\cX}(x^l-\nabla_x q_\vrho(x^l,y^l))-x^l\| < \epsilon,
%\dist(-\nabla_x q(x^l,y^l),\cN_\cX(x^l)) < \epsilon,
\]
where the function $q_\vrho$ is defined in \eqref{q-fun}. 
\item[(c)] 
Furthermore, if $h$'s are affine functions, and $f$ and $g$'s are convex functions, then 
$(x^*,y^*)$ is a local minimizer of problem \eqref{pd-subprob-o}.
\ei 
\end{theorem}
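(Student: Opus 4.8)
The plan is to mirror the proof of Theorem~\ref{BCD-thm-c}, the only genuinely new difficulty being that the objective $p_\vrho$ now carries the discontinuous term $\nu\|y\|_0$, so that limits of function values must be handled with care rather than by naive continuity.

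First I would record the two descent inequalities coming from the BCD updates: since the $x$-step minimizes $p_\vrho(\cdot,y^l)$ over $\cX$ and the $y$-step minimizes $p_\vrho(x^{l+1},\cdot)$ over $\Re^{|J|}$, one has $p_\vrho(x^{l+1},y^{l+1}) \le p_\vrho(x^{l+1},y^l) \le p_\vrho(x^l,y^l)$, so $\{p_\vrho(x^l,y^l)\}$ is non-increasing. Because $\|y^l\|_0$ is integer-valued and bounded by $|J|$, I would then pass from the subsequence converging to $(x^*,y^*)$ to a further subsequence $L$ along which $y^l \to y^*$ and $\|y^l\|_0$ is a constant $s$. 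Along $L$, continuity of $q_\vrho$ gives $p_\vrho(x^l,y^l) \to q_\vrho(x^*,y^*)+\nu s$; call this limit $c$. Monotonicity then upgrades this to $\lim_{l\to\infty} p_\vrho(x^l,y^l) = c$.

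For the $x$-half of the saddle relation in (a), I would use that minimizing $p_\vrho(\cdot,y^l)$ is the same as minimizing $q_\vrho(\cdot,y^l)$ (the term $\nu\|y^l\|_0$ is constant in $x$), so $q_\vrho(x^{l+1},y^l) \le q_\vrho(x,y^l)$ for all $x\in\cX$. The squeeze above forces $p_\vrho(x^{l+1},y^l)\to c$, hence $q_\vrho(x^{l+1},y^l)=p_\vrho(x^{l+1},y^l)-\nu s \to q_\vrho(x^*,y^*)$ along $L$; passing to the limit and using continuity of $q_\vrho$ in both arguments yields $q_\vrho(x^*,y^*)\le q_\vrho(x,y^*)$, i.e.\ $x^*\in\Arg\min_{x\in\cX} p_\vrho(x,y^*)$. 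For the $y$-half, I would take limits in $p_\vrho(x^l,y^l)\le p_\vrho(x^l,y)$; the right-hand side tends to $p_\vrho(x^*,y)$ by continuity of $q_\vrho$, giving $c \le p_\vrho(x^*,y)$ for all $y$. Lower semicontinuity of $\|\cdot\|_0$ gives $\|y^*\|_0 \le s$, so $p_\vrho(x^*,y^*)\le q_\vrho(x^*,y^*)+\nu s = c$; choosing $y=y^*$ in the previous inequality supplies the reverse bound $c\le p_\vrho(x^*,y^*)$. Hence $c=p_\vrho(x^*,y^*)$ (in particular $\|y^*\|_0=s$) and $y^*\in\Arg\min_{y} p_\vrho(x^*,y)$, completing (a).

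Statement (b) then follows as in Theorem~\ref{BCD-thm-c}: the relation $x^*\in\Arg\min_{x\in\cX} q_\vrho(x,y^*)$ and the projection characterization of first-order optimality give $\|\cP_\cX(x^*-\nabla_x q_\vrho(x^*,y^*))-x^*\|=0$, and continuity of $\cP_\cX$ and $\nabla_x q_\vrho$ together with $(x^l,y^l)\to(x^*,y^*)$ along $L$ drive the left-hand quantity below $\epsilon$ for large $l\in L$; note this uses the smooth $q_\vrho$, not $p_\vrho$. Statement (c) is immediate from (a) and Lemma~\ref{local-soln-o}. The main obstacle is exactly the step flagged above: because $\nu\|y\|_0$ is only lower semicontinuous, the limiting value $c$ a priori only bounds $p_\vrho(x^*,y^*)$ from above via $\|y^*\|_0\le s$, so one must invoke the limiting saddle inequality in the $y$-variable to force the equality $\|y^*\|_0=s$ and thereby certify that $y^*$ genuinely minimizes $p_\vrho(x^*,\cdot)$ rather than merely attaining the sequential limit value.
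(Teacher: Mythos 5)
Your proposal is correct and follows essentially the same route as the paper's proof: the descent/squeeze inequalities giving monotone convergence of $p_\vrho(x^l,y^l)$, passage to a subsequence along which the integer-valued sequence $\|y^l\|_0$ stabilizes, and lower semicontinuity of $\|\cdot\|_0$ to handle the discontinuous term when taking limits in the two halves of the saddle-point property, followed by the same projection argument for (b) and the appeal to Lemma \ref{local-soln-o} for (c). The only difference is organizational: you fix the constant-cardinality subsequence up front and track the limit value $c$ explicitly, whereas the paper inserts the bound $\|y^l\|_0 \ge \|y^*\|_0$ directly into the $y$-step chain and extracts the cardinality-convergent subsequence only for the $x$-step; the substance is identical.
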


\begin{proof} 
We first show that statement (a) holds. Indeed, one can observe that 
\beqa
p_\vrho(x^{l+1},y^l) & \le & p_\vrho(x,y^l) \ \ \ \forall x \in \cX, \label{x-update} \\
p_\vrho(x^{l},y^{l}) & \le & p_\vrho(x^{l},y) \ \ \ \forall y \in \Re^{|J|}. \label{y-update}
\eeqa
It follows that 
\beq \label{squeeze}
p_\vrho(x^{l+1},y^{l+1}) \ \le \ p_\vrho(x^{l+1},y^l) \ \le \ p_\vrho(x^l,y^l) \ \ \ \forall l \ge 1.
\eeq
Hence, the sequence $\{p_\vrho(x^{l},y^{l})\}$ is non-increasing. Since $(x^*,y^*)$ is an accumulation 
point of $\{(x^l,y^l)\}$, there exists a subsequence $L$ such that $\lim_{l\in L \to \infty} (x^l,y^l) = (x^*,y^*)$, 
and moreover, $x^* \in \cX$ due to the closedness of $\cX$.   
We then observe from \eqref{p-fun} that $\{p_\vrho(x^{l},y^{l})\}_{l\in L}$ is bounded, which together with the 
monotonicity of $\{p_\vrho(x^{l},y^{l})\}$ implies that $\{p_\vrho(x^{l},y^{l})\}$ is bounded below and hence 
$\lim_{l\to\infty}p_\vrho(x^{l},y^{l})$ exists. This observation and \eqref{squeeze} yield 
\beq \label{eq-limits}
\lim_{l\to\infty} p_\vrho(x^{l},y^{l}) = \lim_{l\to\infty} p_\vrho(x^{l+1},y^l).
\eeq
For notational convenience, let 
\[
F(x) := f(x) + \frac{\vrho}{2}(\|[g(x)]^+\|^2+\|h(x)\|^2). 
\]
It then follows from \eqref{p-fun} that 
\beq \label{p-fun-F}
p_{\vrho}(x,y) = F(x) + \nu \|y\|_0 + \frac{\vrho}{2}\|x_J - y\|^2,  \quad \forall x\in \cX,y\in\Re^{|J|}.
\eeq
Since $\lim_{l\in L} y^l = y^*$, one has $\|y^l\|_0 \ge \|y^*\|_0$ for sufficiently 
large $l\in L$. Using this relation, \eqref{y-update} and \eqref{p-fun-F}, we obtain that, 
when $l \in L$ is sufficiently large,
\[
p_\vrho(x^{l},y) \ \ge \ p_\vrho(x^{l},y^{l}) \ = \ F(x^l) + \nu \|y^l\|_0 + 
\frac{\vrho}{2} \|x^l_J - y^l\|^2 \ \ge \ F(x^l) + \nu \|y^*\|_0 + 
\frac{\vrho}{2} \|x^l_J - y^l\|^2. 
\]
Upon taking limits on both sides of the above inequality as $l \in L\to \infty$ 
and using the continuity of $F$, one has 
\beq \label{sad-pt1}
p_\vrho(x^*,y) \ \ge \ F(x^*) + \nu \|y^*\|_0 + \frac{\vrho}{2} \|x^*_J - y^*\|^2 
\ = \  p_\vrho(x^*,y^*), \ \ \ \forall y\in \Re^{|J|}.
\eeq
In addition, it follows from \eqref{x-update} and \eqref{p-fun-F} that 
\beq \label{f-ineq}
\ba{lcl}
F(x)+\frac12\|x_{J}-y^l\|^2 &=& p_\vrho(x,y^l) - \nu \|y^l\|_0 
\ \ge \  p_\vrho(x^{l+1},y^l) - \nu \|y^l\|_0 \\ [4pt]
&=& \ F(x^{l+1})+\frac12\|x^{l+1}_J-y^l\|^2, \ \ \ \forall x \in \cX.
\ea
\eeq
Since $\{\|y^l\|_0\}_{l\in L}$ is bounded, there exists a subsequence $\bar L \subseteq L$ 
such that $\lim_{l \in \bar L \to \infty} \|y^l\|_0$ exists. Then we have 
\beq \nn
\ba{l}
 \lim\limits_{l \in \bar L \to \infty} F(x^{l+1})+\frac12\|x^{l+1}_J-y^l\|^2 \ = \  
\lim\limits_{l \in \bar L \to \infty} p_\vrho(x^{l+1},y^l) - \nu \|y^l\|_0 \\ [12pt]
=  \lim\limits_{l \in \bar L \to \infty} p_\vrho(x^{l+1},y^l) - \nu\lim\limits_{l \in \bar L \to \infty} \|y^l\|_0 
\ = \ \lim\limits_{l \in \bar L\to \infty} p_\vrho(x^l,y^l) - \nu\lim\limits_{l \in \bar L \to \infty} \|y^l\|_0 \\ [12pt]
 =   \lim\limits_{l \in \bar L \to \infty} p_\vrho(x^{l},y^l) - \nu \|y^l\|_0
= \lim\limits_{l \in \bar L \to \infty} F(x^{l})+\frac12\|x^{l}_J-y^l\|^2  \ = \ F(x^*)+\frac12\|x^*_J-y^*\|^2,
\ea
\eeq
where the third equality is due to \eqref{eq-limits}.
Using this relation and taking limits on both sides of \eqref{f-ineq} as $l \in \bar L \to \infty$, 
we further have
\[
F(x)+\frac12\|x_{J}-y^*\|^2  \ \ge \ F(x^*)+\frac12\|x^*_J-y^*\|^2, \ \ \ \forall x \in \cX, 
\] 
which together with \eqref{p-fun} yields 
\[
p_\vrho(x,y^*) \ \ge \ p_\vrho(x^*,y^*), \ \ \ \forall x \in \cX. 
\]
This relation along with \eqref{sad-pt1} implies that $(x^*,y^*)$ is a saddle point of 
\eqref{pd-subprob-o} and hence statement (a) holds.  Statement (b) can be similarly proved 
as that of Theorem \ref{BCD-thm-c}. In addition, statement (c) holds due to statement (a) 
and Lemma \ref{local-soln-o}.  
\end{proof}

\vgap

\begin{remark}
A similar result as in statement (c) is recently established in \cite{ZhDoLu11} for the 
BCD method when applied to solve the {\it unconstrained} problem: 
\beq \label{pvrho}
\min\limits_{x,y} \frac12\|Ax-b\|^2 + 
\frac{\vrho}{2}\|Wx-y\|^2 + \sum_i \nu_i \|y_i\|_0
\eeq
under the assumption that $A^TA\succ 0$, $W^TW=I$, $\vrho >0$, and $\nu_i \ge 0$ for all $i$. 
The proof of \cite{ZhDoLu11} strongly relies on this assumption and the fact that the BCD 
subproblems have closed-form solutions. We believe that it cannot be extended to problem 
\eqref{pd-subprob-o}. In addition, it is not hard to observe that problem \eqref{pvrho} 
can be equivalently reformulated into a problem in the form of \eqref{pd-subprob-o} and 
thus the convergence of the BCD method for \eqref{pvrho} directly follows from Theorem 
\ref{BCD-thm-o} above. 
\end{remark}

\vgap

We next establish the convergence of the outer iterations of the PD method for solving 
problem \eqref{l0-J2}. In particular, we show that under some suitable assumption, any 
accumulation point of the sequence generated by the PD method satisfies the first-order 
optimality conditions of \eqref{l0-J2}.  Moreover, when the $l_0$ part is the only nonconvex 
part, we show that the accumulation point is a local minimizer of \eqref{l0-J2}. 

\begin{theorem} \label{main-thm-o}
Assume that $\epsilon_k \to 0$. Let $\{(x^k,y^k)\}$ be the sequence generated by the 
above PD method. Suppose that the level set 
$\cX_{\Upsilon} := \{x \in \cX: f(x) \le \Upsilon \}$ is compact. Then, the following 
statements hold:
\bi
\item[(a)] The sequence $\{(x^k,y^k)\}$ is bounded; 
\item[(b)] Suppose $(x^*,y^*)$ is an accumulation point of $\{(x^k,y^k)\}$. Then, 
$x^*=y^*$ and $x^*$ is a feasible point of problem \eqnok{l0-J2}. 
\item[(c)] Let $(x^*,y^*)$ be defined above. Suppose that $\{(x^k,y^k)\}_{k \in K} \to (x^*,y^*)$ 
for some subsequence $K$. Let $J^* = \{j\in J: x^*_j \neq 0\}$, $\bar J^*=J\setminus J^*$. 
Assume that the Robinson condition \eqref{rob-cond-c} holds at $x^*$ for such $\bar J^*$. 
Then, $\{(\lambda^{k},\mu^{k},\varpi^k)\}_{k\in K}$
is bounded, where 
\[
\lambda^{k} = \vrho_k [g(x^k)]^+, \ \ \ \
\mu^{k} = \vrho_k h(x^k), \ \ \ \ \varpi^k = \vrho_k(x^k_{J}-y^k).
\]
Moreover, each accumulation point $(\lambda^*,\mu^*,\varpi^*)$ of 
$\{(\lambda^k,\mu^k,\varpi^k)\}_{k\in K}$ together with $x^*$ 
satisfies the first-order optimality condition \eqref{opt-cond-c}
with $z^*_j=\varpi^*_i$ for all $j=J(i) \in \bar J^*$. Further, if 
$h$'s are affine functions, and $f$ and $g$'s are convex 
functions, then $x^*$ is a local minimizer of problem \eqnok{l0-J2}. 
\ei
\end{theorem}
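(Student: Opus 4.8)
The plan is to follow the proof of Theorem~\ref{main-thm-c} step by step, replacing the cardinality constraint by the $l_0$ penalty term and using Proposition~\ref{prop2} in place of Proposition~\ref{prop1}. I will freely use the representation \eqref{p-fun-F}, Theorem~\ref{BCD-thm-o}, and Theorem~\ref{suf-cond-thm2}. For statement (a) I would first prove the uniform bound $p_{\vrho_k}(x^k,y^k)\le\Upsilon$ for all $k$. By the monotonicity \eqref{squeeze} of the BCD iterates, $p_{\vrho_k}(x^k,y^k)\le\min_{x\in\cX}p_{\vrho_k}(x,y^k_0)$, so it suffices to bound the right-hand side. An induction on $k$ using step~3) does this: when $y^{k+1}_0=y^k$ the test in step~3) directly gives $\min_x p_{\vrho_{k+1}}(x,y^k)\le\Upsilon$, while when $y^{k+1}_0=x^\feas_J$ the feasibility of $x^\feas$ annihilates all three penalty terms and the choice $\Upsilon\ge f(x^\feas)+\nu\|x^\feas\|_0$ closes the estimate. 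From \eqref{p-fun-F}, together with $F(x)\ge f(x)$ and $\nu\|y\|_0\ge0$, this bound yields $f(x^k)\le\Upsilon$, so $x^k\in\cX_\Upsilon$ is bounded by compactness, and $\tfrac{\vrho_k}{2}\|x^k_J-y^k\|^2\le\Upsilon-\min_{\cX_\Upsilon}f$, which bounds $\{y^k\}$.

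For statement (b), since $\vrho_{k+1}=\sigma\vrho_k$ with $\sigma>1$ forces $\vrho_k\to\infty$, the same bound drives $\|[g(x^k)]^+\|$, $\|h(x^k)\|$ and $\|x^k_J-y^k\|$ to zero along any convergent subsequence. Continuity then gives $g(x^*)\le0$, $h(x^*)=0$ and $x^*_J=y^*$, and $x^*\in\cX$ by closedness; hence $x^*$ is feasible for \eqref{l0-J2}. Note that, unlike Theorem~\ref{main-thm-c}, no stabilization of an index set $I_k$ is needed here because $\cY$ carries no cardinality restriction and $J^*$ is read off directly from the nonzeros of $x^*$.

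Statement (c) is the heart of the argument and parallels part (c) of Theorem~\ref{main-thm-c}. The inner stopping rule \eqref{inner-cond-o}, written with $q_{\vrho_k}$ (legitimate since $\nabla_x p_\vrho=\nabla_x q_\vrho$), produces $s^k$ with $\|s^k\|\le\epsilon_k\to0$ and the inclusion $-\nabla f(x^k)-\nabla g(x^k)\lambda^k-\nabla h(x^k)\mu^k-I_J\varpi^k-s^k\in\cN_\cX(x^k+s^k)$. Boundedness of $\{(\lambda^k,\mu^k,\varpi^k)\}_{k\in K}$ follows by the same normalize-and-contradict device: assuming the norms blow up, the normalized limit $(\bar\lambda,\bar\mu,\bar\varpi)$ has unit norm and satisfies $-\nabla g(x^*)\bar\lambda-\nabla h(x^*)\bar\mu-I_J\bar\varpi\in\cN_\cX(x^*)$, while $\bar\lambda\in\Re^m_+$ vanishes off $\cA(x^*)$ by \eqref{barJs}. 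Pairing this normal-cone element with the direction $d\in\cT_\cX(x^*)$ supplied by the Robinson condition \eqref{rob-cond-c} then gives $\|\bar\lambda\|^2+\|\bar\mu\|^2+\|\bar\varpi\|^2\le0$, contradicting unit norm.

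The one genuinely new step---and the main obstacle---is to identify which coordinates of $\varpi^k$ must vanish, a task handled by Proposition~\ref{prop1} in the constrained case. Here I would invoke Proposition~\ref{prop2} on the separable $y$-subproblem in step~1b): its minimizer sets $y^k_i=x^k_{J(i)}$ precisely when $\tfrac{\vrho_k}{2}(x^k_{J(i)})^2\ge\nu$ and $y^k_i=0$ otherwise. Writing $I^*=\{i:x^*_{J(i)}\neq0\}=\{i:J(i)\in J^*\}$, for each $i\in I^*$ the entry $x^k_{J(i)}\to x^*_{J(i)}\neq0$ while $\vrho_k\to\infty$, so the threshold is eventually met and $\varpi^k_i=\vrho_k(x^k_{J(i)}-y^k_i)=0$ for large $k\in K$; hence $\bar\varpi_{I^*}=0$ and $I_J\bar\varpi=I_{\bar J^*}\bar\varpi_{\bar I^*}$, exactly as needed for the Robinson estimate above. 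By contrast, for $i$ with $J(i)\in\bar J^*$ the quantity $\varpi^k_i$ may grow, which is why the Robinson condition on the $\bar J^*$ block is indispensable. The same vanishing shows $\varpi^*_{I^*}=0$, so the limiting multipliers give the first relation of \eqref{opt-cond-c} with $z^*=I_J\varpi^*$, that is, $z^*_j=0$ on $\bar J\cup J^*$ and $z^*_j=\varpi^*_i$ on $j=J(i)\in\bar J^*$, while $\lambda^k_i\ge0$ and $\lambda^k_i g_i(x^k)=0$ pass to the limit. Finally, when $h$ is affine and $f,g$ are convex, feasibility of $x^*$ together with \eqref{opt-cond-c} for $J^*=\{j\in J:x^*_j\neq0\}$ lets Theorem~\ref{suf-cond-thm2} conclude that $x^*$ is a local minimizer of \eqref{l0-J2}.
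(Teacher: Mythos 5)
Your proposal is correct and follows essentially the same route as the paper, which proves parts (a) and (b) by the argument of Theorem \ref{main-thm-c} and reduces part (c) to showing $\varpi^k_{I^*}=0$ for large $k\in K$ via Proposition \ref{prop2} before invoking Theorem \ref{suf-cond-thm2}. Your explicit thresholding argument ($y^k_i=x^k_{J(i)}$ once $\tfrac{\vrho_k}{2}(x^k_{J(i)})^2\ge\nu$, which holds eventually since $x^k_{J(i)}\to x^*_{J(i)}\neq 0$ and $\vrho_k\to\infty$) is precisely the justification behind the paper's one-line observation that $y^k_{I^*}=x^k_{J^*}$ for sufficiently large $k\in K$.
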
 

\begin{proof}
Statement (a) and (b) can be similarly proved as those of Theorem \ref{main-thm-c}. We 
now show that statement (c) holds. Let $I^* = \{i: J(i) \in J^*\}$. From Proposition 
\ref{prop2} and the definitions of $y^k$ and $J^*$, we can observe that 
$y^k_{I^*}= x^k_{J^*}$ when $k\in K$ is sufficiently large. Hence, $\varpi^k_{I^*}=0$ 
for sufficiently large $k\in K$. The rest of the proof for the first two conclusions of this 
statement is similar to that of statement (c) of Theorem \ref{main-thm-c}. The 
last conclusion of this statement holds due to its second conclusion and Theorem \ref{suf-cond-thm2}.  
\end{proof}

\vgap

\section{Numerical results}
\label{results}

In this section, we conduct numerical experiments to test the performance of our PD methods proposed 
in Section \ref{method} by applying them to sparse logistic regression, sparse inverse covariance 
selection, and compressed sensing problems. The codes of all the methods implemented in this section 
are written in Matlab, which are available online at www.math.sfu.ca/$\sim$zhaosong. All experiments 
are performed in Matlab 7.11.0 (2010b) on a workstation with an Intel Xeon E5410 CPU (2.33 GHz) and 
8GB RAM running Red Hat Enterprise Linux (kernel 2.6.18).
  
\subsection{Sparse logistic regression problem}
\label{logistic}

In this subsection, we apply the PD method studied in Subsection \ref{sec:PD-c} to solve sparse 
logistic regression problem, which has numerous applications in machine learning, computer vision, 
data mining, bioinformatics, and neural signal processing (see, for example, \cite{Bi07,TsMcTsAn07,
LiCh07,PaSpGeSa05,GePaSa05,PhSa06}). 

Given $n$ samples $\{z^1, \ldots, z^n\}$ with $p$ features, and $n$ binary outcomes 
$b_1, \ldots, b_n$, let $a^i = b_iz^i$ for $i=1,\ldots,n$. The {\it average logistic loss} 
function is defined as 
\[
 l_\avg(v,w) := \sum_{i=1}^n\theta(w^Ta^i+vb_i)/n
\]
for some model variables $v \in \Re$ and $w \in \Re^p$, where $\theta$ is the 
{\it logistic loss} function 
$$\theta(t) := \log(1+\exp(-t)).$$ Then the {\it sparse logistic regression} problem 
can be formulated as 
\beq
\min\limits_{v,w}  \left\{l_\avg(v,w): \|w\|_0 \leq r\right\}, \label{Prob1} 
\eeq
where $r \in [1,p]$ is some integer for controlling the sparsity of the solution. 
%Given that problem \eqnok{Prob1} is typically hard to solve, 
In the literature, one common approach for finding an approximate solution to 
\eqref{Prob1} is by solving the following $l_1$ regularization problem: 
\beq \label{l1-reg}
\min\limits_{v,w}  l_\avg(v,w) + \lambda \|w\|_1
\eeq
for some regularization parameter $\lambda \ge 0$ (see, for example, \cite{KoKiBo07,
EfHaJoTi04,PaHa06,LeLeAbNg06,LiJiYe09,ShYiOsSa10}). Our aim below is to apply the 
PD method studied in Subsection \ref{sec:PD-c} to solve \eqnok{Prob1} directly.  

Letting $x = (v,w)$, $J=\{2,\ldots,p+1\}$ and $f(x) = l_\avg(x_1,x_J)$, we can see that 
problem \eqnok{Prob1} is in the form of \eqnok{l0-J1}. Therefore, the PD method proposed 
in Subsection \ref{sec:PD-c} can be suitably applied to solve \eqnok{Prob1}. Also, 
we observe that the main computation effort of the PD method when applied to \eqnok{Prob1} 
lies in solving the subproblem arising in step 1a), which is in the form of
\beq
\min_{x} \left\{l_{\avg}(x_1,x_J) + \frac{\vrho}{2} \|x-c\|^2: \ x \in \Re^{p+1}\right\} 
 \label{Prob1-1}\\
%&&\min_{y} \{\|y-d\|^2_2: \ \|y_J\|_0 \leq r, \ y \in \Re^{p+1}\} \label{Prob1-2}  
\eeq
for some $\vrho>0$ and $c \in \Re^{p+1}$. To efficiently solve \eqnok{Prob1-1}, we apply 
the nonmonotone projected gradient method proposed in \cite[Algorithm 2.2]{BiMaRa00}; in 
particular, we set its parameter $M=2$ and terminate the method when 
${\|\nabla F(x)\|}/{\max\{|F(x)|,1\}} \le 10^{-4}$, where $F(x)$ denotes the 
objective function of \eqnok{Prob1-1}.

% the ratio between the 
%Euclidean norm of the gradient of the objective function and the value of objective function is less 
%than $10^{-4}$.

%In addition, it follows from Proposition \ref{prop1} that problem \eqnok{Prob1-2} has a closed form solution.

We now address the initialization and the termination criteria for our PD method when applied to \eqnok{Prob1}. 
In particular, we randomly generate $z\in\Re^{p+1}$ such that $\|z_J\|_0 \le r$ and set the initial point 
$y^0_0 = z$.  We choose the initial penalty parameter $\vrho_0$ to be $0.1$, and set the parameter 
$\sigma = \sqrt{10}$. 
%At any $k$th outer iteration, we terminate the inner iterations (i.e., the 
%iterations of the BCD method) once the relative change of the approximate solutions is sufficiently 
%small. In detail, let $\{(x^k_l,y^k_l)\}$ denote the sequence of approximate solutions obtained by the 
%BCD method. We terminate the inner iterations once   
%\[ 
%\max\left\{\frac{\|x^k_l - x^{k}_{l-1}\|_\infty}
%           {\max(\|x^k_l\|_\infty,1)}, \frac{\|y^k_l - y^{k}_{l-1}\|_\infty}
%           {\max(\|y^k_l\|_\infty,1)} \right\} \ \leq \ \eps_I 
%\] 
%for some $\eps_I >0$. Further, let $(x^k,y^k)$ be the approximate solution obtained at the $k$th outer iteration. 
%We terminate the outer iterations of our method once
%\[
%\|x^k-y^k\|_\infty \ \leq \ \eps_O
%\]
%for some $\eps_O >0$. In our experiment, we choose both $\eps_I$ and $\eps_O$ to be $10^{-4}$.
In addition, we use \eqnok{inner-term} and \eqnok{outer-term} as the inner and outer termination criteria 
for the PD method and set their accuracy parameters $\eps_I$ and $\eps_O$ to be $5\times 10^{-4}$ and
$10^{-3}$, respectively.

We next conduct numerical experiments to test the performance of our PD method for solving \eqnok{Prob1} 
on some real and random data. We also compare the quality of the approximate solutions of \eqnok{Prob1} 
obtained by our method with that of \eqnok{l1-reg} found by a first-order solver SLEP \cite{LiJiYe09}. 
For the latter method, we set opts.mFlag=1, opts.lFlag=1 and opts.tFlag=2. And the rest of its 
parameters are set by default. 

In the first experiment, we compare the solution quality of our PD method with SLEP on three small- 
or medium-sized benchmark data sets which are from the UCI machine learning bench market repository \cite{NeHeBlMe98} 
and other sources \cite{GoVa96}. The first data set is the colon tumor gene expression data \cite{GoVa96} with more 
features than samples; the second one is the ionosphere data \cite{NeHeBlMe98} with less features than samples; 
and the third one is the Internet advertisements data \cite{NeHeBlMe98} with roughly same magnitude of features 
as samples. We discard the samples with missing data and standardize each data set so that the sample mean is 
zero and the sample variance is one. For each data set, we first apply SLEP to solve problem \eqnok{l1-reg} with 
four different values of $\lambda$, which are the same ones as used in \cite{KoKiBo07}, namely, 
$0.5\lambda_{\max}$, $0.1\lambda_{\max}$, $0.05\lambda_{\max}$, and $0.01\lambda_{\max}$, where 
$\lambda_{\max}$ is the upper bound on the useful range of $\lambda$ that is defined in \cite{KoKiBo07}. 
For each such $\lambda$, let $w^*_\lambda$ be the approximate optimal $w$ obtained by SLEP. We then 
apply our PD method to solve problem \eqnok{Prob1} with $r=\|w^*_\lambda\|_0$ so that the resulting 
approximate optimal $w$ is at least as sparse as $w^*_\lambda$. 

To compare the solution quality of the above two methods, we introduce a criterion, that is, {\it error rate}. Given 
any model variables $(v,w)$ and a sample vector $z\in \Re^p$, the outcome predicted by 
$(v,w)$ for $z$ is given by
\[
\phi(z) = \sgn(w^T z +v),
\] 
where 
\[
\sgn(t) = \left\{\ba{ll}
+1 & \mbox{if} \ t > 0, \\
-1 & \mbox{otherwise}.  
\ea 
\right. 
\]
Recall that $z^i$ and $b_i$ are the given samples and outcomes for $i=1,\ldots, n$. The {\it error rate} of $(v,w)$ 
for predicting the outcomes $b_1, \ldots, b_n$ is defined as 
\[
\mbox{Error} :=\left\{ \sum^{n}_{i=1} \|\phi(z^i)-b_i\|_0/n \right\} \times 100\%. 
\] 
   
The computational results are presented in Table \ref{rdata-log}. In detail, the name and dimensions of each 
data set are given in the first three columns. The fourth column gives the ratio between $\lambda$ and its 
upper bound $\lambda_{\max}$. The fifth column lists the value of $r$, that is, the cardinality of 
$w^*_\lambda$ which is defined above. In addition, the average logistic loss, the error rate and the CPU 
time (in seconds) for both SLEP and PD are reported in columns six to eleven. We can observe 
that, although SLEP is faster than the PD method in most cases, the PD method substantially 
outperforms SLEP in terms of the solution quality since it generally achieves lower average 
logistic loss and error rate while the sparsity of both solutions is the same. 

\begin{table}[t!]
\caption{\small  Computational results on three real data sets} 
\centering
\label{rdata-log}
\begin{footnotesize}
\begin{tabular}{|c|c|c|c|c|c c c|c c c|}
\hline 
\multicolumn{1}{|c|}{Data} & \multicolumn{1}{c|}{Features } & 
\multicolumn{1}{c|}{Samples } &
\multicolumn{1}{c|}{} &
\multicolumn{1}{c|}{} & \multicolumn{3}{c|}{SLEP} &
\multicolumn{3}{c|}{PD}
\\
\multicolumn{1}{|c|}{} & \multicolumn{1}{c|}{$p$} & 
\multicolumn{1}{c|}{$n$} & \multicolumn{1}{c|}{$\lambda / \lambda_{\max}$} &
\multicolumn{1}{c|}{$r$} & \multicolumn{1}{c}{$l_{\avg}$} & \multicolumn{1}{c}{Error $(\%)$} & \multicolumn{1}{c|}{Time} &
\multicolumn{1}{c}{$l_{\avg}$} & \multicolumn{1}{c}{Error $(\%)$} & \multicolumn{1}{c|}{Time} 
\\
\hline
Colon                & $2000$ & $62$   &  $0.5$   & $7$    & $0.4398$ & $17.74$ & $0.2$ & $0.4126$ & $12.9$ & $9.1$ \\
                         &              &            &  $0.1$   & $22$  & $0.1326$ & $1.61$   & $0.5$ & $0.0150$ & $0$        & $6.0$\\
                         &              &            &  $0.05$ & $25$  & $0.0664$ & $0$        & $0.6$ & $0.0108$ & $0$        & $5.0$\\
                         &              &            &  $0.01$ & $28$  & $0.0134$ & $0$        & $1.3$ & $0.0057$ & $0$        & $5.4$\\
\hline
Ionosphere       & $34$     & $351$  &  $0.5$  & $3$   & $0.4804$ & $17.38$ & $0.1$  & $0.3466$ & $13.39$ & $0.7$\\
                        &              &             &  $0.1$  & $11$  & $0.3062$ & $11.40$ & $0.1$ & $0.2490$ & $9.12$   &  $1.0$\\
                        &              &            &  $0.05$ & $14$  & $0.2505$ & $9.12$  & $0.1$  & $0.2002$ & $8.26$   &  $1.1$\\
                        &              &            &  $0.01$ & $24$  & $0.1846$ & $6.55$  & $0.4$  & $0.1710$ & $5.98$   &  $1.7$\\
\hline
Advertisements  & $1430$ & $2359$ &  $0.5$  & $3$     & $0.2915$ & $12.04$ & $2.3$    & $0.2578$ & $7.21$ & $31.9$\\
                          &              &              &  $0.1$  & $36$    & $0.1399$ & $4.11$  & $14.2$   & $0.1110$ & $4.11$ & $56.0$\\
                          &              &              &  $0.05$ & $67$   & $0.1042$ & $2.92$  & $21.6$   & $0.0681$ & $2.92$ & $74.1$\\
                          &              &              &  $0.01$ & $197$ & $0.0475$ & $1.10$  & $153.0$ & $0.0249$ & $1.10$ & $77.4$\\
\hline
\end{tabular}
\end{footnotesize}
\\
\end{table}

In the second experiment, we test our PD method on the random data sets of three different sizes. For each size, 
we randomly generate the data set consisting of $100$ instances. In particular, the first data set has more 
features than samples; the second data set has more samples than features; and the last data set  
has equal number of features as samples. The samples $\{z^1, \ldots, z^n\}$ and the corresponding 
outcomes $b_1, \ldots, b_n$ are generated in the same manner as described in 
\cite{KoKiBo07}. In detail, for each instance we choose equal number of positive and negative 
samples, that is, $m_+  =m_- = m/2$, where $m_+$ (resp., $m_-$) is the number of samples with outcome 
$+1$ (resp., $-1$). The features of positive (resp., negative) samples are independent and identically distributed, 
drawn from a normal distribution $N(\mu,1)$, where $\mu$ is in turn drawn from a uniform distribution on $[0,1]$ 
(resp., $[-1,0]$). For each such instance, similar to the previous experiment, we first apply SLEP to solve problem 
\eqnok{l1-reg} with five different values of $\lambda$, which are $0.9\lambda_{\max}$, $0.7\lambda_{\max}$, 
$0.5\lambda_{\max}$, $0.3\lambda_{\max}$ and $0.1\lambda_{\max}$. For each such $\lambda$, let $w^*_\lambda$ be 
the approximate optimal $w$ obtained by SLEP. We then apply our PD method to solve problem \eqnok{Prob1} with 
$r=\|w^*_\lambda\|_0$ so that the resulting approximate optimal $w$ is at least as sparse as $w^*_\lambda$. 
The average results of each data set over $100$ instances are reported in Table \ref{rdata-log1}. 
% In detail, the sizes of the problems are given in column one. The second column gives the ratio between the regularization 
% parameter $\lambda$ and its upper bound $\lambda_{\max}$ that is mentioned in above. The third column lists the average 
%value of $r$, that is, the cardinality of $w^*_\lambda$ which is defined above as well. In addition, the average logistic loss, 
%the error rate and the CPU time (in seconds) for SLEP and PD are reported in columns four to nine. From Table \ref{rdata-log1}, 
We also observe that the PD method is slower than SLEP, but it has better solution quality than SLEP in terms of 
average logistic loss and error rate.
%the only drawback of the PD method is that it is more time consuming than SLEP. 
%However, this is the price that one normally needs to pay when handling the $l_0$. 
%In addition, we can see the processing time of the PD method is still acceptable.

In summary, the above experiments demonstrate that the quality of the approximate 
solution of \eqnok{Prob1} obtained by our PD method is generally better than 
that of \eqnok{l1-reg} found by SLEP when the same sparsity is considered. This observation 
is actually not surprising as \eqnok{l1-reg} is a relaxation of \eqnok{Prob1}.   

\begin{table}[t!]
\caption{\small  Computational results on random data sets} 
\centering
\label{rdata-log1}
\begin{footnotesize}
\begin{tabular}{|c|c|c|c c c|c c c|}
\hline 
\multicolumn{1}{|c|}{Size} &
\multicolumn{1}{c|}{} &
\multicolumn{1}{c|}{} & \multicolumn{3}{c|}{SLEP} &
\multicolumn{3}{c|}{PD}
\\
\multicolumn{1}{|c|}{$n \times p$}  & \multicolumn{1}{c|}{$\lambda / \lambda_{\max}$} &
\multicolumn{1}{c|}{$r$} & \multicolumn{1}{c}{$l_{\avg}$} & \multicolumn{1}{c}{Error $(\%)$} & \multicolumn{1}{c|}{Time} &
\multicolumn{1}{c}{$l_{\avg}$} & \multicolumn{1}{c}{Error $(\%)$} & \multicolumn{1}{c|}{Time} 
\\
\hline
$1000 \times 2000$   &  $0.9$   & $17.0$  & $0.6411$ & $9.76$  & $0.4$ & $0.2145$ & $8.49$  & $9.9$ \\
                                   &  $0.7$   & $52.9$  & $0.5090$ & $3.96$   & $1.0$ & $0.0588$ & $2.66$  & $20.0$\\
                                   &  $0.5$ & $96.6$    & $0.3838$ & $2.23$   & $1.7$ & $0.0060$ & $0.02$  & $34.9$\\
                                   &  $0.3$ & $138.7$  & $0.2611$ & $1.22$   & $2.1$ & $0.0022$ & $0$       & $25.5$\\
                                   &  $0.1$ & $192.0$  & $0.1228$ & $0.31$   & $2.0$ & $0.0013$ & $0$       & $16.0$\\
\hline
$2000 \times 1000$   &  $0.9$   & $11.0$    & $0.6441$ & $11.46$ & $0.4$ & $0.2763$ & $10.67$ & $15.2$ \\
                               &  $0.7$   & $42.8$    & $0.5083$ & $3.63$        & $1.1$ & $0.0376$ & $1.49$   & $38.9$\\
                               &  $0.5$   & $78.0$    & $0.3776$ & $1.65$        & $2.0$ & $0.0032$ & $0$        & $34.4$\\
                               &  $0.3$   & $115.5$  & $0.2490$ & $0.6$          & $2.6$ & $0.0015$ & $0$        & $25.3$\\
                               &  $0.1$   & $160.8$  & $0.1056$ & $0.03$        & $3.1$ & $0.0010$ & $0$        & $15.8$\\
\hline
$1000 \times 1000$   &  $0.9$   & $11.7$    & $0.6417$ & $11.00$ & $0.1$ & $0.2444$ & $9.67$ & $2.3$ \\
                                   &  $0.7$   & $37.2$    & $0.5086$ & $3.95$   & $0.2$ & $0.0572$ & $2.46$   & $5.8$\\
                                   &  $0.5$   & $67.6$    & $0.3805$ & $2.15$   & $0.3$ & $0.0060$ & $0.01$   & $6.2$\\
                                   &  $0.3$   & $100.1$  & $0.2544$ & $0.81$   & $0.4$ & $0.0016$ & $0$        & $4.6$\\
                                   &  $0.1$   & $137.9$  & $0.1124$ & $0.12$   & $0.5$ & $0.0011$ & $0$        & $3.3$\\
\hline
\end{tabular}
\end{footnotesize}
\\
\end{table}

\subsection{Sparse inverse covariance selection problem} 
\label{covariance}

In this subsection, we apply the PD method proposed in Subsection \ref{sec:PD-c} to solve the sparse inverse covariance 
selection problem, which has numerous real-world applications such as speech recognition and gene network analysis 
(see, for example, \cite{Bi00,DoHaJoNeYa04}).

Given a sample covariance matrix $\bsigma\in \cS^{p}_{++}$ and a set $\Omega$ consisting of pairs of known conditionally 
independent nodes, the sparse inverse covariance selection problem can be formulated as  
\beq \label{Prob2}
\ba{rl}
\max\limits_{X \succeq 0} & \log \det  X - \left\langle {\bsigma,X} \right\rangle \\
\mbox{s.t.} 
& \sum\limits_{(i,j) \in \bomega} \|X_{ij}\|_0 \leq r, \\ [4pt]
& X_{ij}=0 \quad \forall (i,j) \in \Omega,
\ea
\eeq
where $\bomega =  \left\{(i,j):(i,j) \notin \Omega, \ i \neq j \right\}$, and $r \in [1, |\bomega|]$ is some integer for 
controlling the sparsity of the solution. In the literature, one common approach for finding an approximate solution to 
\eqref{Prob2} is by solving the following $l_1$ regularization problem:
\beq
\label{l1-cov}
\ba{rl}
\max\limits_{X \succeq 0} & \log \det  X - \left\langle {\bsigma,X} \right\rangle -  \sum\limits_{(i,j) \in \bomega} 
\rho_{ij}|X_{ij}|\\
\mbox{s.t.} & X_{ij}=0 \quad \forall (i,j) \in \Omega,
\ea
\eeq 
where $\{\rho_{ij}\}_{(i,j)\in \bomega}$ is a set of regularization parameters (see, for example, 
\cite{DaBaGh08,DaVaRo08,BaGhDa08,Lu09,Lu10,FrHaTi08,WaSuTo09,LiTo10}). Our goal below is to apply the 
PD method studied in Subsection \ref{sec:PD-c} to solve \eqnok{Prob2} directly.

Letting $\cX =  \left\{X\in \cS^p_+: X_{ij} = 0, \ (i,j) \in \Omega\right\}$ and $J = \bomega$, we clearly see 
that problem \eqnok{Prob2} is in the form of \eqnok{l0-J1} and thus it can be suitably solved by the PD method 
proposed in Subsection \ref{sec:PD-c} with 
\[
\cY = \left\{Y\in \cS^p: \sum_{(i,j) \in \bomega} \|Y_{ij}\|_0 \leq r \right\}.
\]
Notice that the main computation effort of the PD method when applied to \eqnok{Prob2} lies in solving the 
subproblem arising in step 1a), which is in the form of 
\beq
\min_{X \succeq 0}  \left\{- \log\det X + \frac{\vrho}{2}\|X-C\|^2_F: X_{ij}=0 \ \forall (i,j) \in \Omega \right\}
\label{Prob2-1} 
\eeq
for some $\vrho >0$ and $C \in \cS^p$. Given that problem \eqnok{Prob2-1} generally does not have a closed-form 
solution, we now slightly modify the above sets $\cX$ and $\cY$ by replacing them by  
\[
\cX = \cS^p_+, \quad \quad \cY = \left\{Y\in \cS^p: \sum_{(i,j) \in \bomega} \|Y_{ij}\|_0 \leq r, \ 
Y_{ij} = 0, \ (i,j) \in \Omega\right\},  
\]
respectively, and then apply the PD method presented in Subsection \ref{sec:PD-c} to solve 
\eqnok{Prob2}. For this PD method, the subproblem arising in step 1a) is now in 
the form of  
\beq
\min_X  \left\{- \log\det X + \frac{\vrho}{2}\|X-C\|^2_F: X \succeq 0 \right\} 
\label{Prob2-2} \\
\eeq
for some $\vrho >0$ and $C \in \cS^p$. It can be shown that problem \eqnok{Prob2-2} has a closed-form 
solution, which is given by $V\D(x^*)V^T$, where $x^*_i = (\lambda_i + \sqrt{\lambda_i^2+4/\vrho})/2$ for 
all $i$ and $V\D(\lambda) V^T$ is the eigenvalue decomposition of $C$ for some $\lambda \in \Re^p$ (see, for example, 
Proposition 2.7 of \cite{LuZh10-1}). Also, it follows from Proposition \ref{prop1} that the subproblem 
arising in step 1b) for the above $\cY$ has a closed-form solution. 
% Similar convergence results as in Theorem \ref{BCD-thm-c}  and Theorem \ref{main-thm-c} can also be established 
% for such a BCD method and a PD method, respectively.
    
We now address the initialization and the termination criteria for the above PD method. 
In particular, we set the initial point $Y_0^0 = (\tD(\bsigma))^{-1}$, the initial penalty parameter $\vrho_0=1$, and the parameter 
$\sigma = \sqrt{10}$. 
%We use the same outer and inner termination criteria as described in Subsection \ref{logistic} with the 
%accuracy parameters $\eps_O=10^{-4}$ and $\eps_I=10^{-4}, \ 10^{-3}$ for the random and real data below, respectively.  
In addition, we use \eqnok{outer-term} and 
\[
\frac{|q_{\vrho_k}(x^k_{l+1},y^k_{l+1})- q_{\vrho_k}(x^k_{l},y^k_{l})|}{\max\{|q_{\vrho_k}(x^k_{l},y^k_{l})|,1\}}
\le \eps_I
\] 
as the outer and inner termination criteria for the PD method, and set the associated accuracy parameters 
$\eps_O=10^{-4}$ and $\eps_I=10^{-4}, \ 10^{-3}$ for the random and real data below, respectively.   

We next conduct numerical experiments to test the performance of our PD method for solving \eqnok{Prob2} 
on some random and real data. We also compare the quality of the approximate solutions of 
\eqnok{Prob2} obtained by our method with that of \eqnok{l1-cov} found by the proximal point algorithm (PPA) 
\cite{WaSuTo09}. Both methods call the LAPACK routine {\bf dsyevd.f} \cite{LAP} for computing the full 
eigenvalue decomposition of a symmetric matrix, which is usually faster than the Matlab's {\bf eig} routine 
when $p$ is larger than $500$. For PPA, we set ${\rm Tol} =10^{-6}$ and use the default values for all other 
parameters.

In the first experiment, we compare the solution quality of our PD method with PPA on a set of random instances 
which are generated in a similar manner as described in \cite{DaBaGh08,Lu09,Lu10,WaSuTo09,LiTo10}. In 
particular, we first generate a true covariance matrix $\bsigmat \in \cS^p_{++}$ such that its inverse 
$(\bsigmat)^{-1}$ is with the prescribed density $\delta$, and set 
$$\Omega =  \left\{(i,j): (\bsigmat)^{-1}_{ij} = 0, \ |i-j| \geq \lfloor p/2 \rfloor\right\}.$$ 
We then generate a matrix $B \in \cS^p$ by letting
\[
 B= \bsigmat + \tau V,
\]
where $V \in \cS^p$ contains pseudo-random values drawn from a uniform distribution on the interval $[-1,1]$, 
and $\tau$ is a small positive number. Finally, we obtain the following sample covariance matrix:
\[
 \bsigma = B - \min\{\lambda_{\min}(B) - \vtheta,0\}I,
\]
where $\vtheta$ is a small positive number. Specifically, we choose $\tau=0.15$, $\vtheta = 1.0e-4$, 
$\delta = 10\%$, $50\%$ and $100\%$, respectively. It is clear that for $\delta=100\%$ and 
the set $\Omega$ is an empty set. In addition, for all $(i,j) \in \bomega$, we set $\rho_{ij} = \rho_{\bomega}$ 
for some $\rho_{\bomega}>0$. For each instance, we first apply PPA to solve \eqnok{l1-cov} for four values of 
$\rho_{\bomega}$, which are $0.01$, $0.1$, $1$, and $10$. For each $\rho_{\bomega}$, let 
$\tX^*$ be the solution obtained by PPA. We then apply our PD method to solve problem \eqnok{Prob2} with 
$r=\sum_{(i,j) \in \bomega} \|\tX^*_{ij}\|_0$ so that the resulting solution is at least as sparse as $\tX^*$. 

As mentioned in \cite{LiTo10}, to evaluate how well the true inverse covariance matrix $(\bsigmat)^{-1}$ 
is recovered by a matrix $X \in \cS^p_{++}$, one can compute the {\it normalized entropy loss} which is 
defined as follows:
\[
\mbox{Loss} := \frac{1}{p}(\left\langle {\bsigmat,X} \right\rangle - \log \det (\bsigmat X) -p).
\]
The results of PPA and the PD method on these instances are presented in Tables \ref{rdata-cov1}-\ref{rdata-cov4}, 
respectively. In each table, the order $p$ of $\bsigma$ is given in column one. The size 
of $\Omega$ is given in column two. The values of $\rho_\bomega$ and $r$ are given in columns three and four. 
The log-likelihood (i.e., the objective value of \eqnok{Prob2}), the normalized entropy loss and the CPU time 
(in seconds) of PPA and the PD method are given in the last six columns, respectively. We observe 
that our PD method is substantially faster than PPA for these instances. Moreover, it outperforms PPA in terms of 
solution quality since it achieves larger log-likelihood and smaller normalized entropy loss.

\begin{table}[t!]
\caption{\small  Computational results for $\delta = 10\%$} 
\centering
\label{rdata-cov1}
\begin{footnotesize}
\begin{tabular}{|c c|c|c|c c c|c c c |}
\hline 
\multicolumn{2}{|c|}{Problem }  & \multicolumn{1}{c|}{} &
\multicolumn{1}{c|}{} & \multicolumn{3}{c|}{PPA} &
\multicolumn{3}{c|}{PD}
\\
\multicolumn{1}{|c}{$p$} & \multicolumn{1}{c|}{$|\Omega|$} &   
\multicolumn{1}{c|}{$\rho_{\bomega}$} & \multicolumn{1}{c|}{$r$} & 
\multicolumn{1}{c}{Likelihood}  & \multicolumn{1}{c}{Loss} & \multicolumn{1}{c|}{Time} &
\multicolumn{1}{c}{Likelihood}  & \multicolumn{1}{c}{Loss} & \multicolumn{1}{c|}{Time}
\\
\hline
500 & $56724$ & $0.01$ & $183876$ & $-950.88$ & $2.4594$ & $34.1$& $-936.45$ & $2.3920$ & $2.5$ \\ 

& & $0.10$ & $45018$ & $-999.89$ & $2.5749$ & $44.8$& $-978.61$ & $2.4498$ & $5.3$ \\ 

& &  $1.00$ & $5540$ & $-1046.44$ & $2.9190$ & $66.2$ & $-1032.79$ & $2.6380$ & $24.8$ \\

& & $10.0$ & $2608$ & $-1471.67$ & $4.2442$ & $75.1$& $-1129.50$ & $2.8845$ & $55.5$ \\

\hline 

1000 & $226702$ & $0.01$ & $745470$ & $-2247.14$ & $3.1240$ & $150.2$& $-2220.47$ & $3.0486$ & $13.1$ \\ 

& & $0.10$ & $186602$ & $-2344.03$ & $3.2291$ & $158.7$& $-2301.12$ & $3.1224$ & $19.8$ \\ 

& & $1.00$ & $29110$ & $-2405.88$ & $3.5034$ & $349.8$& $-2371.68$ & $3.2743$ & $59.1$ \\ 

& & $10.0$ & $9604$ & $-3094.57$ & $4.6834$ & $395.9$& $-2515.80$ & $3.4243$ & $129.5$ \\ 

\hline 

1500 & $509978$ & $0.01$ & $1686128$ & $-3647.71$ & $3.4894$ & $373.7$ & $-3607.23$ & $3.4083$ & $35.7$ \\ 

& & $0.10$ & $438146$ & $-3799.02$ & $3.5933$ & $303.6$& $-3731.17$ & $3.5059$ & $44.9$ \\ 

& & $1.00$ & $61222$ & $-3873.93$ & $3.8319$ & $907.4$& $-3832.88$ & $3.6226$ & $155.3$ \\ 

& & $10.0$ & $17360$ & $-4780.33$ & $4.9264$ & $698.8$& $-3924.94$ & $3.7146$ & $328.0$ \\

\hline 

2000 & $905240$ & $0.01$ & $3012206$ & $-5177.80$ & $3.7803$ & $780.0$& $-5126.09$ & $3.7046$ & $65.5$ \\ 

& & $0.10$ & $822714$ & $-5375.21$ & $3.8797$ & $657.5$& $-5282.37$ & $3.7901$ & $94.3$ \\ 

& & $1.00$ & $126604$ & $-5457.90$ & $4.0919$ & $907.4$& $-5424.66$ & $3.9713$ & $200.2$ \\ 

& & $10.0$ & $29954$ & $-6535.54$ & $5.1130$ & $1397.4$& $-5532.03$ & $4.0019$ & $588.0$ \\ 

\hline 
\end{tabular}
\end{footnotesize}
\\
\end{table}

\begin{table}[t!]
\caption{\small  Computational results for $\delta = 50\%$} 
\centering
\label{rdata-cov2}
\begin{footnotesize}
\begin{tabular}{|c c|c|c|c c c|c c c |}
\hline 
\multicolumn{2}{|c|}{Problem }  & \multicolumn{1}{c|}{} &
\multicolumn{1}{c|}{} & \multicolumn{3}{c|}{PPA} &
\multicolumn{3}{c|}{PD}
\\
\multicolumn{1}{|c}{$p$} & \multicolumn{1}{c|}{$|\Omega|$} &   
\multicolumn{1}{c|}{$\rho_{\bomega}$} & \multicolumn{1}{c|}{$r$} & 
\multicolumn{1}{c}{Likelihood}  & \multicolumn{1}{c}{Loss} & \multicolumn{1}{c|}{Time} &
\multicolumn{1}{c}{Likelihood}  & \multicolumn{1}{c}{Loss} & \multicolumn{1}{c|}{Time}
\\
\hline
500 & $37738$ & $0.01$ & $202226$ & $-947.33$ & $3.1774$ & $37.2$& $-935.11$ & $3.1134$ & $2.2$ \\ 

& & $0.10$ & $50118$ & $-1001.23$ & $3.3040$ & $41.8$& $-978.03$ & $3.1662$ & $4.7$ \\ 

& & $1.00$ & $11810$ & $-1052.09$ & $3.6779$ & $81.1$& $-101.80$ & $3.2889$ & $14.5$ \\ 

& & $10.0$ & $5032$ & $-1500.00$ & $5.0486$ & $71.1$& $-1041.64$ & $3.3966$ & $28.1$ \\ 

\hline 

1000 & $152512$ & $0.01$ & $816070$ & $-2225.875$ & $3.8864$ & $149.7$ & $-2201.98$ & $3.8126$ & $12.1$ \\ 

& & $0.10$ & $203686$ & $-2335.81$ & $4.0029$ & $131.0$ & $-2288.11$ & $3.8913$ & $17.2$ \\ 

& & $1.00$ & $46928$ & $-2400.81$ & $4.2945$ & $372.7$ & $-2349.02$ & $4.0085$ & $44.1$ \\ 

& & $10.0$ & $17370$ & $-3128.63$ & $5.5159$ & $265.2$& $-2390.09$ & $4.1138$ & $84.3$ \\ 

\hline 

1500 & $340656$ & $0.01$ & $1851266$ & $-3649.78$ & $4.2553$ & $361.2$& $-3616.72$ & $4.1787$ & $32.0$ \\ 

& & $0.10$ & $475146$ & $-3815.09$ & $4.3668$ & $303.4$& $-3743.19$ & $4.2725$ & $42.3$ \\ 

& & $1.00$ & $42902$ & $-3895.09$ & $4.6025$ & $1341.0$ & $-3874.68$ & $4.4823$ & $155.8$ \\ 

& & $10.0$ & $7430$ & $-4759.67$ & $5.6739$ & $881.2$ & $-4253.34$ & $4.6876$ & $468.6$ \\ 

\hline 

2000 & $605990$ & $0.01$ & $3301648$ & $-5149.12$ & $4.5763$ & $801.3$& $-5104.27$ & $4.5006$ & $61.7$ \\ 

& & $0.10$ & $893410$ & $-5371.26$ & $4.6851$ & $620.0$& $-5269.06$ & $4.5969$ & $82.4$ \\

& & $1.00$ & $153984$ & $-5456.54$ & $4.9033$ & $1426.0$& $-5406.89$ & $4.7614$ & $175.9$ \\ 

& & $10.0$ & $33456$ & $-6560.54$ & $5.9405$ & $1552.3$& $-5512.48$ & $4.7982$ & $565.5$ \\

\hline 
\end{tabular}
\end{footnotesize}
\\
\end{table}

\begin{table}[t!]
\caption{\small  Computational results for $\delta = 100\%$} 
\centering
\label{rdata-cov4}
\begin{footnotesize}
\begin{tabular}{|c c|c|c|c c c|c c c |}
\hline 
\multicolumn{2}{|c|}{Problem }  & \multicolumn{1}{c|}{} &
\multicolumn{1}{c|}{} & \multicolumn{3}{c|}{PPA} &
\multicolumn{3}{c|}{PD}
\\
\multicolumn{1}{|c}{$p$} & \multicolumn{1}{c|}{$|\Omega|$} &   
\multicolumn{1}{c|}{$\rho_{\bomega}$} & \multicolumn{1}{c|}{$r$} & 
\multicolumn{1}{c}{Likelihood}  & \multicolumn{1}{c}{Loss} & \multicolumn{1}{c|}{Time} &
\multicolumn{1}{c}{Likelihood}  & \multicolumn{1}{c}{Loss} & \multicolumn{1}{c|}{Time}
\\
\hline
500 & $0$ & $0.01$ & $238232$ & $-930.00$ & $3.5345$ & $36.0$& $-918.52$ & $3.4838$ & $1.3$ \\ 

& & $0.10$ & $57064$ & $-1000.78$ & $3.6826$ & $43.6$& $-973.06$ & $3.5313$ & $4.0$ \\ 

& & $1.00$ & $15474$ & $-1053.04$ & $4.0675$ & $76.1$& $-1006.95$ & $3.6500$ & $10.6$ \\ 

& & $10.0$ & $7448$ & $-1511.88$ & $5.4613$ & $51.4$& $-1023.82$ & $3.7319$ & $18.1$ \\ 

\hline 

1000 & $0$ & $0.01$ & $963400$ & $-2188.06$ & $4.1983$ & $156.3$& $-2161.58$ & $4.1383$ & $5.3$ \\ 

& & $0.10$ & $231424$ & $-2335.09$ & $4.3387$ & $122.4$& $-2277.90$ & $4.2045$ & $16.8$ \\ 

& & $1.00$ & $47528$ & $-2401.69$ & $4.6304$ & $329.6$& $-2349.74$ & $4.3449$ & $42.6$ \\ 

& & $10.0$ & $18156$ & $-3127.94$ & $5.8521$ & $244.1$& $-2388.22$ & $4.4466$ & $79.0$ \\ 

\hline 

1500 & $0$ & $0.01$ & $2181060$ & $-3585.21$ & $4.5878$ & $364.1$& $-3545.43$ & $4.5260$ & $12.3$ \\ 

& & $0.10$ & $551150$ & $-3806.07$ & $4.7234$ & $288.2$ & $-3717.25$ & $4.6059$ & $41.3$ \\ 

& & $1.00$ & $102512$ & $-3883.94$ & $4.9709$ & $912.8$ & $-3826.26$ & $4.7537$ & $93.5$ \\ 

& & $10.0$ & $31526$ & $-4821.26$ & $6.0886$ & $848.7$ & $-3898.50$ & $4.8824$ & $185.4$ \\ 

\hline 

2000 & $0$ & $0.01$ & $3892592$ & $-5075.44$ & $4.8867$ & $734.1$& $-5021.95$ & $4.8222$ & $23.8$ \\ 

& & $0.10$ & $1027584$ & $-5367.86$ & $5.0183$ & $590.6$& $-5246.45$ & $4.9138$ & $76.1$ \\ 

& & $1.00$ & $122394$ & $-5456.64$ & $5.2330$ & $1705.8$& $-5422.48$ & $5.1168$ & $197.8$ \\ 

& & $10.0$ & $25298$ & $-6531.08$ & $6.2571$ & $1803.4$& $-5636.74$ & $5.3492$ & $417.1$ \\ 

\hline 
\end{tabular}
\end{footnotesize}
\\
\end{table}

Our second experiment is similar to the one conducted in \cite{DaBaGh08,Lu10}. We intend to compare 
sparse recoverability of our PD method with PPA. To this aim, we specialize $p = 30$ and 
$(\bsigmat)^{-1} \in S^p_{++}$ to be the matrix with diagonal entries around one and a few randomly 
chosen, nonzero off-diagonal entries equal to $+1$ or $-1$. And the sample covariance matrix $\bsigma$ 
is then similarly generated as above. In addition, we set 
$\Omega = \{(i,j): (\bsigmat)^{-1}_{ij} = 0, \ |i-j| \geq 15\}$ and $\rho_{ij} = \rho_{\bomega}$ 
for all $(i,j) \in \bomega$, where $\rho_{\bomega}$ is the smallest number such that the approximate 
solution obtained by PPA shares the same number of nonzero off-diagonal entries as $(\bsigmat)^{-1}$. 
For problem \eqnok{Prob2}, we choose $r=\sum_{(i,j) \in \bomega} \|(\bsigmat)^{-1}_{ij}\|_0$ (i.e., 
the number of nonzero off-diagonal entries of $(\bsigmat)^{-1}$). PPA and the PD method are then 
applied to solve \eqnok{l1-cov} and \eqnok{Prob2} with the aforementioned $\rho_{ij}$ and $r$, 
respectively. In Figure \ref{fig:cov}, we plot the sparsity patterns of the original inverse 
covariance matrix $(\bsigmat)^{-1}$, the noisy inverse sample covariance matrix $\bsigma^{-1}$, and the approximate 
solutions to \eqnok{l1-cov} and \eqnok{Prob2} obtained by PPA and our PD method, respectively. We first observe 
that the sparsity of both solutions is the same as $(\bsigmat)^{-1}$. Moreover, the solution of 
our PD method completely recovers the sparsity patterns of $(\bsigmat)^{-1}$, but the solution of 
PPA misrecovers a few patterns. In addition, we present the log-likelihood and the normalized 
entropy loss of these solutions in Table \ref{recovery}. One can see that the solution of 
our PD method achieves much larger log-likelihood and smaller normalized entropy loss.    
   
\begin{figure}[t!]
\centering
\subfigure[\scriptsize True inverse $(\bsigmat)^{-1}$]{
\includegraphics[scale=0.3]{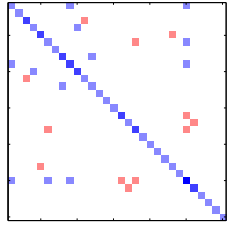}
\label{fig:cov-1}
}
\subfigure[\scriptsize Noisy inverse $\bsigma^{-1}$]{
\includegraphics[scale=0.3]{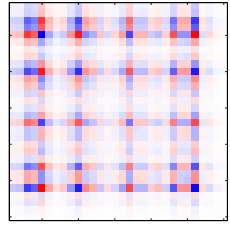}
\label{fig:cov-2}
}\\
\centering
\subfigure[\scriptsize Approximate solution of \eqnok{l1-cov}]{
\includegraphics[scale=0.3]{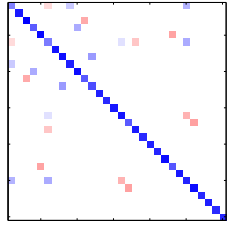}
\label{fig:cov-4}
}
\subfigure[\scriptsize Approximate solution of \eqnok{Prob2}]{
\includegraphics[scale=0.3]{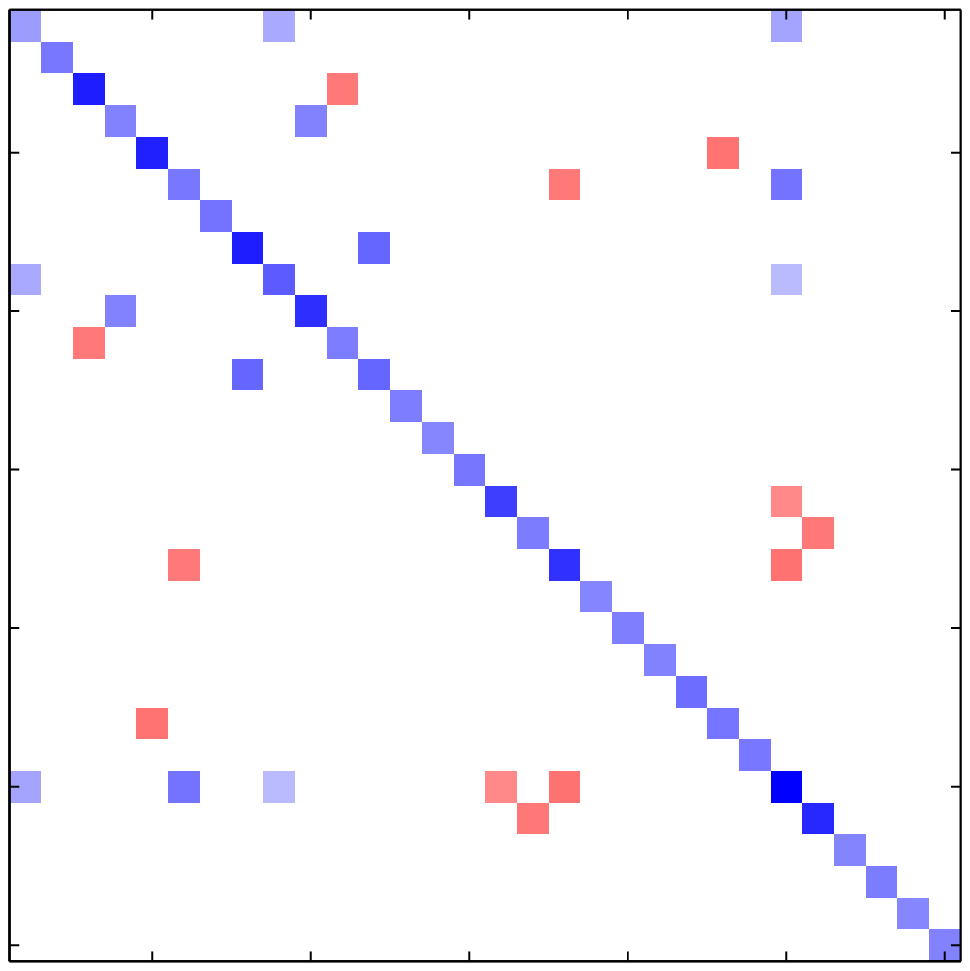}
\label{fig:cov-5}
}\\
\caption {Sparse recovery.}
\label{fig:cov}
\end{figure}

\begin{table}[t!]
\caption{\small  Numerical results for sparse recovery} 
\centering
\label{recovery}
\begin{footnotesize}
\begin{tabular}{|l|c|c|c|}
\hline 
\multicolumn{1}{|c|}{ }  & \multicolumn{1}{c|}{nnz} &
\multicolumn{1}{c|}{Likelihood} & \multicolumn{1}{c|}{Loss} 
\\
\hline
PPA  & $24$  & $-35.45$ & $0.178$  \\ 
PD   & $24$  & $-29.56$ & $0.008$\\
\hline
\end{tabular}
\end{footnotesize}
\\
\end{table}

In the third experiment, we aim to compare the performance of our PD method with the PPA on two gene expression data 
sets that have been widely used in the literature (see, for example, \cite{GoSlTaHu99,
PiHuDrHo04,YeBuRa05,Do09,LiTo10}). We first pre-process the data by the same procedure as described in \cite{LiTo10} to 
obtain a sample covariance matrix $\bsigma$, and set $\Omega = \emptyset$ and $\rho_{ij} = \rho_{\bomega}$ 
for some $\rho_{\bomega}>0$.  We apply PPA to solve problem \eqnok{l1-cov} with $\rho_{\bomega} = 0.01$, 
$0.05$, $0.1$, $0.5$, $0.7$ and $0.9$, respectively. For each $\rho_{\bomega}$, we choose $r$ to be the number of 
nonzero off-diagonal entries of the solution of PPA, which implies that the solution of the PD method when applied 
to \eqnok{Prob2} is at least as sparse as that of PPA. As the true covariance matrix $\bsigmat$ is unknown 
for these data sets, we now modify the normalized entropy loss defined above by replacing $\bsigmat$ by $\bsigma$.
The results of PPA and our PD method on these two data sets are presented in Table \ref{rdata-cov}. In detail, 
the name and dimension of each data set are given in the first three columns. The values of $\rho_{\bomega}$ and $r$ 
are listed in the fourth and fifth columns. The log-likelihood, the normalized entropy loss and the CPU time (in seconds) 
of PPA and the PD method are given in the last six columns, respectively. We can observe that our PD method is 
generally faster than PPA. Moreover, our PD method outperforms PPA in terms of log-likelihood and normalized 
entropy loss.

As a summary, the above experiments show that the quality of the approximate 
solution of \eqnok{Prob2} obtained by our PD method is generally better than 
that of \eqnok{l1-cov} found by PPA when the same sparsity is considered. 

\begin{table}[t!]
\caption{\small  Computational results on two real data sets} 
\centering
\label{rdata-cov}
\begin{footnotesize}
\begin{tabular}{|c|c|c|c|c|c c c|c c c |}
\hline 
\multicolumn{1}{|c|}{Data} & \multicolumn{1}{c|}{Genes } & 
\multicolumn{1}{c|}{Samples } & \multicolumn{1}{c|}{} &
\multicolumn{1}{c|}{} & \multicolumn{3}{c|}{PPA} &
\multicolumn{3}{c|}{PD}
\\

\multicolumn{1}{|c|}{} & \multicolumn{1}{c|}{$p$} & 
\multicolumn{1}{c|}{$n$} & \multicolumn{1}{c|}{$\rho_{\bomega}$} &
\multicolumn{1}{c|}{$r$} & \multicolumn{1}{c}{Likelihood} & \multicolumn{1}{c}{Loss} & \multicolumn{1}{c|}{Time} &
\multicolumn{1}{c}{Likelihood} & \multicolumn{1}{c}{Loss} & \multicolumn{1}{c|}{Time}
\\
\hline
 Lymph & $587$ & $148$ & $0.01$ & $144294$ & $790.12$ & $23.24$ & $101.5$ & $1035.24$ & $22.79$ & $38.0$ \\ 

& & &$0.05$ & $67474$ & $174.86$ & $24.35$ & $85.2$ & $716.97$ & $23.27$ & $31.5$ \\ 

& & &$0.10$ & $38504$ & $-47.03$ & $24.73$ & $66.7$ & $389.65$ & $23.85$ & $26.1$ \\ 

& & &$0.50$ & $4440$ & $-561.38$ & $25.52$ & $33.2$ & $-260.32$ & $24.91$ & $24.8$ \\ 

& & &$0.70$ & $940$ & $-642.05$ & $25.63$ & $26.9$ & $-511.70$ & $25.30$ & $22.0$ \\ 

& & &$0.90$ & $146$ & $-684.59$ & $25.70$ & $22.0$ & $-598.05$ & $25.51$ & $14.9$ \\ 

\hline 

%(Dobra, \cite{})
 Leukemia & $1255$ & $72$ & $0.01$ & $249216$ & $3229.75$ & $28.25$ & $705.7$ & $3555.38$ & $28.12$ & $177.1$ \\ 

& & &$0.05$ & $169144$ & $1308.38$ & $29.85$ & $491.1$ & $2996.95$ & $28.45$ & $189.2$ \\ 

& & &$0.10$ & $107180$ & $505.02$ & $30.53$ & $501.4$ & $2531.62$ & $28.82$ & $202.8$ \\ 

& & &$0.50$ & $37914$ & $-931.59$ & $31.65$ & $345.9$ & $797.23$ & $30.16$ & $256.6$ \\ 

& & &$0.70$ & $4764$ & $-1367.22$ & $31.84$ & $125.7$ & $-1012.48$ & $31.48$ & $271.6$ \\ 

& & &$0.90$ & $24$ & $-1465.70$ & $31.90$ & $110.6$ & $-1301.99$ & $31.68$ & $187.8$ \\ 

\hline 
%(Yeung et al., \cite{})

\end{tabular}
\end{footnotesize}
\\
\end{table}

\subsection{Compressed sensing} 
\label{sensing}

In this subsection, we apply the PD methods proposed in Section \ref{method} to solve the compressed sensing (CS) problem, which 
has important applications in signal processing (see, for example, \cite{ClMu73,TaBaMc79,LeFu81,SaSy86,ChDoSa98,Mi02,Tr06}). 

When the observation is noise free, the CS problem can be formulated as 
\beq \label{Prob3-1}
\min\limits_{x \in \Re^p} \{\|x\|_0: \ Ax=b\}, 
\eeq
where $A \in \Re^{n \times p}$ is a data matrix and $b \in \Re^n$ is an observation vector. One popular approach for 
finding an approximate solution to \eqref{Prob3-1} is to solve the following $l_1$ regularization problem: 
\beq \label{l1-cs}
\min\limits_{x \in \Re^p} \{\|x\|_1: \ Ax=b\}, 
\eeq
(see, for example, \cite{VaFr08,ChDoSa98}). Our aim below is to apply the PD method studied in Subsection 
\ref{sec:PD-o} to solve problem \eqref{Prob3-1} directly.

Clearly, problem \eqnok{Prob3-1} is in the form of \eqnok{l0-J2} and thus the PD method proposed in Subsection 
\ref{sec:PD-o} can be suitably applied to solve \eqnok{Prob3-1}. Also, one can observe that 
the main computation effort of the PD method when applied to \eqref{Prob3-1} lies in solving the 
subproblem arising in step 1a), which is in the form of 
\beq
\min_x \{\|x-c\|^2: Ax=b \} 
\label{Prob3-2} \\
\eeq
for some $c \in \Re^p$. It is well known that problem \eqnok{Prob3-2} has a closed-form solution given by
\[
x^* = c - A^T  (AA^T)^{-1}(Ac-b).
\]  
% For the special case that $AA^T= I$, the solution can be simply obtained by  $x^* = c - A^T (Ac-b)$.  
% When $AA^T\neq I$, since usually $p>n$ we observe that by applying economy-size QR decomposition on $A^T$ in
% advance, one can rewrite the problem \eqnok{Prob3-2} as
% \beq
% \min_x \{\|x-c\|^2: Q^Tx=(R^T)^{-1}b \}, 
% \label{Prob3-3} \\
% \eeq
% where $QR = A^T $. By reformulating the problem \eqnok{Prob3-2} as \eqnok{Prob3-3}, the complexity of finding the 
% solution of  \eqnok{Prob3-2} at each iteration is reduced from $O(n^3)$ to $O(n^2)$.  
We now address the initialization and the termination criteria for the PD method. In particular, 
we choose $y^0_0 $ to be a feasible point of \eqnok{Prob3-1} with at most $n$ nonzero entries which 
can be obtained by executing the Matlab command $A\setminus b$. Also, we set the initial penalty 
parameter $\vrho_0=0.1$ and the parameter $\sigma = 10$. In addition, we use \eqnok{inner-term} and 
\[
\frac{\|x^k-y^k\|_\infty}{\max\{|p_{\vrho_k}(x^k,y^k)|,1\}} \le \eps_O
\]
as the inner and outer termination criteria, and set the associated accuracy parameters $\eps_I=10^{-5}$ 
and $\eps_O=10^{-6}$, respectively.  

We next conduct experiments to test the performance of our PD method for solving problem \eqnok{Prob3-1} 
on random data. We also compare the quality of the approximate solutions of \eqnok{Prob3-1} obtained by 
our PD method with that of \eqnok{l1-cs} found by a first-order solver SPGL1 \cite{VaFr08}. For the latter 
method, we use the default values for all parameters.

In the first experiment, given an integer $r \in [1,p]$, we randomly generate $100$ instances according to 
the standard Gaussian distribution. Each one consists of a sparse signal $u$ with cardinality $r$ 
and a data matrix $A \in \Re^{n \times p}$. Then we generate the corresponding observation vector 
$b$ by letting $b=Au$. In particular, we choose $n=1024$, $p=4096$. The values of $r$ range 
from $30$ to $300$ (see Table \ref{PD1}). We now try to recover $u$ by applying the PD method and SPGL1 to solve 
\eqref{Prob3-1} and \eqref{l1-cs}, respectively. To evaluate the solution quality of these methods, 
we adopt a similar criterion as described in \cite{ReFaPa07,CaRe08}. Given an approximate recovery $x^*$ for $u$, 
we define the mean squared error as
\[
\mbox{MSE} :=  \|x^*-u\|/p.
\]
We say $u$ is {\it successfully recovered} by $x^*$ if the cardinality of $x^*$ is the same as $u$ and 
moreover the corresponding MSE is less than $10^{-4}$. The computational results of both methods are 
presented in Table \ref{PD1}. In detail, the values of $r$ are given in the first column. The number 
of successfully recovered signals (NS) and the CPU time for both methods are reported in columns two 
to five, respectively. We observe that the recoverability of two methods is similar for the instances 
with relatively small $r$, but the PD method outperforms SPGL1 when $r$ becomes larger. We also see 
that the speed of both methods is comparable.

The second experiment is similar to the first one except that $A$ is randomly generated with orthonormal 
rows. The computational results of both methods are presented in Table \ref{PD2}.  We also observe that 
the PD method outperforms SPGL1 in terms of recoverability. 

\begin{table}[t!]
\caption{\small  Computational results for $A$ with non-orthonormal rows} 
\centering
\label{PD1}
\begin{footnotesize}
\begin{tabular}{|c ||c c |c c |}
\hline 
\multicolumn{1}{|c||}{} & \multicolumn{2}{c|}{SPGL1} &  \multicolumn{2}{c|}{PD} 
\\

\multicolumn{1}{|c||}{Cardinality} & 
\multicolumn{1}{c}{NS}  &  \multicolumn{1}{c|}{Time} & 
\multicolumn{1}{c}{NS}  & \multicolumn{1}{c|}{Time} 
\\
\hline
30  &    100  &   0.6 & 100   & 2.4 \\ 
60  &    100  &   1.0 & 100   & 2.8 \\ 
90  &    100  &   1.4 & 100  & 3.2 \\ 
120  &    100  & 2.0 & 100   & 3.5 \\ 
150  &    94 &   3.0 & 98   & 3.9 \\ 
180  &    93  &  4.8 & 97   & 4.7 \\ 
210  &    81  &  10.4 & 86  & 6.1 \\ 
240  &    22  & 23.6 & 68   & 12.4 \\
270  &    0  &   25.8 & 18   & 20.0 \\ 
300  &    0  &   28.2  &   0 &  22.6\\ 

\hline 

\end{tabular}
\end{footnotesize}
\\
\end{table}

\begin{table}[t!]
\caption{\small  Computational results for $A$ with orthonormal rows} 
\centering
\label{PD2}
\begin{footnotesize}
\begin{tabular}{|c ||c c |c c |}
\hline 
\multicolumn{1}{|c||}{} & \multicolumn{2}{c|}{SPGL1} &  \multicolumn{2}{c|}{PD} 
\\

\multicolumn{1}{|c||}{Cardinality} & 
\multicolumn{1}{c}{NS}  &  \multicolumn{1}{c|}{Time} & 
\multicolumn{1}{c}{NS}  & \multicolumn{1}{c|}{Time} 
\\
\hline
30  &    100  &   0.4 & 100   & 1.8 \\ 
60  &    100  &   0.6 & 100   & 2.1 \\ 
90  &    100  &   0.7 & 100  & 2.4 \\ 
120  &  100  & 1.0 & 100   & 2.8 \\ 
150  &    92 &  1.4 & 95   & 3.2 \\ 
180  &    91  &  2.1 & 95   & 4.0 \\ 
210  &    73 &  3.9 & 92  & 5.5 \\ 
240  &    29 & 9.0 & 61  & 12.1 \\
270  &    1  &   10.9  & 11   & 18.9 \\ 
300  &    0  &   11.3  &   1 &  19.1\\ 
\hline 

\end{tabular}
\end{footnotesize}
\end{table}

In the remainder of this subsection we consider the CS problem with noisy observation. In this 
case, the CS problem can be formulated as
\beq \label{Prob3}
\min\limits_{x \in \Re^p} \left\{\frac{1}{2} \|Ax-b\|^2: \ \|x\|_0 \leq r\right\}, 
\eeq
where $A \in \Re^{n \times p}$ is a data matrix, $b \in \Re^n$ is an observation vector, and 
$r \in [1,p]$ is some integer for controlling the sparsity of the solution. One popular approach 
for finding an approximate solution to \eqref{Prob3} is to solve the following $l_1$ regularization problem:
\beq
\min\limits_{x \in \Re^p} \frac{1}{2}\|Ax-b\|^2 + \lambda \|x\|_1,  \label{l1-cs1} 
\eeq
where $\lambda \geq 0$ is a regularization parameter (see, for example, \cite{FiNoWr07,HaYiZh07,KiKoLuBo07}). 
Our goal below is to apply the PD method studied in Subsection \ref{sec:PD-c} to solve \eqnok{Prob3} 
directly.

Clearly, problem \eqnok{Prob3} is in the form of \eqnok{l0-J1} and thus the PD method proposed in Subsection 
\ref{method} can be suitably applied to solve \eqnok{Prob3}. The main computation effort of the 
PD method when applied to \eqnok{Prob3} lies in solving the subproblem arising in step 1a), which is an 
unconstrained quadratic programming problem that can be solved by the conjugate gradient method. We now 
address the initialization and the termination criteria for the PD method. In particular, we randomly 
choose an initial point $y^0_0 \in \Re^p$ such that $\|y^0_0\|_0 \le r$. Also, we set the initial penalty 
parameter $\vrho_0=1$ and the parameter $\sigma = \sqrt{10}$. In addition, we use 
\[
\frac{|q_{\vrho_k}(x^k_{l+1},y^k_{l+1})- q_{\vrho_k}(x^k_{l},y^k_{l})|}{\max\{|q_{\vrho_k}(x^k_{l},y^k_{l})|,1\}}
\le \eps_I
\] 
and
\[
\frac{\|x^k-y^k\|_\infty}{\max\{|q_{\vrho_k}(x^k,y^k)|,1\}} \le \eps_O
\]
as the inner and outer termination criteria for the PD method, and set their associated accuracy parameters 
$\eps_I=10^{-2}$ and $\eps_O=10^{-3}$.

We next conduct numerical experiments to test the performance of our PD method for solving problem 
\eqnok{Prob3} on random data. We also compare the quality of the approximate solutions of \eqnok{Prob3} 
obtained by our PD method and the iterative hard-thresholding algorithm (IHT) \cite{BlDa08,BlDa09} with that 
of \eqnok{l1-cs1} found by a first-order solver GPSR \cite{FiNoWr07}. For IHT, we set $stopTol =10^{-6}$ and 
use the default values for all other parameters. And for GPSR, all the parameters are set as their default values.

We first randomly generate a data matrix $A \in \Re^{n \times p}$ and an observation vector $b\in \Re^n$ 
according to a standard Gaussian distribution.  Then we apply GPSR to problem \eqnok{l1-cs1} with a set 
of $p$ distinct $\lambda$'s so that the cardinality of the resulting approximate solution gradually 
increases from $1$ to $p$. Accordingly, we apply our PD method and IHT to problem \eqnok{Prob3} with 
$r = 1,\ldots,p$. It shall be mentioned that a warm-start strategy is applied to all three methods. That is, an 
approximate solution of problem \eqnok{Prob3} (resp., \eqnok{l1-cs}) for current $r$ (resp., $\lambda$) is used 
as the initial point for the PD method and IHT (resp., GPSR) when applied to the problem for next $r$ (resp., $\lambda$).  The 
average computational results of both methods over $100$ random instances with $(n,p) = (1024,4096)$ 
are plotted in Figure \ref{fig:PDCS1}. In detail, we plot the average residual $\|Ax-b\|$ against 
the cardinality in the left graph and the average accumulated CPU time \footnote{For a cardinality $r$, the 
corresponding accumulated CPU time is the total CPU time used to compute approximate solutions of problem 
\eqnok{Prob3} or \eqnok{l1-cs} with cardinality from $1$ to $r$.} (in seconds) against the cardinality in 
the right graph. We observe that the residuals of the approximate solutions of \eqnok{l1-cs1} obtained 
by our PD method and IHT are almost equal and substantially smaller than that of \eqnok{Prob3} found by GPSR 
when the same sparsity is considered. In addition, we can see that GPSR is faster than the other two methods.

We also conduct a similar experiment as above except that $A$ is randomly generated with orthonormal rows. 
The results are plotted in Figure \ref{fig:PDCS2}.  We observe that the PD method and IHT are generally slower 
than GPSR, but they have better solution quality than GPSR in terms of residuals.  

%%%%%%
\begin{figure}[t!]
\centering
\subfigure[Residual vs. Cardinality]{
\includegraphics[scale=0.3]{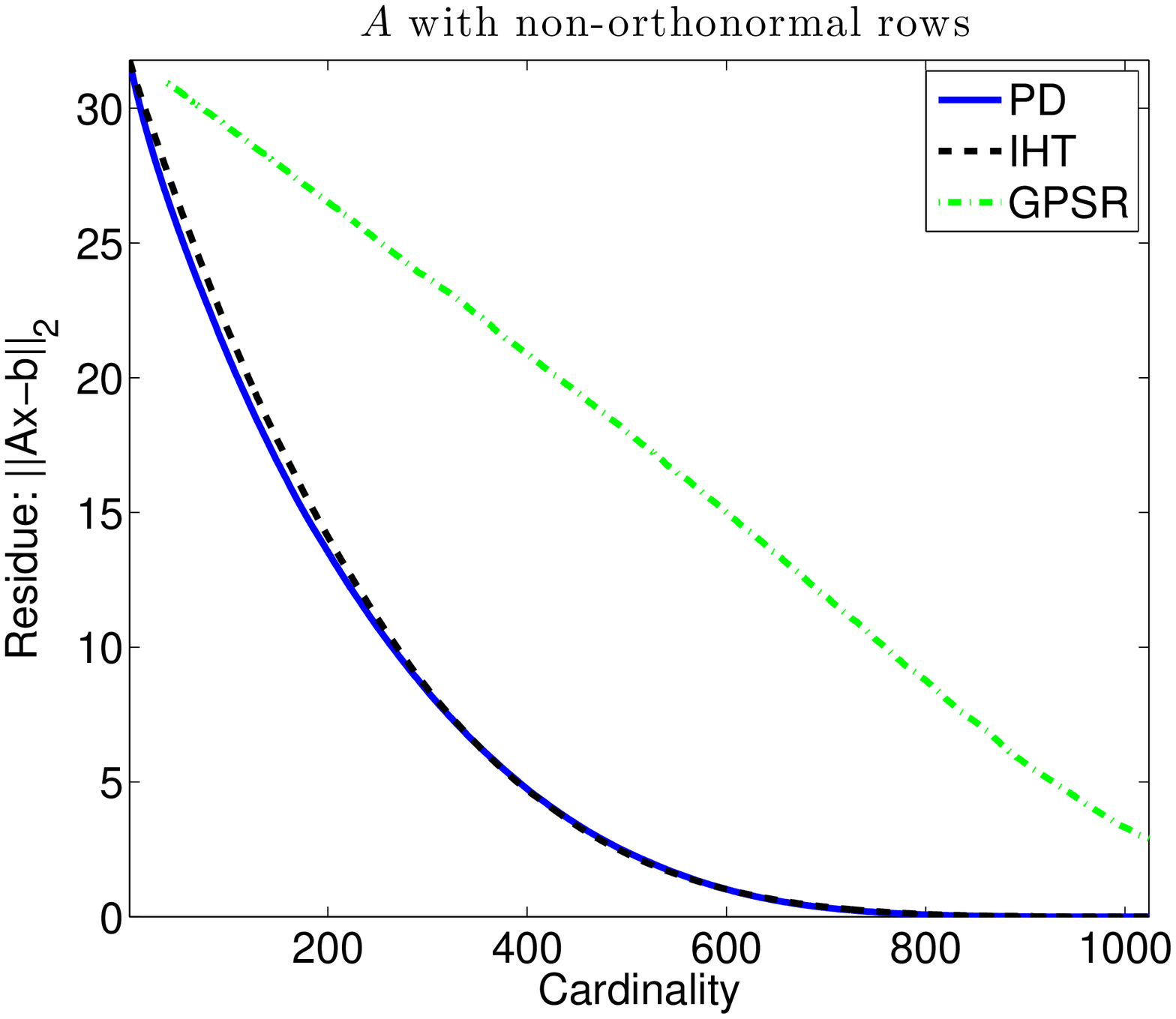}
\label{fig:PDCS1-1}
}
\subfigure[Time vs. Cardinality]{
\includegraphics[scale=0.3]{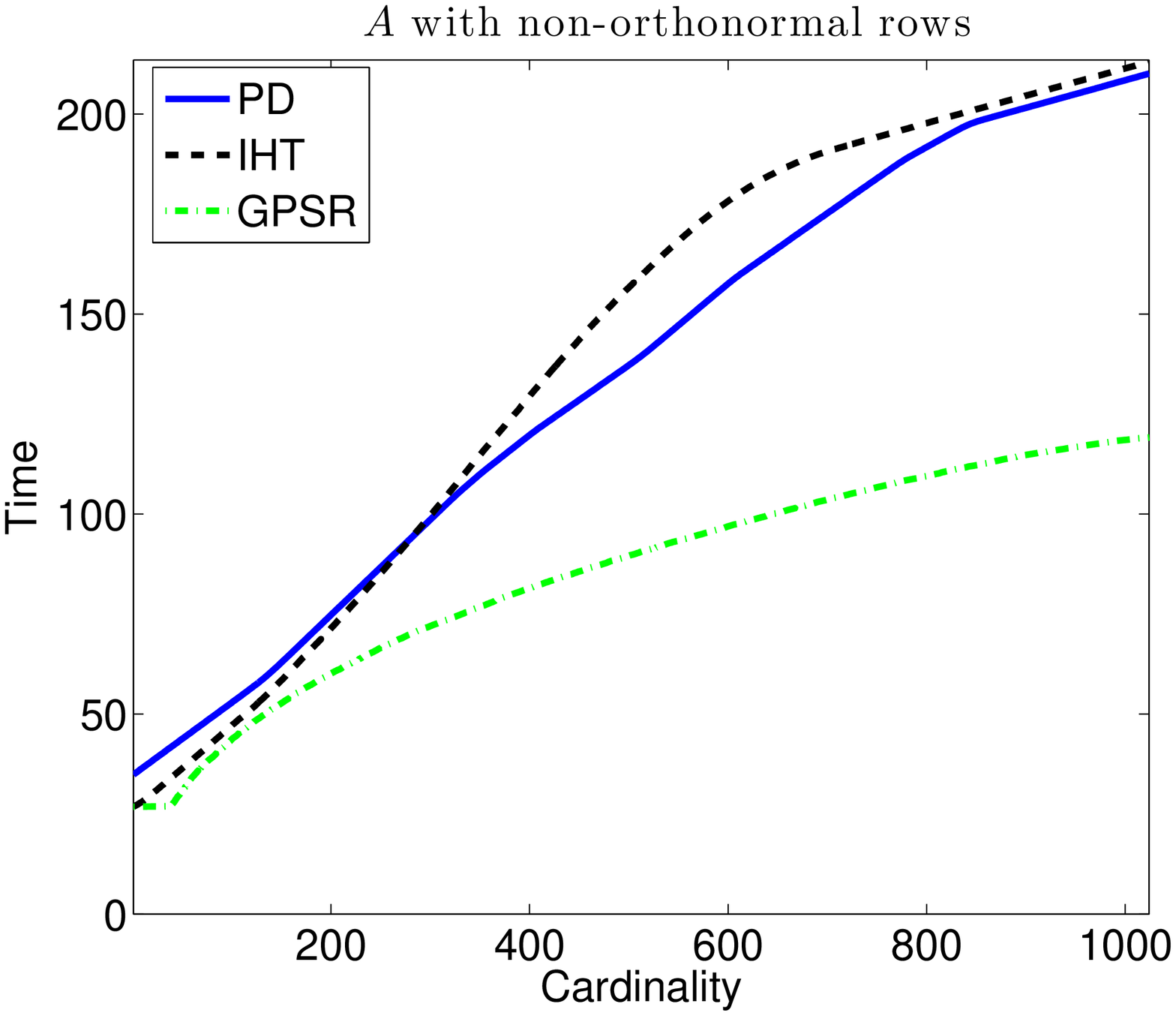}
\label{fig:PDCS1-2}
}\\
\caption {Trade-off curves.}
\label{fig:PDCS1}
\end{figure}

\begin{figure}[t!]
\centering
\subfigure[Residual vs. Cardinality]{
\includegraphics[scale=0.3]{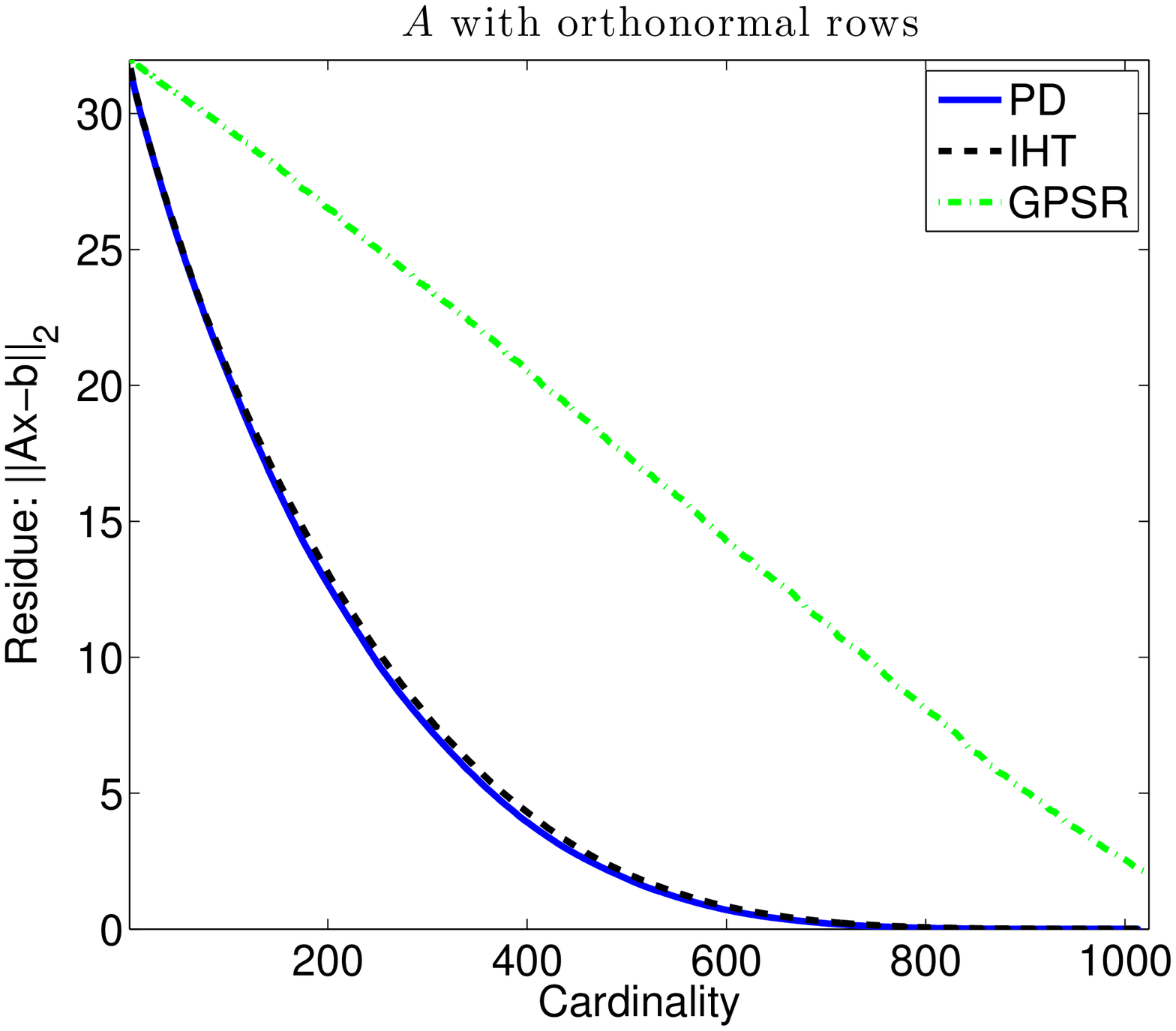}
\label{fig:PDCS2-1}
}
\subfigure[Time vs. Cardinality]{
\includegraphics[scale=0.3]{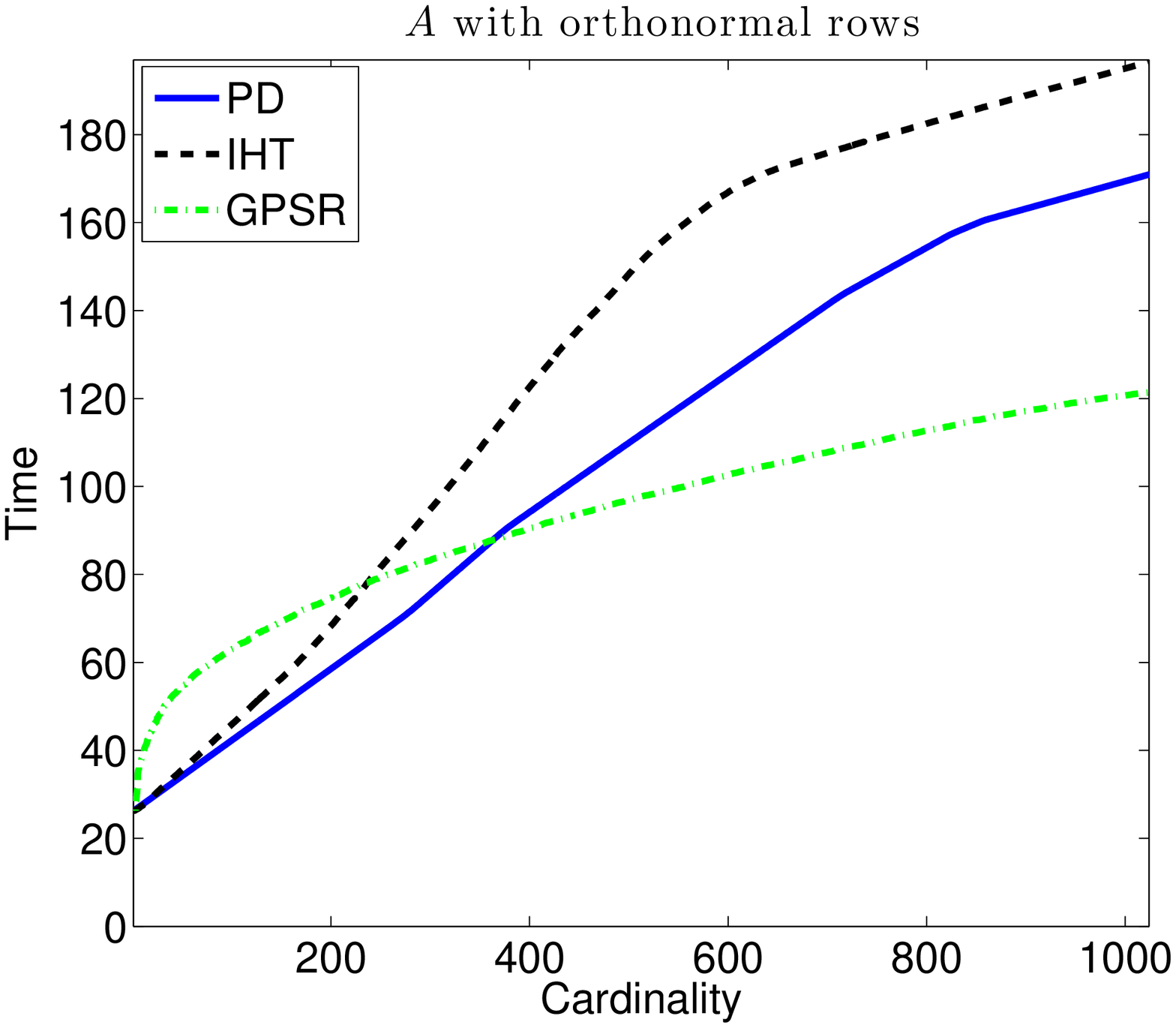}
\label{fig:PDCS2-2}
}\\
\caption {Trade-off curves.}
\label{fig:PDCS2}
\end{figure}

\section{Concluding remarks}
\label{conclude}

In this paper we propose penalty decomposition methods for general $l_0$ minimization problems 
in which each subproblem is solved by a block coordinate descend method. Under some suitable
assumptions, we establish that any accumulation point of the sequence generated by the PD methods 
satisfies the first-order optimality conditions of the problems. Furthermore, for the problems in 
which the $l_0$ part is the only nonconvex part, we show that such an accumulation point is a 
local minimizer of the problems. The computational results on compressed sensing, sparse logistic 
regression and sparse inverse covariance selection problems demonstrate that our methods generally 
outperform the existing methods in terms of solution quality and/or speed.

% We remark that the methods proposed in this paper can be straightforwardly extended to solve more 
% general $l_0$ minimization problems:
% \[
% \ba{rl}
% \min\limits_x  & f(x) \\
% \mbox{s.t.} & g_i(x) \le 0, \ i = 1, \ldots, p, \\
% & h_i(x) = 0, \ i = 1, \ldots, q, \\
% & \|x_J\|_0 \le r, \ x \in \cX,
% \ea
% \quad\quad\quad\quad
% \ba{rl}
% \min\limits_x  & f(x) + \nu \|x_J\|_0 \\
% \mbox{s.t.} & g_i(x) \le 0, \ i = 1, \ldots, p, \\
% & h_i(x) = 0, \ i = 1, \ldots, q, \\
% & x \in \cX
% \ea
% \]
% for some $r$, $\nu \ge 0$, where $J \subseteq \{1,\ldots,n\}$ is an index set, $\cX$ is a closed 
% convex set, $f: \Re^{n} \to \Re$, $g_i: \Re^{n}  \to \Re$, $i=1, \ldots, p$, and $h_i: \Re^{n} \to \Re$, 
% $i=1, \ldots, q$, are continuously differentiable functions. 
We shall remark that the augmented Lagrangian decomposition methods can be developed for 
solving $l_0$ minimization problems \eqref{l0-J1} and \eqref{l0-J2} simply by replacing the 
quadratic penalty functions in the PD methods by augmented Lagrangian functions. Nevertheless, 
as observed in our experiments, their practical performance is generally worse than the PD methods. 
    
\section*{Appendix}
In this appendix we provide an example to demonstrate that the $l_p$-norm relaxation approaches for 
$p \in (0,1]$ may fail to recover the sparse solution.
       
Let $p\in (0,1]$ be arbitrarily chosen. Given any $b^1$, $b^2 \in \Re^n$, let $b = b^1 + b^2$, 
$\alpha=\|(b^1;b^2)\|_p$ and $A = [b^1, \ b^2, \ \alpha I_n, \ \alpha I_n]$, where $I_n$ denotes 
the $n \times n$ identity matrix and $\|x\|_p = (\sum^n_{i=1} |x_i|^p)^{1/p}$ for all $x\in\Re^n$. 
Consider the linear system $Ax=b$. It is easy to observe that this system has the sparse solution 
$x^s=(1,1,0,\ldots,0)^T$. However, $x^s$ cannot be recovered by solving the $l_p$-``norm'' regularization 
problem:
\[
f^* = \min\limits_x\left\{f(x) := \frac12\|Ax-b\|^2 + \nu \|x\|_p\right\}
\] 
for any $\nu >0$. Indeed, let $\bar x = (0,0,b^1/\alpha,b^2/\alpha)^T$. Then, we have $f(x^s) = 2^{1/p}\nu$ and 
$f(\bar x)=\nu$, which implies that $f(x^s) > f(\bar x) \ge  f^*$. Thus, $x^s$ cannot be an optimal solution of the 
above problem for any $\nu >0$. Moreover, the relative error between $f(x^s)$ and $f^*$ is fairly large since
\[
(f(x^s) - f^*)/f^*  \ \ge \ (f(x^s) -f(\bar x))/f(\bar x) \ = \ 2^{1/p} - 1 \ \ge \ 1.   
\] 
Therefore, the true sparse solution $x^s$ may not even be a ``good'' approximate solution to the $l_p$-``norm'' 
regularization problem.

\end{document}